\ifdefined\pdfsuppressptexinfo\pdfsuppressptexinfo=-1\fi
\documentclass{article} %
\usepackage{iclr2027_conference,times}
\iclrfinalcopy
\usepackage{amsmath,amsfonts,bm}

\def\eqref#1{equation~\ref{#1}}
\def\1{\bm{1}}

\DeclareMathAlphabet{\mathsfit}{\encodingdefault}{\sfdefault}{m}{sl}
\SetMathAlphabet{\mathsfit}{bold}{\encodingdefault}{\sfdefault}{bx}{n}

\usepackage{amsmath,amssymb,amsthm}
\usepackage{graphicx}
\usepackage{booktabs}
\usepackage{placeins}
\usepackage{multirow}
\usepackage{makecell}
\usepackage{subcaption}
\usepackage{xcolor}
\usepackage{xspace}
\usepackage{etoolbox}
\AtBeginEnvironment{abstract}{\exhyphenpenalty=10000\relax}
\usepackage{tikz}
\usetikzlibrary{positioning,arrows.meta,fit,calc} 
\usepackage{colortbl}
\newcolumntype{B}{>{\columncolor{bestbg}}c}
\newcolumntype{S}{>{\columncolor{secondbg}}c}
\newcolumntype{T}{>{\columncolor{thirdbg}}c}
\makeatletter
\g@addto@macro\normalsize{%
  \setlength{\abovedisplayskip}{4pt plus 1pt minus 1pt}%
  \setlength{\belowdisplayskip}{4pt plus 1pt minus 1pt}%
  \setlength{\abovedisplayshortskip}{2pt}%
  \setlength{\belowdisplayshortskip}{3pt}}
\makeatother
\definecolor{lightgray}{cmyk}{0,0,0,.05}
\newlength\GreyboxOuterVspace
\newlength\GreyboxPadding
\newlength\GreyboxRule
\newcommand{\GreyboxFrameColor}{black}
\newcommand{\greyboxsetup}[2]{%
  \setlength\GreyboxOuterVspace{#1}%
  \setlength\GreyboxPadding{#2}%
}
\newcommand{\greyboxframe}[2]{%
  \setlength\GreyboxRule{#1}%
  \renewcommand{\GreyboxFrameColor}{#2}%
}
\newcommand{\greybox}[1]{%
  \par\addvspace{\GreyboxOuterVspace}%
  \noindent\begingroup
  \setlength{\fboxsep}{\GreyboxPadding}%
  \setlength{\fboxrule}{\GreyboxRule}%
  \fcolorbox{\GreyboxFrameColor}{lightgray}{%
    \parbox{\dimexpr\linewidth-2\fboxsep-2\fboxrule\relax}{#1}%
  }
  \endgroup%
  \par\addvspace{\GreyboxOuterVspace}%
}
\greyboxsetup{2pt}{3pt}
\greyboxframe{0.5pt}{black!40}

\definecolor{chestc}{HTML}{D2795A}
\definecolor{muscc}{HTML}{5B8FB9}
\definecolor{genc}{HTML}{5FB3A1}
\definecolor{costc}{HTML}{C08FB0}
\definecolor{meanc}{HTML}{7A7A7A}
\definecolor{bestbg}{HTML}{FBE7DB}
\definecolor{secondbg}{HTML}{E9F1F8}
\definecolor{thirdbg}{HTML}{F4F8FA}
\newlength{\rbw}

\DeclareRobustCommand{\rankkey}[2]{%
  \begin{tikzpicture}[baseline=-0.5ex]
    \node[inner sep=1.3pt,minimum height=2.4mm,fill=#1,rounded corners=0.3pt,
          font=\scriptsize]{#2};
  \end{tikzpicture}}

\usepackage{hyperref}
\usepackage{url}
\usepackage{listings}

\lstdefinestyle{pythonstyle}{
    language=Python,
    basicstyle=\ttfamily\scriptsize,
    keywordstyle=\color{blue},
    commentstyle=\color{gray},
    stringstyle=\color{teal},
    showstringspaces=false,
    breaklines=true,
    breakatwhitespace=true,
    frame=single,
    rulecolor=\color{black!30},
    numbers=left,
    numberstyle=\tiny\color{gray},
    tabsize=4,
    captionpos=b
}

\newtheorem{proposition}{Proposition}
\newtheorem{corollary}{Corollary}

\newcommand{\name}{QuPID\xspace}
\newcommand{\BfPara}[1]{{\noindent\bf#1.}\xspace}

\title{\name: Quantum Parameter-Efficient \\ Input-Dependent Retrieval Adaptation \\ for Medical RAG}

\author{Hyojun Ahn$^{1}$, Emily Jimin Roh$^{1}$, Soohyun Park$^{2}$, \\
\textbf{Walid Saad$^{3}$, Hyung-Chul Lee$^{4,5}$, Joongheon Kim$^{1,5}$\thanks{Corresponding author: \texttt{joongheon@korea.ac.kr}}} \\
{\normalfont $^{1}$Korea University \quad $^{2}$Sookmyung Women's University \quad $^{3}$Virginia Tech} \\
{\normalfont $^{4}$Seoul National University \quad $^{5}$Seoul National University Hospital}}

\begin{document}

\maketitle
\lhead{Preprint}

\begin{abstract}
Fidelity-based quantum retrieval ranks candidates by the fidelity between query and archive states. Applying a shared input-independent unitary after fixed state encoding leaves that fidelity unchanged, so training the circuit cannot alter the ranking. Quantum parameter-efficient input-dependent retrieval adaptation (QuPID) repairs this by making the circuit input-dependent through data re-uploading and by comparing measurement readouts, vectors of local Pauli expectations, rather than states. The result is a small readout for adapting frozen image features to a local archive with limited data: training simulates the circuit classically, and inference runs on a GPU with fixed learned parameters. We characterize the class as a structured factorization of input-modulated quadratic feature maps, bound the frequency support of its re-uploading channel, and give a parameter-count generalization bound that motivates its small budget. Under a shared frozen backbone and a label-free protocol, QuPID's 60 parameters give higher precision-at-5 (P@5) on ChestX-ray14 and MURA than frozen medical encoders, and than adapters and low-rank adaptation (LoRA) with up to 5.25 million trainable parameters. On ChestX-ray14, the P@5 gain over the frozen encoder is $+0.116$, the lead over retuned adapters is widest at 512 adaptation examples ($+0.040$), and the full-budget margin over an equally compact classical rotation-plane head is $+0.023$ with a 95\% interval excluding zero. Medical imaging is the primary testbed; the pattern recurs on two non-medical benchmarks, in report generation, and under simulated gate noise and finite-shot readout.
\end{abstract}

\section{Introduction}
\label{sec:intro}

Can a 60-parameter readout adapt a frozen retriever as well as adapters with millions of parameters?
Medical case retrieval makes the question concrete.
A pathology occupies a small part of the image, and once the encoder pools it, shared anatomy dominates, so a `Normal' study and an `Early Pneumonia' study can be nearly collinear in a frozen feature space~\citep{10.1145/3534678.3539322,kienitz2022effect}.
Retrieval-augmented generation (RAG) built on that geometry~\citep{10Jiang,3755760,10.1145/3637528.3671470} provides the generator with diagnostically inconsistent evidence, resulting in hallucination~\citep{huang2025survey,tivnan2024hallucination}.
Data protection rules can keep images inside the hospital~\citep{yang2025cloud}, while the local archive may offer few adaptation examples.
Our aim is a model that adapts retrieval geometry with few trainable parameters while retaining competitive ranking quality.
Medical imaging is our primary setting; industrial inspection and fine-grained recognition provide additional tests of adaptation at a small parameter budget (Section~\ref{sec:generality}).

We use a variational quantum circuit to share trainable parameters across measurement features~\citep{benedetti2019parameterized,cikm25roh}.
Here \emph{quantum} denotes a circuit-defined hypothesis class and not a computational-advantage claim, in the sense argued for by \citet{schuld2022advantage}: every circuit in this paper is simulated classically.
For a diagnostic control motivated by state-based representations~\citep{lisnichenko2023quantum,10.1145/3471158.3472253}, we amplitude-encode query and database features, apply one shared trainable circuit $U(\boldsymbol{\theta})$, and rank by fidelity $|\langle\psi_q|\psi_p\rangle|^{2}$.
The shared-unitary identity rules out adaptation here.
Unitarity cancels the circuit exactly, so the score equals the squared inner product of the frozen features for every $\boldsymbol{\theta}$ and the contrastive gradient is identically zero (Proposition~\ref{prop:degeneracy}).
Minibatch variation does not imply a circuit-dependent ranking.

The proof also identifies how to break this invariance, since the score depends on the two states only through an overlap that any shared input-independent isometry preserves.
Either the map after encoding depends on the input, or the comparison passes through a non-unitary readout, and \name (quantum parameter-efficient input-dependent retrieval adaptation) does both: \emph{data re-uploading} layers~\citep{perez2020data} make the transformation input-dependent, and similarity compares \emph{measurement readouts}, vectors of local Pauli expectations, rather than states.

This construction gives a trainable similarity function; its practical value depends on the geometry produced by the shared parameterization.
We claim neither dimensional expansion, since at $n_q=\log_2 d$ qubits the state space has exactly the input dimension, nor hardness of simulation, since a shallow circuit with local observables is the regime this literature expects to be classically simulable.
Our contribution is a structured retrieval feature map: $60$ shared parameters jointly control $M=40$ measurement outputs through norm-preserving gates, local Pauli observables and input-dependent re-uploading. The analysis characterizes the resulting quadratic forms and a controlled re-uploading spectrum (Proposition~\ref{prop:spectrum}).
Proposition~\ref{prop:generalization} gives a generic parameter-count bound for bounded smooth classes on a compact parameter domain.
We compare frozen medical encoders, scaled adapters, low-rank adaptation (LoRA) and parameter-matched controls on ChestX-ray14 and MURA with multi-seed statistics.
With 60 parameters, \name attains higher precision-at-5 (P@5) than adapters and LoRA with up to 5.25M parameters, and its lead over retuned adapters is widest with 512 adaptation examples; an equally compact classical rotation-plane head narrows the gap to a full-budget margin whose paired interval excludes zero.
Non-medical retrieval, report generation, simulated noise and finite-shot readout assess its scope, alongside cost and metric-specific losses.

\section{Related Work}
\label{sec:related}

\BfPara{Retrieval-augmented generation in medicine}
RAG reduces hallucination in medical large vision-language models (LVLMs) by conditioning generation on context retrieved with pre-trained vision encoders~\citep{wu-etal-2025-medical,10.1145/3711896.3737432,Bain_2021_ICCV,Liu_2021_ICCV}, but flat Euclidean embeddings entangle clinically distinct, visually similar cases~\citep{kienitz2022effect,wang2025large}.
Domain-specialized encoders such as BiomedCLIP~\citep{zhang2023biomedclip} and MedCLIP~\citep{wang2022medclip} are trained centrally and do not address data-local adaptation; we use them as frozen references and as a \name backbone.

\BfPara{Parameter-efficient adaptation}
Adapters and low-rank updates such as LoRA~\citep{hu2022lora} are standard parameter-efficient alternatives. We compare them at scaled budgets alongside classical heads near $60$ parameters.
Quantum circuits have entered parameter-efficient adaptation on the weight side, where Quantum-PEFT builds a full-rank update from a Pauli-structured unitary~\citep{koikeakino2025quantumpeft}, quantum parameter adaptation generates LoRA weights from a circuit during training~\citep{liu2025qpa} and QuanTA factorizes a high-rank update into circuit-shaped tensors~\citep{chen2024quanta}, and on the activation side, where quantum-amplitude embedded adaptation compresses attention activations into amplitudes and transforms them with a circuit~\citep{cikm25roh}.
None of these methods places the circuit in a retriever's similarity function, so none faces the degeneracy of Section~\ref{sec:degeneracy}.
In \name the data are amplitude-encoded and re-uploaded, and the circuit defines a retrieval feature map sharing parameters across its measurement outputs.
Like the weight-side methods, it uses no quantum hardware (Appendix~\ref{app:qml_trends}).
Random features~\citep{rahimi2007random} approximate kernels and supply a compact classical nonlinear control.

\BfPara{Quantum machine learning for representation}
Amplitude embedding encodes $d$-dimensional vectors in $\log_2 d$ qubits~\citep{schuld2019quantum,gonzalez2024efficient}, parameterized circuits define trainable transformations on the encoded states~\citep{benedetti2019parameterized,haug2021capacity}, and data re-uploading interleaves encoding and trainable layers, which controls the Fourier spectrum accessible to circuit outputs~\citep{perez2020data,schuld2021effect}.
Reading out classical features through measurements connects our design to projected quantum kernels~\citep{huang2021power} and to shadow-style models that learn from local observables~\citep{li2021vsql}.
\name builds on measurement-based learning but targets label-free cosine retrieval: its shared local readout is trained through a contrastive objective, and Section~\ref{sec:degeneracy} shows why state overlap cannot adapt under an input-independent shared unitary; medical retrieval applications \mbox{remain} sparse~\citep{WEI202342}.
This literature also supplies the standard by which our claims should be judged: variational models can be approximated by classical surrogates that sample their Fourier spectrum~\mbox{\citep{landman2023classically}}, tuned classical baselines outperform quantum models on most benchmarks studied so far~\citep{bowles2024better}, and shallow local circuits are, where understood, classically simulable~\citep{cerezo2025provable}.
We adopt all three as constraints: no claim rests on hard simulation, every comparison shares the backbone, and parameter-matched, budget-retuned and wall-clock-matched controls are reported alongside.

\section{Method}
\label{sec:method}

\begin{figure}[t]
\centering
\includegraphics[width=\linewidth]{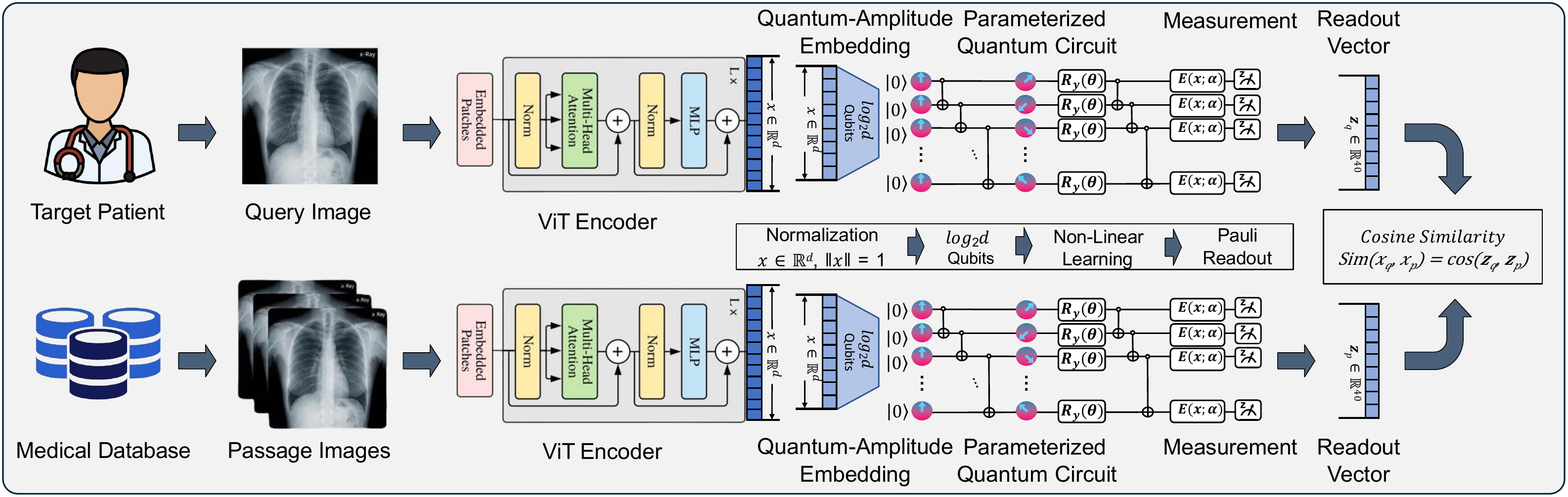}
\caption{\name overview. A frozen vision transformer (ViT) encodes all images; each feature is amplitude-encoded into $\log_2 d$ qubits and processed by a shared circuit (trainable rotations, CNOT ring, re-uploading gates $E(\mathbf{x};\boldsymbol{\alpha})$) with all $60$ trainable parameters. Retrieval compares Pauli readouts $\mathbf{z}\in\mathbb{R}^{40}$ by cosine similarity, not quantum states (Section~\ref{sec:degeneracy}); archive readouts are cached.}
\label{fig:overview}
\vspace{-3mm}
\end{figure}

\subsection{Setup and a degeneracy pitfall in quantum retrieval}
\label{sec:degeneracy}

Let $\mathcal{D}=\{(\mathbf{x}_i,r_i)\}_{i=1}^{N}$ be a hospital archive of images with diagnostic reports, let a frozen ViT produce features $\mathbf{h}=\mathrm{ViT}(\mathbf{x})\in\mathbb{R}^{d}$, $d=1024$, and let conventional RAG rank by cosine similarity to the query feature.
We seek a lightweight transformation of this geometry, learned from unlabeled local data, under which pathologically consistent pairs receive higher scores than pairs that merely share anatomy, which Figure~\ref{fig:overview} summarizes end to end.

Amplitude embedding maps the normalized feature $\hat{\mathbf{h}}$ to an $n_q$-qubit state $|\psi_{\mathrm{in}}(\mathbf{x})\rangle=\sum_{k=0}^{d-1}\hat{h}_k|k\rangle$ with $n_q=\log_2 d$ (Figure~\ref{fig:amplitude}; background in Appendix~\ref{app:qml_background}, implementation in Appendix~\ref{app:implementation}).
The natural variational design applies a trainable circuit $U(\boldsymbol{\theta})$ to both query and database states and ranks by fidelity:
\begin{equation}
\label{eq:naive}
\mathrm{Sim}_{\mathrm{fid}}(\mathbf{x}_q,\mathbf{x}_p;\boldsymbol{\theta})
=\big|\langle\psi_{\mathrm{in}}(\mathbf{x}_q)|U(\boldsymbol{\theta})^{\dagger}U(\boldsymbol{\theta})|\psi_{\mathrm{in}}(\mathbf{x}_p)\rangle\big|^{2}.
\end{equation}

\greybox{%
\begin{proposition}[Degeneracy of shared-unitary fidelity retrieval]
\label{prop:degeneracy}
For any input-independent unitary $U(\boldsymbol{\theta})$, \eqref{eq:naive} satisfies $\mathrm{Sim}_{\mathrm{fid}}(\mathbf{x}_q,\mathbf{x}_p;\boldsymbol{\theta})=|\langle\psi_{\mathrm{in}}(\mathbf{x}_q)|\psi_{\mathrm{in}}(\mathbf{x}_p)\rangle|^{2}$ for all $\boldsymbol{\theta}$. Thus \eqref{eq:naive} induces squared-cosine ranking on the amplitude-encoded frozen features.
Any loss whose only dependence on $\boldsymbol{\theta}$ is through these similarities is constant in $\boldsymbol{\theta}$ and has identically zero gradient.
\end{proposition}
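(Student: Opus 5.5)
The proof is a direct algebraic cancellation, so the plan is to verify three claims in sequence: the identity, the ranking statement, and the gradient statement.

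\textbf{The identity.} Write $|\phi_q\rangle=U(\boldsymbol{\theta})|\psi_{\mathrm{in}}(\mathbf{x}_q)\rangle$ and $|\phi_p\rangle=U(\boldsymbol{\theta})|\psi_{\mathrm{in}}(\mathbf{x}_p)\rangle$. Since $U(\boldsymbol{\theta})$ does not depend on the input, the same operator acts on both states. Unitarity gives $U(\boldsymbol{\theta})^{\dagger}U(\boldsymbol{\theta})=I$ for every $\boldsymbol{\theta}$. Substituting this into \eqref{eq:naive} yields $\langle\phi_q|\phi_p\rangle=\langle\psi_{\mathrm{in}}(\mathbf{x}_q)|\psi_{\mathrm{in}}(\mathbf{x}_p)\rangle$, and the identity follows on taking the squared modulus. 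The same step goes through for any isometry $V$ with $V^{\dagger}V=I$, which is the generalization the introduction alludes to.

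\textbf{The ranking statement.} I would expand the amplitude-encoded states in the computational basis. The basis is orthonormal and the amplitudes $\hat h_k$ are real, so
\[
\langle\psi_{\mathrm{in}}(\mathbf{x}_q)|\psi_{\mathrm{in}}(\mathbf{x}_p)\rangle=\sum_k \hat h^{(q)}_k\hat h^{(p)}_k=\cos\angle(\mathbf{h}_q,\mathbf{h}_p).
\]
The similarity is therefore the squared cosine of the frozen features, with no dependence on $\boldsymbol{\theta}$. For a fixed query, ranking candidates by this score is the squared-cosine ranking, and that ranking is also independent of $\boldsymbol{\theta}$.

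\textbf{The gradient statement.} Let the loss be $\mathcal{L}(\boldsymbol{\theta})=F\big(\{\mathrm{Sim}_{\mathrm{fid}}(\mathbf{x}_i,\mathbf{x}_j;\boldsymbol{\theta})\}_{(i,j)}\big)$ for some function $F$ of the similarities over a fixed batch or dataset. Each argument is constant in $\boldsymbol{\theta}$, so $\mathcal{L}$ is constant. By the chain rule, each $\partial\,\mathrm{Sim}/\partial\boldsymbol{\theta}$ vanishes wherever $U$ is differentiable in $\boldsymbol{\theta}$, so $\nabla_{\boldsymbol{\theta}}\mathcal{L}\equiv 0$. Constancy also holds without differentiability.

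\textbf{The main subtlety} is stating the hypothesis precisely rather than any computation. ``Only dependence on $\boldsymbol{\theta}$ through these similarities'' must exclude any auxiliary $\boldsymbol{\theta}$-dependent terms such as regularizers. Under a fixed data sample the loss is constant. Under random minibatches, each realized batch loss is constant in $\boldsymbol{\theta}$, which is exactly the paper's remark that minibatch variation does not produce a circuit-dependent ranking.
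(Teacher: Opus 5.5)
Your proposal is correct and matches the paper's proof step for step. The paper likewise cancels $U(\boldsymbol{\theta})^{\dagger}U(\boldsymbol{\theta})=I$, identifies the amplitude-encoded overlap with $\hat{\mathbf{h}}_q^{\top}\hat{\mathbf{h}}_p$, concludes that a loss depending on $\boldsymbol{\theta}$ only through these similarities is constant with identically zero gradient, and notes the same extension to input-independent isometries.
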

}

The proof is immediate from $U^{\dagger}U=I$ (Appendix~\ref{app:proofs}, verified numerically).
The result extends to every similarity that depends on the states only through their overlap, including the Fubini--Study distance, the trace distance and monotone functions of fidelity (Corollary~\ref{cor:overlap}).
Changing the metric therefore cannot resolve the degeneracy.
Fidelity yields squared-cosine rather than cosine ranking in general; the two orders coincide in our evaluated rank ranges because the query-gallery cosines there are nonnegative (Appendix~\ref{app:proofs}).
This is a diagnostic identity, not a new expressivity theorem or evidence that an established retriever has this defect; Appendix~\ref{app:degeneracy_numeric} tests the deliberately constructed control.
Proposition~\ref{prop:degeneracy} also identifies the required modification: the circuit must depend on the input beyond the initial encoding, or the similarity must pass through a non-unitary map such as measurement. \name does both.

\subsection{Amplitude encoding with data re-uploading}
\label{sec:encoding}

\name processes the encoded state with $L$ blocks, each consisting of a trainable rotation layer, an entangling layer, and a data re-uploading layer:
\begin{equation}
\label{eq:circuit}
|\psi(\mathbf{x};\boldsymbol{\theta},\boldsymbol{\alpha})\rangle
=\prod\nolimits_{l=1}^{L}\Big[E\big(\mathbf{x};\boldsymbol{\alpha}^{(l)}\big)\,U_{\mathrm{ent}}\,U_{\mathrm{rot}}\big(\boldsymbol{\theta}^{(l)}\big)\Big]\,|\psi_{\mathrm{in}}(\mathbf{x})\rangle,
\end{equation}
where $U_{\mathrm{rot}}(\boldsymbol{\theta}^{(l)})=\bigotimes_{j=1}^{n_q}R_Y(\theta_j^{(l)})$ applies one trainable $R_Y$ rotation per qubit, $U_{\mathrm{ent}}$ is a ring of CNOT gates introducing correlations between qubits, and the re-uploading layer $E(\mathbf{x};\boldsymbol{\alpha}^{(l)})=\bigotimes_{j=1}^{n_q}R_Y\big(\alpha_j^{(l)}s_j(\mathbf{x})\big)$ re-injects the input through fixed block-pooled projections of the feature vector, scaled by trainable coefficients $\alpha_j^{(l)}$.
The fixed projections carry a per-qubit dyadic factor, $s_j(\mathbf{x})=2^{\,j-1}\bar{s}_j(\mathbf{x})$ with $\bar{s}_j$ the pooled block mean, so that the encoding matches the premise of the spectral analysis in Section~\ref{sec:readout}.
All gate matrices are real, so the state remains real-valued.
With $n_q$ qubits and $L$ blocks the trainable budget is $2n_qL$; the default ($n_q=10$, $L=3$) has $60$ parameters.
The measurement readout of Section~\ref{sec:readout} alone escapes the degeneracy of Proposition~\ref{prop:degeneracy}; re-uploading additionally enriches the function class (Proposition~\ref{prop:spectrum}); Section~\ref{sec:ablation} separates the two.

\subsection{Measurement readout and similarity}
\label{sec:readout}

Rather than comparing states, \name compares measurement statistics.
We define the readout map
\begin{equation}
\label{eq:readout}
\mathbf{z}(\mathbf{x};\boldsymbol{\theta},\boldsymbol{\alpha})
=\Big(\langle\psi|Z_j|\psi\rangle,\;\langle\psi|X_j|\psi\rangle,\;\langle\psi|Z_jZ_{j+1}|\psi\rangle,\;\langle\psi|X_jX_{j+1}|\psi\rangle\Big)_{j=1}^{n_q}\in\mathbb{R}^{M},
\end{equation}
with indices taken cyclically, giving $M=4n_q=40$ components from single-qubit and nearest-neighbor Pauli observables, and writing $|\psi\rangle\equiv|\psi(\mathbf{x};\boldsymbol{\theta},\boldsymbol{\alpha})\rangle$ for brevity.
Retrieval similarity is the cosine between readouts, $\mathrm{Sim}_{\mathrm{qt}}(\mathbf{x}_q,\mathbf{x}_p)=\mathbf{z}(\mathbf{x}_q)^{\top}\mathbf{z}(\mathbf{x}_p)/(\|\mathbf{z}(\mathbf{x}_q)\|\,\|\mathbf{z}(\mathbf{x}_p)\|)$.
Each component of $\mathbf{z}$ has the form $\langle\psi_{\mathrm{in}}|V^{\dagger}P V|\psi_{\mathrm{in}}\rangle$ for the input-dependent circuit $V$ and a Pauli observable $P$; we suppress the dependence of $\mathrm{Sim}_{\mathrm{qt}}$ on $(\boldsymbol{\theta},\boldsymbol{\alpha})$.
Without re-uploading each component is exactly a quadratic form $\hat{\mathbf{h}}^{\top}A_m(\boldsymbol{\theta})\hat{\mathbf{h}}$; with re-uploading, $A_m(\boldsymbol{\theta},\boldsymbol{\alpha},\mathbf{x})$ is additionally modulated by trigonometric functions of linear projections of the input, so the readout consists of \emph{input-modulated} quadratic forms.
The $M$ readouts share one circuit, and each realized $A_m$ is orthogonally similar to a Pauli observable, hence full-rank with $\pm1$ spectrum.
An unrestricted rank-$r$ factorization uses $O(dr)$ parameters per feature and is full-rank only at $r=d$. Here $60$ parameters jointly specify $M$ structured forms; full rank alone does not establish an expressivity advantage.
The design shares a small parameter vector across correlated, input-modulated quadratic features.
The circuit preserves the state norm, while re-uploading and measurement let the similarity change: this enables training of the design.
Proposition~\ref{prop:generalization} uses bounded outputs and parameter derivatives.

Re-uploading also controls the readout's functional richness.
We specialize the encoding-dependent Fourier analysis of \citet{schuld2021effect} to a clamped-state scalar probe of our re-uploading channel~\citep{perez2020data}; Appendix~\ref{app:proofs} gives the derivation.

\greybox{%
\begin{proposition}[Spectral expressivity of the re-uploading channel]
\label{prop:spectrum}
Clamp the amplitude-encoded state at a reference input and let a scalar probe $t$ drive the re-uploaded angles $s_j=2^{\,j-1}t$.
For fixed $\boldsymbol{\alpha}$ every readout component of \eqref{eq:readout} is a finite trigonometric polynomial in $t$ whose frequencies are signed sums of the rates $\{\alpha_j^{(l)}2^{\,j-1}\}_{j,l}$.
At $\alpha_j^{(l)}=1$ those frequencies are integers and every $\omega$ in the support satisfies $|\omega|\leq L(2^{n_q}-1)$.
\end{proposition}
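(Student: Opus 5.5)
We follow the standard Fourier argument of \citet{schuld2021effect} and treat each re-uploaded gate as the only place where $t$ enters. With the amplitude-encoded state clamped at $|\psi_{\mathrm{in}}(\mathbf{x}_0)\rangle$, the circuit of \eqref{eq:circuit} becomes a product of $t$-independent operators ($U_{\mathrm{rot}}(\boldsymbol{\theta}^{(l)})$, $U_{\mathrm{ent}}$) interleaved with the layers $E(t;\boldsymbol{\alpha}^{(l)})=\bigotimes_j R_Y(\alpha_j^{(l)}2^{\,j-1}t)$. The first step is to diagonalize each single-qubit gate. Write $R_Y(\phi)=e^{-i\phi Y/2}=W\,\mathrm{diag}(e^{-i\phi/2},e^{i\phi/2})\,W^{\dagger}$ with $W$ the fixed eigenbasis change of $Y$. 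Absorbing every $W$, $W^{\dagger}$ into the neighbouring constant operators, each $E$ becomes diagonal in a product basis. Its entries are $\exp\!\big(-\tfrac{i}{2}t\sum_j \epsilon_j\,\alpha_j^{(l)}2^{\,j-1}\big)$ with $\epsilon_j\in\{\pm1\}$.

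The second step expands the readout $\langle\psi|P|\psi\rangle$ for each Pauli $P$ in \eqref{eq:readout}. Insert these diagonal forms in both the ket (all $L$ layers) and the bra (all $L$ layers, conjugated). This gives a finite sum over multi-indices $(\boldsymbol{\epsilon}^{(1)},\dots,\boldsymbol{\epsilon}^{(L)},\boldsymbol{\epsilon}'^{(1)},\dots,\boldsymbol{\epsilon}'^{(L)})$. Each term is a $t$-independent coefficient times
\[
\exp\!\Big(\tfrac{i}{2}t\sum_{l,j}(\epsilon'^{(l)}_j-\epsilon^{(l)}_j)\,\alpha_j^{(l)}2^{\,j-1}\Big).
\]
Since $(\epsilon'-\epsilon)/2\in\{-1,0,1\}$, every frequency is a signed sum (coefficients in $\{-1,0,1\}$) of the rates $\alpha_j^{(l)}2^{\,j-1}$, which is the first claim. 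The observable is Hermitian, so conjugate frequencies pair up and the function is a real trigonometric polynomial. Finiteness follows because there are only $2^{2n_qL}$ terms.

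The third step sets $\alpha_j^{(l)}=1$. Each frequency is then $\sum_l\sum_j c_j^{(l)}2^{\,j-1}$ with $c\in\{-1,0,1\}$, hence an integer. For fixed $l$, its absolute value is at most $\sum_{j=1}^{n_q}2^{\,j-1}=2^{n_q}-1$, and summing over the $L$ blocks gives $|\omega|\le L(2^{n_q}-1)$.

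The main care point is the bookkeeping in the second step. We must verify that $U_{\mathrm{ent}}$ (real CNOT ring), $U_{\mathrm{rot}}$, the clamped input state and the $W$ factors are genuinely $t$-independent, so that they only affect the coefficients. We must also confirm that the half-angle convention of $R_Y$ yields the rate $\alpha_j^{(l)}2^{\,j-1}$ rather than half of it. This holds because the frequency is a difference of two half-eigenvalues, $\pm\tfrac12\mp(\mp\tfrac12)$. The result is a support bound, not a claim that every frequency is attained, so no statement about coefficient nonvanishing is needed.
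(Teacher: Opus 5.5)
Your proposal is correct and follows essentially the same route as the paper. You diagonalize each re-uploading rotation in the fixed eigenbasis of $Y$, so that after expanding ket and bra each gate contributes a frequency in $\{-\alpha_j^{(l)}2^{j-1},0,\alpha_j^{(l)}2^{j-1}\}$, while the constant layers and the clamped state affect only coefficients; integrality and the triangle inequality at $\boldsymbol{\alpha}=1$ then give $|\omega|\le L(2^{n_q}-1)$, exactly as in the paper's sumset argument (the paper's additional local-readout refinement is not needed for the stated bound).
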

}

At $\boldsymbol{\alpha}=1$ this conservative bandwidth bound grows exponentially in $n_q$; local readout gives a tighter bound, and boundary attainment is not asserted (Appendix~\ref{app:proofs}).
Three qualifications delimit the scope of this result.
The statement concerns a controlled probe of the re-uploading channel, not the end-to-end map, since the amplitude-encoded state varies with the same input.
Reaching a frequency is not controlling it: only $2n_qL$ coefficients are independently adjustable, and classical product-form features also reach exponential support.
No lower bound against classical surrogates follows, so Section~\ref{sec:experiments} tests parameter-matched controls rather than inferring superiority from the spectrum.

\subsection{Label-free contrastive adaptation}
\label{sec:training}

Site adaptation must proceed without expert labels, so we adopt SimCLR-style contrastive learning~\citep{chen2020simple}: two pathology-preserving augmentations of an image form a positive pair, in-batch samples act as negatives, and for views $i$ with positives $i^{+}$,
\begin{equation}
\label{eq:loss}
\mathcal{L}(\boldsymbol{\theta},\boldsymbol{\alpha})
=-\frac{1}{2B}\sum_{i=1}^{2B}\log
\frac{\exp\big(\mathrm{Sim}_{\mathrm{qt}}(\mathbf{x}_i,\mathbf{x}_{i^{+}})/\tau\big)}
{\sum_{j\neq i}\exp\big(\mathrm{Sim}_{\mathrm{qt}}(\mathbf{x}_i,\mathbf{x}_j)/\tau\big)},
\end{equation}
with temperature $\tau=0.07$ and conservative, lesion-preserving augmentations (Appendix~\ref{app:implementation}).
Every gate generator in \eqref{eq:circuit} has two eigenvalues, so gate-angle derivatives obey exact parameter shift; derivatives of re-uploading scales include $s_j(\mathbf{x})$, with the classical chain rule through cosine and loss (Appendix~\ref{app:implementation}).
In-batch negatives can include false negatives when two patients share a pathology; Appendix~\ref{app:implementation} quantifies the collision rate.

\greybox{%
\begin{proposition}[Parameter-count generalization bound]
\label{prop:generalization}
Fix $R,c>0$, let $p=2n_qL$, and let $\Theta_{R,c}$ collect the $(\boldsymbol{\theta},\boldsymbol{\alpha})\in[-R,R]^{p}$ for which $\|\mathbf{z}(\mathbf{x})\|\geq c$ almost surely.
On $\Theta_{R,c}$ the loss \eqref{eq:loss} is bounded and $L_{\mathrm{lip}}$-Lipschitz in the parameter $\infty$-norm, with $L_{\mathrm{lip}}$ polynomial in $p$, $M$, $1/c$, $1/\tau$, and $\sup_{j,\mathbf{x}}|s_j(\mathbf{x})|$.
With probability $1-\delta$ over $n$ adaptation samples, every empirical risk minimizer over this class has generalization gap $O\big(\sqrt{(\,p\log(1+nRL_{\mathrm{lip}})+\log(1/\delta)\,)/n}\big)$.
The leading rate is $\sqrt{p/n}$, with $M$ and the input scales entering only logarithmically, and the same conclusion holds for any $p$-parameter class of the same boundedness and smoothness.
\end{proposition}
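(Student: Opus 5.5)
The plan is a standard covering-number argument over the $p$-dimensional parameter box, in three steps: boundedness, a Lipschitz estimate in parameters, and a union bound over an $\epsilon$-net.

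\textbf{Boundedness.} Each readout component is an expectation of a Pauli observable, so $|z_m|\le 1$ and $\|\mathbf{z}\|\le\sqrt{M}$. Cosine similarity lies in $[-1,1]$, so each log-softmax term in \eqref{eq:loss} lies in $[-2/\tau-\log(2B-1),\,2/\tau+\log(2B-1)]$. Hence the per-batch loss is bounded by a constant $b$ depending only on $\tau$ and $B$. The restriction to $\Theta_{R,c}$ is not needed for boundedness; it is needed so that the cosine is differentiable with controlled derivative.

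\textbf{Lipschitz constant.} This is the main technical step, done by the chain rule in three layers.
\begin{enumerate}
\item \emph{Circuit layer.} Each gate is $R_Y(\phi)=\exp(-i\phi Y/2)$, with $\|\partial_\phi R_Y\|_{\mathrm{op}}=1/2$. A rotation angle enters as $\theta_j^{(l)}$, with derivative factor $1$. A re-uploading angle enters as $\alpha_j^{(l)}s_j(\mathbf{x})$, with factor $|s_j(\mathbf{x})|\le S:=\sup_{j,\mathbf{x}}|s_j(\mathbf{x})|$. Because all other gates are unitary, a telescoping (parameter-shift) estimate gives $|\partial z_m/\partial\vartheta_k|\le \max(1,S)$ for each parameter $\vartheta_k$. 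Summing over $k$, each $z_m$ is $p\max(1,S)$-Lipschitz in $\|\cdot\|_\infty$, and $\mathbf{z}$ is $\sqrt{M}\,p\max(1,S)$-Lipschitz in Euclidean norm.
\item \emph{Cosine layer.} On $\{\|\mathbf{z}\|\ge c\}$, the normalization $\mathbf{z}\mapsto\mathbf{z}/\|\mathbf{z}\|$ is $2/c$-Lipschitz. The inner product of unit vectors is $1$-Lipschitz in each argument. So the cosine is $(4/c)\sqrt{M}\,p\max(1,S)$-Lipschitz.
\item \emph{Loss layer.} Log-sum-exp is $1$-Lipschitz in $\ell_\infty$, and the scores are divided by $\tau$. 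This gives
\[
L_{\mathrm{lip}}=O\big(p\sqrt{M}\max(1,S)/(c\tau)\big),
\]
which is polynomial in $p$, $M$, $1/c$, $1/\tau$ and $S$, as claimed.
\end{enumerate}
The subtle point is that $\Theta_{R,c}$ need not be convex, so I would avoid integrating along segments that leave it. Instead I would use the mean-value bound only for $\mathbf{z}$, which is smooth on all of $[-R,R]^p$. The cosine Lipschitz estimate then only needs both endpoints to satisfy $\|\mathbf{z}\|\ge c$: for two vectors of norm at least $c$, $\|\mathbf{u}/\|\mathbf{u}\|-\mathbf{v}/\|\mathbf{v}\|\|\le 2\|\mathbf{u}-\mathbf{v}\|/c$. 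So Lipschitzness holds between any two points of $\Theta_{R,c}$.

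\textbf{Covering and union bound.} Treat a training sample as a batch, or use a U-statistic decoupling; this affects only constants, and I would take $n$ to count i.i.d.\ loss terms.
\begin{enumerate}
\item An $\epsilon$-net of $[-R,R]^p$ in $\ell_\infty$ has at most $(1+2R/\epsilon)^p$ points. Intersecting with $\Theta_{R,c}$ and recentering (the $\epsilon$-to-$2\epsilon$ trick) gives an internal net of $\Theta_{R,c}$ of the same size at radius $2\epsilon$.
\item By Hoeffding's inequality with range $2b$ and a union bound over the net, with probability $1-\delta$ every net point has deviation at most $2b\sqrt{(p\log(1+2R/\epsilon)+\log(2/\delta))/(2n)}$.
\item By Lipschitzness, every point of $\Theta_{R,c}$ is within $4L_{\mathrm{lip}}\epsilon$ of a net point in both empirical and population risk.
\item Choose $\epsilon=1/(nL_{\mathrm{lip}})$. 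The discretization error is then $O(1/n)$, dominated by the square-root term, and the logarithm becomes $\log(1+2nRL_{\mathrm{lip}})$.
\end{enumerate}
This yields the uniform deviation $O\big(\sqrt{(p\log(1+nRL_{\mathrm{lip}})+\log(1/\delta))/n}\big)$. It bounds the gap of every hypothesis, in particular of every empirical risk minimizer.

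\textbf{Concluding remarks.} Since $L_{\mathrm{lip}}$ is polynomial in $M$ and $S$, these quantities enter only inside the logarithm, which gives the stated $\sqrt{p/n}$ leading rate. The argument used only boundedness, the Lipschitz constant and the dimension $p$, so it applies verbatim to any $p$-parameter class with the same boundedness and smoothness.
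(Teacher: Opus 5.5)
Your proposal is correct and follows the paper's proof essentially step for step. Both arguments use the same ingredients: Pauli boundedness, parameter-shift derivative bounds of $1$ and $S$ summed over the $p$ coordinates, the $2/c$ normalization inequality applied only at the endpoints (exactly how the paper handles the non-convexity of $\Theta_{R,c}$), a covering by representatives chosen inside $\Theta_{R,c}$, and the choice $\varepsilon=1/(nL_{\mathrm{lip}})$. The only difference is the concentration step: the paper averages the loss over all without-replacement batches and applies bounded differences (each example appears in a fraction $B/n$ of batches), whereas you use Hoeffding on i.i.d.\ batch terms or a U-statistic decoupling, and both give the $1/\sqrt{n}$ rate at fixed batch size.
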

}

The proof (Appendix~\ref{app:genproof}) is a covering argument enabled by the structure: Pauli expectations are bounded and parameter-shift bounds every coordinate derivative.
The rate is generic to any $p$-parameter class with the same smoothness, norm floor and compact domain; it bounds contrastive risk rather than retrieval accuracy; and the reported training does not impose that box constraint.
We therefore interpret it as motivating a capacity hypothesis, rather than providing a guarantee for the reported runs.
Section~\ref{sec:experiments} evaluates the retrieval quality attained by this parameterization at a small trainable budget. Larger adapters and equally compact classical heads provide distinct comparisons.

Neither proposition establishes trainability~\citep{gilfuster2025trainability}, and circuit gradients can vanish exponentially in depth and qubit count~\citep{mcclean2018barren}.
\name avoids that regime by design: $L\leq6$, $n_q$ fixed at 10 by the feature dimension, and local one- and two-body observables, a setting associated with milder shallow-circuit decay~\citep{cerezo2021cost,basheer2025trainability}.
Those guarantees do not cover entangled amplitude-encoded inputs, so Figure~\ref{fig:diagnostics} measures gradient variance from one to twelve layers (four times the default) as a diagnostic, not an asymptotic rate.

\BfPara{Deployment}
After classical circuit-simulation training, $\boldsymbol{\theta}$ and $\boldsymbol{\alpha}$ are fixed. GPU inference computes each query readout, compares it with cached archive readouts, and passes the top-$k$ reports to a frozen medical LVLM.
The query-dependent re-uploading gates are evaluated on the GPU using $1024$ amplitudes; no quantum processor is required. On-site execution reduces data movement, not formal privacy risk (Appendix~\ref{app:qml_code}).

\section{Experiments}
\label{sec:experiments}

\BfPara{Setup}\phantomsection\label{sec:setup}
Retrieval is evaluated on \emph{ChestX-ray14}~\citep{wang2017chestxray} and \emph{MURA}~\citep{rajpurkar2017mura}, generation on \emph{IU X-Ray}~\citep{demner2016preparing} and \emph{MIMIC-CXR}~\citep{johnson2019mimic}, with patient-disjoint medical splits. Appendix~\ref{app:setup} gives split counts, exclusions and metric definitions.
Every adapted method starts from the same ViT-L/16 and shares augmentations, temperature and schedule: adapters, Linear, Tiny MLP, RFF and Quadratic use frozen features, while LoRA updates attention projections with the base weights frozen. Section~\ref{sec:ablation} adds a 60-parameter rotation-plane control. 
Training uses the label-free objective of \eqref{eq:loss}; labels enter only post-hoc evaluation, never model selection or tuning, with default hyperparameters fixed on a disjoint pilot subset; the separate retuning sweep selects by contrastive validation loss.
The five-seed means overlap (Figure~\ref{fig:sensitivity}); $^{\dagger}$ marks Holm-corrected query-bootstrap significance conditional on fitted runs. Appendix~\ref{app:variance} separately reports normal-approximation intervals against the rotation-plane head across adaptation draws, initialization and patients.
\begin{table}[t]
\centering
\caption{Retrieval under the shared label-free protocol (mean$\pm$std, 5 seeds); frozen encoders are deterministic. Within ViT-L, \textbf{bold}/\underline{underline} mark best/second; tints mark \rankkey{bestbg}{1st}, \rankkey{secondbg}{2nd}, \rankkey{thirdbg}{3rd}. $^{\dagger}$ denotes conditional query-bootstrap significance versus the strongest baseline in this table (Holm-corrected, $p<0.05$; Section~\ref{sec:setup}). The final medical-backbone block is a separate comparison.}
\label{tab:main}
\footnotesize
\setlength{\tabcolsep}{2.1pt}
\renewcommand{\arraystretch}{0.93}
\begin{tabular}{l|ccc|ccc|r}
\toprule
\multirow{2}{*}{\textbf{Method}} & \multicolumn{3}{c|}{\textbf{ChestX-ray14}} & \multicolumn{3}{c|}{\textbf{MURA}} & \multirow{2}{*}{\textbf{\#Params}} \\
\cmidrule(lr){2-4}\cmidrule(lr){5-7}
 & P@5 & MAP@10 & NDCG@10 & P@5 & MAP@10 & NDCG@10 & \\
\midrule
ViT Only        & 0.312 & 0.278 & 0.307 & 0.481 & 0.451 & 0.475 & 0 \\
BiomedCLIP      & 0.352 & 0.315 & 0.348 & 0.508 & 0.476 & 0.501 & 0 \\
MedCLIP         & 0.341 & 0.303 & 0.334 & 0.494 & 0.463 & 0.488 & 0 \\
\midrule
Linear Head     & 0.378{\tiny$\pm$.008} & 0.341{\tiny$\pm$.007} & 0.368{\tiny$\pm$.008} & 0.549{\tiny$\pm$.007} & 0.513{\tiny$\pm$.008} & 0.541{\tiny$\pm$.007} & 1.05M \\
Adapter         & 0.401{\tiny$\pm$.007} & 0.368{\tiny$\pm$.006} & 0.396{\tiny$\pm$.007} & 0.573{\tiny$\pm$.006} & 0.546{\tiny$\pm$.007} & 0.567{\tiny$\pm$.006} & 525K \\
\rowcolor{thirdbg}
Adapter-L       & 0.412{\tiny$\pm$.006} & 0.372{\tiny$\pm$.007} & 0.401{\tiny$\pm$.006} & 0.581{\tiny$\pm$.006} & 0.556{\tiny$\pm$.006} & 0.580{\tiny$\pm$.007} & 2.10M \\
\rowcolor{secondbg}
Adapter-XL      & \underline{0.414}{\tiny$\pm$.009} & \underline{0.387}{\tiny$\pm$.008} & \underline{0.414}{\tiny$\pm$.008} & \underline{0.584}{\tiny$\pm$.008} & \underline{0.562}{\tiny$\pm$.008} & \underline{0.584}{\tiny$\pm$.008} & 5.25M \\
LoRA ($r{=}4$)  & 0.405{\tiny$\pm$.008} & 0.370{\tiny$\pm$.008} & 0.399{\tiny$\pm$.008} & 0.576{\tiny$\pm$.007} & 0.549{\tiny$\pm$.007} & 0.571{\tiny$\pm$.007} & 393K \\
LoRA ($r{=}16$) & 0.411{\tiny$\pm$.007} & 0.376{\tiny$\pm$.007} & 0.405{\tiny$\pm$.007} & 0.582{\tiny$\pm$.006} & 0.557{\tiny$\pm$.007} & 0.579{\tiny$\pm$.006} & 1.57M \\
Tiny MLP        & 0.371{\tiny$\pm$.010} & 0.337{\tiny$\pm$.009} & 0.365{\tiny$\pm$.010} & 0.545{\tiny$\pm$.009} & 0.512{\tiny$\pm$.009} & 0.538{\tiny$\pm$.009} & 16.4K \\
RFF             & 0.319{\tiny$\pm$.005} & 0.283{\tiny$\pm$.005} & 0.312{\tiny$\pm$.005} & 0.489{\tiny$\pm$.005} & 0.458{\tiny$\pm$.005} & 0.483{\tiny$\pm$.005} & 64 \\
Quadratic       & 0.392{\tiny$\pm$.009} & 0.355{\tiny$\pm$.009} & 0.383{\tiny$\pm$.009} & 0.561{\tiny$\pm$.008} & 0.532{\tiny$\pm$.008} & 0.554{\tiny$\pm$.008} & 164 \\
\rowcolor{bestbg}
\textbf{\name}  & \textbf{0.428}$^{\dagger}${\tiny$\pm$.006} & \textbf{0.399}$^{\dagger}${\tiny$\pm$.006} & \textbf{0.425}$^{\dagger}${\tiny$\pm$.006} & \textbf{0.601}$^{\dagger}${\tiny$\pm$.006} & \textbf{0.579}$^{\dagger}${\tiny$\pm$.006} & \textbf{0.599}$^{\dagger}${\tiny$\pm$.006} & \textbf{60} \\
\midrule
BiomedCLIP{+}\name & 0.438{\tiny$\pm$.006} & 0.409{\tiny$\pm$.006} & 0.435{\tiny$\pm$.006} & 0.612{\tiny$\pm$.006} & 0.590{\tiny$\pm$.006} & 0.610{\tiny$\pm$.006} & 54 \\
\bottomrule
\end{tabular}
\vspace{-3mm}
\end{table}

\begin{figure}[t]
\centering
\includegraphics[width=\linewidth]{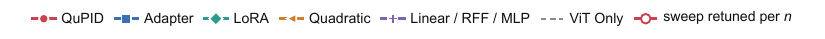}\\[2pt]
\begin{subfigure}[t]{0.32\linewidth}
\centering
\includegraphics[width=\linewidth]{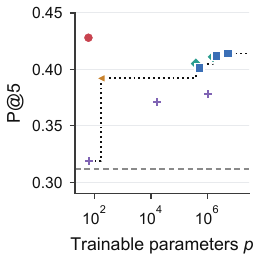}
\caption{Parameter budget.}
\end{subfigure}\hfill
\begin{subfigure}[t]{0.32\linewidth}
\centering
\includegraphics[width=\linewidth]{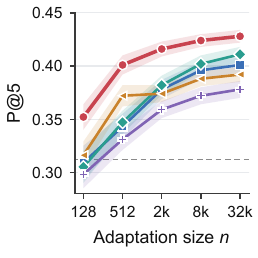}
\caption{Adaptation size.}
\end{subfigure}\hfill
\begin{subfigure}[t]{0.32\linewidth}
\centering
\includegraphics[width=\linewidth]{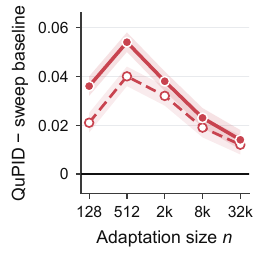}
\caption{Margin in the retuning sweep.}
\end{subfigure}
\caption{Low-data retrieval (ChestX-ray14, 5 seeds; bands one std). The dotted step in (a) is the Pareto envelope of the plotted classical modules; (b) plots the adaptation-size sweep. Panel (c) uses the fixed and retuned comparisons of Table~\ref{tab:tuned}; it excludes the additional controls in Table~\ref{tab:ladder}. Proposition~\ref{prop:generalization} does not predict these retrieval margins.}
\label{fig:lowdata}
\vspace{-3mm}
\end{figure}

\subsection{Main retrieval results}
\label{sec:mainresults}

Table~\ref{tab:main} and Figure~\ref{fig:lowdata}(a) evaluate the accuracy attained at each trainable budget; Figure~\ref{fig:profile} (Appendix~\ref{app:fullmetrics}) gives metric-specific differences. With $60$ parameters, \name exceeds every adapter and LoRA configuration in Table~\ref{tab:main} on both datasets.
ChestX-ray14 MRR is lower.
Relative to the frozen encoder, \name improves P@5 by $+0.116$ and $+0.120$; its measured margins over a five-million-parameter adapter are $+0.014$ and $+0.017$. These results demonstrate useful retrieval adaptation with four to five orders of magnitude fewer trainable parameters than the adapters.
In Table~\ref{tab:ablation}, trained readout recovers $0.100$ of the $0.116$ gain; adding re-uploading contributes $0.016$ while increasing the budget from $30$ to $60$. This ablation changes both structure and capacity.
An untrained circuit achieves half of the gain, and the naive fidelity design matches the frozen baseline, as Proposition~\ref{prop:degeneracy} and the observed nonnegative query-gallery cosines require. \label{sec:geometry}%
ChestX-ray14 label-set Jaccard describes neighborhood structure; on MURA it equals P@10 by definition (Table~\ref{tab:geometry}).

\begin{figure}[t]
\centering
\includegraphics[width=\linewidth]{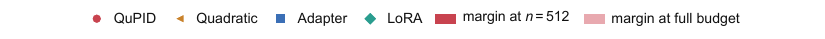}\\[2pt]
\begin{subfigure}[t]{0.32\linewidth}
\centering
\includegraphics[width=\linewidth]{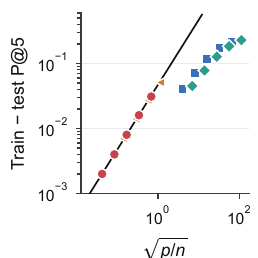}
\caption{Gap against $\sqrt{p/n}$.}
\end{subfigure}\hfill
\begin{subfigure}[t]{0.32\linewidth}
\centering
\includegraphics[width=\linewidth]{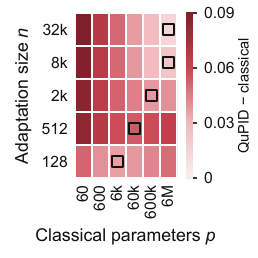}
\caption{Margin across $p$ and $n$.}
\end{subfigure}\hfill
\begin{subfigure}[t]{0.32\linewidth}
\centering
\includegraphics[width=\linewidth]{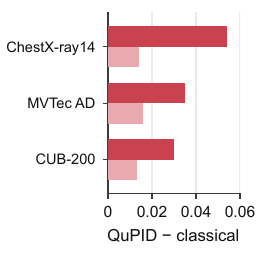}
\caption{Margin in three domains.}
\end{subfigure}
\caption{Locating the advantage (ChestX-ray14 unless noted, 5 seeds). The solid line in (a) is an unfitted slope-one guide, and the squares in (b) mark the adapter budget with the best test score at each $n$. Absolute numbers are in Appendices~\ref{app:fullmetrics} and~\ref{app:whygap}.}
\label{fig:where}
\vspace{-3mm}
\end{figure}

\FloatBarrier
\BfPara{Where the parameter budget stops paying}\phantomsection\label{sec:dataefficiency}
The adaptation-size sweep evaluates the compact map under changing data budgets. Proposition~\ref{prop:generalization} motivates restricting capacity but does not predict an accuracy ordering.
Figures~\ref{fig:lowdata} and~\ref{fig:where} compare the plotted families; Table~\ref{tab:ladder} supplies the closest matched control.
Below roughly one thousand examples adapters degrade toward, and partially below, the frozen baseline. At 512, \name reaches $0.401$ P@5 versus $0.386$ for the default \mbox{60-parameter} rotation-plane head, the strongest control in Table~\ref{tab:ladder}, a $+0.015$ margin.
The same-control full-budget margin is $+0.023$. Full-to-512 drops are $0.027$ for \name and $0.019$ for the control: higher accuracy, but greater degradation.
Appendix~\ref{app:gap} suggests why larger heads fall behind: the adapter's training P@5 exceeds its held-out P@5 by $0.041$ at the full budget and $0.171$ at $512$, against $0.002$ and $0.016$ for \name.
The rotation-plane sweep in Appendix~\ref{app:capacity} measures accuracy and train--test gaps at all five adaptation sizes.
Figure~\ref{fig:where}(b) sweeps parameter budget and data budget jointly: the classical parameter budget achieving the highest test score increases with adaptation size, and our margin is always smallest at those budgets.
Retuning the configurations in Table~\ref{tab:tuned} reduces their 512-example margin from $+0.054$ to $+0.040$ (Figure~\ref{fig:lowdata}(c)). These are comparisons within that sweep, not margins over all classical controls.
Validation-selected rotation-plane tuning reduces the margins to $+0.007$ at $512$ and $+0.014$ at full budget (Appendix~\ref{app:computematch}). Paired $95\%$ intervals exclude zero at the full budget and not at $512$, for the default control ($[-0.0112,+0.0412]$ at $512$) and for the tuned control (Table~\ref{tab:surrogates}).

\BfPara{Does the effect leave radiology}\phantomsection\label{sec:generality}
The readout can process any frozen feature vector satisfying the encoding requirements of Section~\ref{sec:method}. We test whether its utility extends beyond radiology.
Figure~\ref{fig:where}(c) repeats the protocol on two non-medical benchmarks that share the structure of Section~\ref{sec:intro}.
\emph{MVTec AD}~\citep{bergmann2019mvtec} evaluates object-category retrieval in industrial imagery; \emph{\mbox{CUB-200}}~\citep{wah2011caltech} evaluates species retrieval. These relevance definitions test transfer beyond radiology; the MVTec result does not directly measure defect localization.
With backbone, readout and hyperparameters unchanged, the same pattern appears in both, the low-data margin roughly twice the full-budget one (Table~\ref{tab:generality_full}).
These comparisons report mean accuracy for the included baselines; they do not isolate the size of the discriminative signal as the cause.

\BfPara{Does better retrieval reach the generator}\phantomsection\label{sec:generation}
Retrievers are adapted label-free on each generation dataset's training split, which also serves as the archive and is patient-disjoint from all queries; decoding is greedy with a fixed prompt, so seed variation stems from the retriever alone.
Table~\ref{tab:generation} (Appendix~\ref{app:fullmetrics}) shows \name improving lexical-overlap metrics for both generators and datasets, while Adapter retains a small CIDEr advantage.
Because n-gram overlap cannot certify clinical correctness, Table~\ref{tab:clinical} probes factual content: CheXbert-F1 improves over Adapter by $0.018/0.017$ and RadGraph-F1 by $0.011/0.012$ for LLaVA-Med/CheXagent.
The comparison is meaningful only if generation responds to retrieval, so Appendix~\ref{app:utility} brackets the pipeline between random and label-oracle context; \name recovers $61\%$ of this clinical-proxy interval and Adapter $49\%$ (Figure~\ref{fig:utility}).

\begin{figure}[t]
\centering
\includegraphics[width=\linewidth]{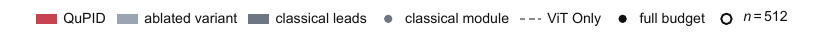}\\[2pt]
\begin{subfigure}[t]{0.32\linewidth}
\centering
\includegraphics[width=\linewidth]{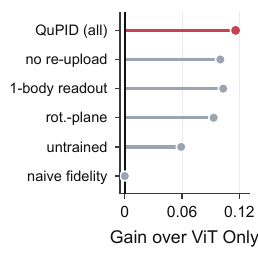}
\caption{Ablations and controls.}
\end{subfigure}\hfill
\begin{subfigure}[t]{0.32\linewidth}
\centering
\includegraphics[width=\linewidth]{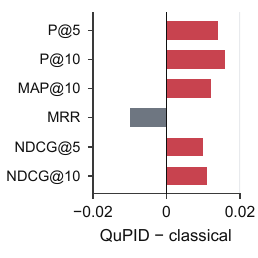}
\caption{Per-metric margin.}
\end{subfigure}\hfill
\begin{subfigure}[t]{0.32\linewidth}
\centering
\includegraphics[width=\linewidth]{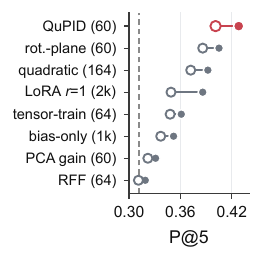}
\caption{Classical capacity ladder.}
\end{subfigure}
\caption{Mechanism (ChestX-ray14, 5 seeds; P@5 except (b)). Gains in (a) are over the frozen encoder; (b) uses the strongest included comparator; (c) gives parameter counts, filled at full budget and hollow at $n{=}512$. Full sets are in Appendices~\ref{app:fullmetrics}, \ref{app:extra_exp} and~\ref{app:whygap}.}
\label{fig:mech}
\vspace{-3mm}
\end{figure}

\FloatBarrier
\subsection{Isolating the mechanism}
\label{sec:ablation}

The comparisons above assess the utility of a compact map. We next examine its design choices and equally small classical alternatives.
These controls distinguish the comparison with larger adapters from comparisons between parameterizations at a similar budget.
The ten-qubit real circuit is simulated exactly with $1024$-dimensional linear algebra, so these comparisons evaluate representational choices within classical simulation~\citep{gilfuster2025trainability}.

The circuit couples 40 measurement features through one norm-preserving transformation with 60 trainable parameters. For each input, the associated quadratic forms share an orthogonal conjugation of fixed Pauli observables. Re-uploading makes this shared transformation input-dependent, tying how the features can adapt. Under classical GPU execution, the circuit thus specifies an architectural constraint whose retrieval utility is evaluated by the parameter-matched controls.

Figure~\ref{fig:mech}(a) removes one component at a time and Table~\ref{tab:ablation} (Appendix~\ref{app:fullmetrics}) gives the full set.
Depth saturates beyond $L=3$; single-qubit and pairwise readouts both contribute. The rotation-plane head matches our budget and isometry but uses input-independent rotations and rank-one squared-projection forms. Its comparison with input-modulated full-rank Pauli forms evaluates the combined design, without separating modulation, readout rank and entanglement (Appendix~\ref{app:implementation}).
The same parameter budget can therefore support different feature families.
A variational readout admits a Fourier description and can be approximated classically by sampling frequencies from its spectrum~\citep{landman2023classically}, but Proposition~\ref{prop:spectrum} bounds our bandwidth without providing a lower bound on the cost of such a surrogate.
RFF and the four-projection Quadratic head are restricted controls, not exhaustive dequantizations; matching parameter count does not match retained information.
A random Fourier map recovers $0.007$ of the $0.116$ that separates the readout from the frozen encoder and a trained quadratic head $0.080$, leaving $0.036$ over Quadratic; the stronger rotation-plane head narrows this margin to $0.023$.
Appendix~\ref{app:surrogates} builds that surrogate from the spectrum of Proposition~\ref{prop:spectrum} at $5{,}000\times$ our budget: its full-budget P@5 trails \name by $0.021$ (paired $95\%$ interval $[+0.013,+0.029]$) and by $0.039$ at $512$ examples.
Figure~\ref{fig:mech}(c) compares classical modules at or near our budget under their default configurations (Appendix~\ref{app:capacity}); the readout has the highest mean P@5 at both adaptation sizes, with the isometric control closest.
The larger-adapter comparison establishes parameter efficiency at competitive accuracy; equal-budget controls assess the specific structure chosen for that budget.

Figure~\ref{fig:mech}(b) shows the per-metric boundary of the claim.
Appendix~\ref{app:extra_exp} varies qubit count, entangling topology, readout width, temperature and feature noise, and every axis saturates at or near the default, so the operating point is a plateau rather than a tuned optimum.
It also reports per-seed spread and a zero-shot cross-site margin over the adapter that stays positive, falling from $0.027$ to $0.017$.

\BfPara{Computational cost}
Table~\ref{tab:cost} (Appendix~\ref{app:fullmetrics}) reports the tradeoff: \name requires $1.60$ times the per-epoch training time of a lightweight adapter, because statevector simulation scales with $2^{n_q}=d$.
Inference (7.6 against 7.1 ms per query) and peak adaptation memory (3{,}582 against 3{,}728 MB) are comparable, since archive readouts are cached and each query requires one circuit evaluation.
Parameter efficiency is not computational efficiency. At matched adaptation time, Table~\ref{tab:computematch} preserves the ordering among its three baselines; Appendix~\ref{app:computematch} separately evaluates rotation-plane tuning with an explicit trial budget.
The GPU deployment of Section~\ref{sec:training} incurs this classical cost; Appendix~\ref{app:qml_code} describes its fixed-parameter forward pass.

\begin{figure}[t]
\centering
\includegraphics[width=\linewidth]{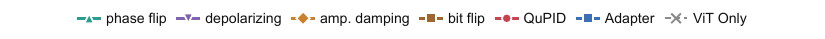}\\[2pt]
\begin{subfigure}[t]{0.32\linewidth}
\centering
\includegraphics[width=\linewidth]{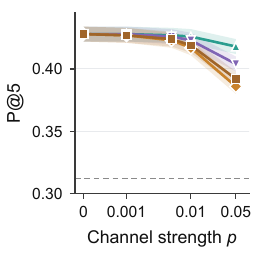}
\caption{Gate noise at evaluation.}
\end{subfigure}\hfill
\begin{subfigure}[t]{0.32\linewidth}
\centering
\includegraphics[width=\linewidth]{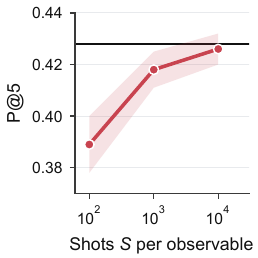}
\caption{Finite-shot readout.}
\end{subfigure}\hfill
\begin{subfigure}[t]{0.32\linewidth}
\centering
\includegraphics[width=\linewidth]{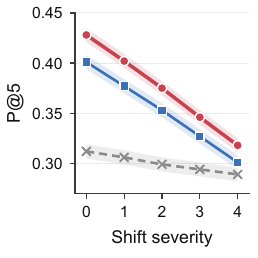}
\caption{Acquisition shift.}
\end{subfigure}
\caption{The idealizations tested (ChestX-ray14, P@5, 5 seeds; bands one standard deviation).
Severity in (c) runs from the ChestX-ray14 acquisition at 0 to the CheXpert~\citep{irvin2019chexpert} acquisition at 4. The strength axis in (a) is a symmetric log; in (b) the horizontal rule marks exact expectations and the points are measured finite-shot results.}
\label{fig:noise}
\vspace{-3mm}
\end{figure}

\FloatBarrier
\BfPara{Testing the idealizations}\phantomsection\label{sec:noise}
GPU deployment uses exact statevector readouts. Figure~\ref{fig:noise} tests acquisition shift and, separately, noise and sampling in a hypothetical quantum-hardware realization; the latter is not required for our GPU inference path.
Gate noise degrades retrieval gradually, and the channel ordering follows the cancellation argument of Appendix~\ref{app:noise_channels}.
The evaluated Pauli channels multiply Pauli components at the channel location by observable-dependent factors, without an additive offset.
Cosine removes a common scaling factor; amplitude damping additionally biases expectations, consistent with its larger measured degradation.
Training through the channel recovers part of the loss but never the noiseless level (Figure~\ref{fig:noiseaware} and Tables~\ref{tab:noise} and~\ref{tab:noise_aware}, Appendix~\ref{app:noise}).
Finite-shot readout is within $0.002$ P@5 of exact expectations at $S=10^{4}$ (Table~\ref{tab:shots}); this is an empirical result, separate from the sufficient ranking guarantee in \eqref{eq:shots}.
Figure~\ref{fig:noise}(c) and Appendix~\ref{app:stress} report positive acquisition-shift margins and less sequential forgetting than either adapter. Figure~\ref{fig:archive} indexes stored cache size; patient exclusion determines the eligible gallery used for ranking.
The scope is narrow: these simulations omit crosstalk, coherent errors and readout error.
They show that the margin survives the modeled perturbations, not that it would survive on hardware today.

\section{Limitations and Conclusion}
\label{sec:limitations}

Five limitations bound our claims.
First, training uses classical circuit simulation and inference runs on a GPU; no computational quantum advantage is claimed.
Second, the method operates on pooled backbone features, so information averaged away by the frozen [CLS] aggregation cannot be recreated downstream, and patch-token variants are future work.
Third, amplitude encoding requires strict L2 normalization and power-of-two dimensions, so a $768$-dimensional backbone needs zero-padding to $1024$.
Fourth, the retriever is trained in isolation from the generator, generation metrics lack clinician assessment, and data-local adaptation is not formal privacy protection.
Fifth, the margin over the $60$-parameter rotation-plane head is positive at every adaptation size; paired intervals exclude zero at the full budget but not at $512$ examples, including after validation-selected tuning.

\name adapts a frozen retriever through $60$ parameters shared across $40$ input-dependent measurement features, and this structure exceeds adapters and LoRA with up to $5.25$M parameters in P@5.
Its P@5 gain over the frozen encoder is $+0.116$, its lead over retuned adapters is widest at $512$ examples ($+0.040$), and its full-budget margin over an equally compact classical head is $+0.023$.
Circuit-defined measurement features are thus a practical route to parameter-efficient retrieval adaptation under classical GPU execution.

\clearpage
\section*{Ethics Statement}
All datasets used in this work are publicly distributed, and the medical datasets are de-identified.
ChestX-ray14 and MIMIC-CXR were collected under IRB approval by their curators. \mbox{MIMIC-CXR} access follows the PhysioNet credentialed data use agreement, MURA and CheXpert are distributed by Stanford under their research use agreements, and IU X-Ray is distributed de-identified through Open-i.
MVTec AD is used under its CC BY-NC-SA 4.0 license, and CUB-200 under the terms of its Caltech distribution.
The proposed system is intended to assist, not replace, clinical decision-making: any deployment must operate under physician supervision, undergo applicable regulatory review, and include mechanisms for flagging low-confidence retrievals.
We explicitly do not claim privacy-preserving learning; data-local adaptation reduces data movement but does not defend against parameter-level attacks such as membership inference on the adapted module.

\section*{Reproducibility Statement}
Proofs of Propositions~\ref{prop:degeneracy}, \ref{prop:spectrum} and~\ref{prop:generalization} are given in Appendix~\ref{app:proofs}.
Implementation details, hyperparameters, augmentation settings, and seed handling are specified in Appendix~\ref{app:implementation}, and all datasets are available from their public distributors under the terms listed in the Ethics Statement.
Code is available at \url{https://anonymous.4open.science/r/qupid-iclr/}.

\section*{AI Use Statement}
We used generative AI tools to edit and restructure the text, verify and format references, write figure and reference code, and assist with proof revision and with drafting interpretive sentences, which the authors checked against the measured results. We did not use them to generate data or to produce any reported experimental value; translation and qualitative analysis are not applicable. The authors reviewed all AI-assisted content and take full responsibility for the paper.

\bibliography{ref}
\bibliographystyle{iclr2027_conference}

\par
\newpage
\appendix
\makeatletter
\setlength{\@fptop}{0pt}
\setlength{\@fpsep}{11pt}
\setlength{\@fpbot}{0pt plus 1fil}
\makeatother

\section{Background on Quantum Machine Learning}
\label{app:qml_background}

This appendix provides a self-contained introduction to \name for readers unfamiliar with gate-based quantum machine learning (QML).
It is not a general introduction to quantum computing; it reviews only the ingredients our retrieval head uses, in the order they appear in Section~\ref{sec:method}: qubits and Hilbert-space representations, amplitude encoding, parameterized circuits with data re-uploading, and measurement readout.
Readers already familiar with these can skip to Appendix~\ref{app:proofs}.

\subsection{Qubits and Hilbert-space representations}
\label{app:qml_qubits}

A classical bit takes one of two values.
A qubit is its quantum analogue, a normalized vector in a two-dimensional complex Hilbert space,
\begin{equation}
|\phi\rangle=\alpha|0\rangle+\beta|1\rangle,\qquad \alpha,\beta\in\mathbb{C},\qquad |\alpha|^{2}+|\beta|^{2}=1,
\end{equation}
where $|0\rangle$ and $|1\rangle$ are computational basis states and $\alpha,\beta$ are probability amplitudes: measuring in the computational basis returns $|0\rangle$ with probability $|\alpha|^{2}$.
For $n_q$ qubits the Hilbert space has dimension $2^{n_q}$ with basis $\{|k\rangle\}_{k=0}^{2^{n_q}-1}$, and a general pure state is a unit-norm superposition $|\phi\rangle=\sum_{k}\alpha_k|k\rangle$ with $\sum_k|\alpha_k|^{2}=1$.

Two consequences matter here.
First, the state dimension grows exponentially in qubit count, which is often cited as motivation for quantum feature maps; we deliberately do \emph{not} rely on that argument, since our configuration sets $n_q=\log_2 d$ so the state dimension equals the input dimension exactly (Section~\ref{sec:readout}).
Second, and more importantly, a measurement extracts only a scalar expectation per observable, so what determines the usable information is the design of encoding, circuit, and readout rather than the raw dimension.

\subsection{Amplitude encoding}
\label{app:qml_amplitude}

Classical data must be written into a quantum state before a circuit can act on it.
Amplitude encoding writes a normalized vector directly into the amplitudes,
\begin{equation}
|\psi_{\mathrm{in}}(\mathbf{x})\rangle=\sum_{k=0}^{d-1}\hat{h}_k|k\rangle,\qquad \hat{\mathbf{h}}=\mathbf{h}/\|\mathbf{h}\|_2 ,
\end{equation}
using $n_q=\log_2 d$ qubits, exponentially fewer than the $O(d)$ qubits of angle encoding, where each coordinate drives its own rotation.
This logarithmic qubit cost is why amplitude encoding is the standard choice when the input is a dense high-dimensional feature, as in our $d=1024$ ViT case.
It also has the following requirements: the vector must be L2-normalized, so overall feature scale is discarded, and $d$ must be a power of two, so other backbones need padding or truncation (Section~\ref{sec:limitations}).
On hardware a third cost would dominate: preparing an arbitrary $d$-dimensional amplitude state exactly takes $O(d)$ elementary gates~\citep{mottonen2005transformation,plesch2011quantum}, a gate cost not incurred in simulation, where encoding consists of normalization.
We state it because every result in this paper is simulated; the encoding cost is the first item a hardware implementation would have to remove, and none of our claims depends on its being removed.

\subsection{Parameterized quantum circuits and data re-uploading}
\label{app:qml_pqc}

A quantum circuit applies a unitary $U$ with $U^{\dagger}U=UU^{\dagger}=I$, preserving the norm of the state; this is exactly the isometry property that Section~\ref{sec:readout} identifies as an inductive bias and that Proposition~\ref{prop:degeneracy} identifies as a failure mode when the readout is an overlap.
In QML the unitary is parameterized by trainable angles, giving a parameterized quantum circuit (PQC), or variational ansatz.
The single-qubit rotation we use is
\begin{equation}
R_Y(\theta)=\begin{bmatrix}\cos(\theta/2) & -\sin(\theta/2)\\ \sin(\theta/2) & \cos(\theta/2)\end{bmatrix},
\end{equation}
which is real-valued, so a circuit built from $R_Y$ and CNOT keeps an initially real state real and admits single-precision real simulation.
Multi-qubit gates such as CNOT create correlations that cannot be written as independent single-qubit operations; we use a ring topology, $\mathrm{CNOT}_{1,2}\cdots\mathrm{CNOT}_{n_q-1,n_q}\mathrm{CNOT}_{n_q,1}$, coupling all qubits with $n_q$ gates per layer rather than the $O(n_q^{2})$ that all-to-all entanglement would need.

A plain PQC applies a fixed unitary to the encoded state. \emph{Data re-uploading}~\citep{perez2020data} instead re-injects the input between trainable layers, so the overall map is no longer input-independent.
This is one of the two mechanisms by which \name escapes the degeneracy of Proposition~\ref{prop:degeneracy}, and it sets the Fourier spectrum the readout can reach (Proposition~\ref{prop:spectrum}).

\subsection{Measurement readout}
\label{app:qml_readout}

A quantum state is not directly observable; classical information is recovered by measuring observables.
For a Hermitian observable $P$ and state $|\psi\rangle$ the expectation is $\langle\psi|P|\psi\rangle$, estimated on hardware by repeated preparation and measurement.
Writing $|\psi\rangle=V|\psi_{\mathrm{in}}\rangle$ for the circuit $V$, this equals $\langle\psi_{\mathrm{in}}|V^{\dagger}PV|\psi_{\mathrm{in}}\rangle$, a quadratic form in the encoded amplitudes with \emph{effective observable} $A=V^{\dagger}PV$.
Since $P$ is Hermitian and $V$ unitary, $A$ is Hermitian and orthogonally similar to $P$, the structural fact behind the full-rank claim of Section~\ref{sec:readout} and the boundedness used in Proposition~\ref{prop:generalization}.

Two readout choices are common, and the distinction is this paper's crux.
A \emph{fidelity} readout compares two states through their overlap $|\langle\psi_q|\psi_p\rangle|^{2}$; a \emph{measurement} readout maps each state separately to a vector of expectations and compares those classically.
The first is the seemingly natural design that Proposition~\ref{prop:degeneracy} proves inert under a shared circuit; the second is what \name uses.
We measure local one- and two-body Pauli observables ($Z_j$, $X_j$, $Z_jZ_{j+1}$, $X_jX_{j+1}$), a locality choice that also keeps gradients well behaved (Section~\ref{sec:training}).

\subsection{Reference implementation}
\label{app:qml_code}

Listing~\ref{lst:qupid_pennylane} gives a compact PennyLane~\citep{bergholm2018pennylane} implementation of the \name readout of \eqref{eq:readout}.
The trainable tensor \texttt{theta} has shape $(L,n_q)$ for rotation angles and \texttt{alpha} the same shape for re-uploading scales, totaling $2n_qL=60$ parameters at the default setting.
The supplementary reference uses a batched statevector implementation checked against independent dense-circuit calculations. After training, $\boldsymbol{\theta}$ and $\boldsymbol{\alpha}$ are fixed, but re-uploading still depends on the query input. Thus the map is not a single input-independent matrix. GPU inference evaluates that input-dependent forward pass and reuses archive readouts cached for the same checkpoint. At $n_q=10$ the statevector has $1024$ amplitudes, so quantum-hardware transfer is unnecessary for this deployment; the argument does not extend to arbitrary larger circuits.

\begin{lstlisting}[style=pythonstyle, caption={Compact PennyLane implementation of the \name measurement readout.}, label={lst:qupid_pennylane}]
import pennylane as qml
def qupid_layer(theta_l, alpha_l, s_x, wires):
    """One QuPID block: trainable RY, ring CNOT, data re-uploading."""
    for q, w in enumerate(wires):                 # trainable rotation layer
        qml.RY(theta_l[q], wires=w)
    for q in range(len(wires) - 1):               # ring entanglement
        qml.CNOT(wires=[wires[q], wires[q + 1]])
    qml.CNOT(wires=[wires[-1], wires[0]])
    for q, w in enumerate(wires):                 # data re-uploading layer
        qml.RY(alpha_l[q] * s_x[q], wires=w)

def make_qupid_qnode(n_qubits, depth, noise=None):
    wires = list(range(n_qubits))
    # default.qubit is the noiseless statevector simulator used for every
    # reported result; default.mixed enables the noise-simulation appendix.
    dev = qml.device("default.qubit" if noise is None else "default.mixed",
                     wires=n_qubits)
    @qml.qnode(dev)
    def circuit(h, theta, alpha, s_x):
        qml.AmplitudeEmbedding(features=h, wires=wires,
                               normalize=True, pad_with=0.0)
        for l in range(depth):
            qupid_layer(theta[l], alpha[l], s_x, wires)
            if noise is not None:
                apply_noise_channel(noise, wires) # noise-simulation appendix
        # M = 4 n_q local Pauli expectations
        obs  = [qml.PauliZ(w) for w in wires]
        obs += [qml.PauliX(w) for w in wires]
        obs += [qml.PauliZ(wires[q]) @ qml.PauliZ(wires[(q + 1) % n_qubits])
                for q in range(n_qubits)]
        obs += [qml.PauliX(wires[q]) @ qml.PauliX(wires[(q + 1) % n_qubits])
                for q in range(n_qubits)]
        return [qml.expval(o) for o in obs]
    return circuit
\end{lstlisting}

\subsection{Positioning among recent variational quantum learning studies}
\label{app:qml_trends}

Table~\ref{tab:qml_settings} places \name among recent variational quantum learning papers at major machine learning venues along three axes: whether the architecture is hybrid quantum-classical, how classical data enters the circuit, and whether evaluation is simulator-based.
Three patterns are visible.
Recent QML systems are predominantly hybrid, using circuits as compact modules inside classical pipelines rather than as standalone models; encoding is task-specific, with angle encoding most common; and all listed studies use simulation, since full learning pipelines remain impractical on current hardware.

The two parameter-efficient fine-tuning entries are the closest precedent for our execution protocol.
Quantum-PEFT describes its Pauli-parameterized unitaries as quantum-inspired modules, computes them as products of small gate matrices, and runs its experiments on NVIDIA GPUs~\citep{koikeakino2025quantumpeft}.
QPA simulates its circuits exactly during training so that inference needs no quantum hardware, and it assesses device noise with noise models of IBM processors while leaving execution on quantum hardware to future work~\citep{liu2025qpa}.
QuanTA, the third weight-side method of Section~\ref{sec:related}, is a quantum-informed tensor adaptation that also runs on classical hardware~\citep{chen2024quanta}.
Our claims likewise concern the model class and its retrieval quality, so a hardware run would test device fidelity rather than these claims; the encoding and sampling costs it would add are given in Appendices~\ref{app:qml_amplitude} and~\ref{app:noise_shots}.

\name follows these conventions and the execution protocol of the two fine-tuning methods, training a compact measured feature map by classical simulation and running its fixed-parameter readout on a GPU, with emphasis on parameter sharing and empirical retrieval utility. As in QPA, noise enters only as a simulated sensitivity study (Appendix~\ref{app:noise}); it does not validate physical-device robustness. The table places our design alongside quantum modules in reinforcement learning, vision, operator-learning, federated and language pipelines. Here the circuit learns the \emph{similarity} features of a frozen retrieval pipeline. Shared-unitary invariance diagnoses a non-trainable fidelity construction; the proposed map combines input dependence and measurements to enable adaptation.

\begin{table}[!ht]
\centering
\caption{Experimental settings in recent variational quantum learning studies at major venues. A checkmark indicates the setting is explicitly used or discussed. Hybrid architectures, task-specific encodings, and simulator-based evaluation are the prevailing conventions; \name follows all three.}
\label{tab:qml_settings}
\scriptsize
\setlength{\tabcolsep}{3pt}
\begin{tabular}{llllcc}
\toprule
\textbf{Work} & \textbf{Venue} & \textbf{Domain} & \textbf{Encoding} & \textbf{Hybrid} & \textbf{Simulator} \\
\midrule
TensorRL-QAS~\citep{NeurIPS2025TensorRL} & NeurIPS & RL & Matrix product state & \checkmark & \checkmark \\
QVF~\citep{NeurIPS2025Quantum} & NeurIPS & Vision & Amplitude & \checkmark & \checkmark \\
QDSFormer~\citep{NeurIPS2025QDSFormer} & NeurIPS & Vision & Angle & \checkmark & \checkmark \\
PQC policies~\citep{jerbi2021parametrized} & NeurIPS & RL & Angle, re-uploading & \checkmark & \checkmark \\
QuanONet~\citep{ICML2025QuanOnet} & ICML & Operator learning & Angle, re-uploading &  & \checkmark \\
Quorus~\citep{ICLR2026Quorus} & ICLR & Federated & Angle & \checkmark & \checkmark \\
eQMARL~\citep{ICLR2025eQMARL} & ICLR & RL & Angle & \checkmark & \checkmark \\
VSQL~\citep{li2021vsql} & AAAI & Vision & Amplitude, shadow & \checkmark & \checkmark \\
Quantum-PEFT~\citep{koikeakino2025quantumpeft} & ICLR & PEFT (LLM) & None (weights only) & \checkmark & \checkmark \\
QPA~\citep{liu2025qpa} & ICLR & PEFT (LLM) & None (weights only) & \checkmark & \checkmark \\
\midrule
\textbf{\name (ours)} & n/a & \textbf{Retrieval} & Amplitude, re-uploading & \checkmark & \checkmark \\
\bottomrule
\end{tabular}
\vspace{-3mm}
\end{table}

\section{Proofs}
\label{app:proofs}

\subsection{Proof of Proposition~\ref{prop:degeneracy}}
Since $U(\boldsymbol{\theta})$ is unitary, $U(\boldsymbol{\theta})^{\dagger}U(\boldsymbol{\theta})=I$ for every $\boldsymbol{\theta}$, so
\[
\mathrm{Sim}_{\mathrm{fid}}(\mathbf{x}_q,\mathbf{x}_p;\boldsymbol{\theta})
=\big|\langle\psi_{\mathrm{in}}(\mathbf{x}_q)|U^{\dagger}U|\psi_{\mathrm{in}}(\mathbf{x}_p)\rangle\big|^{2}
=\big|\langle\psi_{\mathrm{in}}(\mathbf{x}_q)|\psi_{\mathrm{in}}(\mathbf{x}_p)\rangle\big|^{2},
\]
which does not depend on $\boldsymbol{\theta}$.
With amplitude encoding, $\langle\psi_{\mathrm{in}}(\mathbf{x}_q)|\psi_{\mathrm{in}}(\mathbf{x}_p)\rangle=\hat{\mathbf{h}}_q^{\top}\hat{\mathbf{h}}_p$ is the cosine of the frozen features, so every induced ranking equals squared-cosine ranking.
Any loss $\mathcal{L}$ that depends on $\boldsymbol{\theta}$ only through such similarities is constant in $\boldsymbol{\theta}$, hence $\nabla_{\boldsymbol{\theta}}\mathcal{L}=0$ identically.
The same argument applies to any input-independent parameterized isometry placed between the encoding and the fidelity, whatever its depth or gate set.
$\hfill\square$

We verified the statement numerically: across random parameter draws, rankings and pairwise fidelities agree with the frozen baseline to floating-point precision, and analytic parameter-shift gradients of the contrastive loss are zero to machine epsilon.
We further verified that pairwise query-gallery feature cosines are nonnegative throughout the rank ranges used by our metrics, so squared-cosine, absolute-cosine and cosine rankings coincide on our features, and the degenerate pipeline reproduces the frozen baseline to numerical precision.

\greybox{%
\begin{corollary}[Overlap-functional similarities are equally degenerate]
\label{cor:overlap}
Let $S(\mathbf{x}_q,\mathbf{x}_p;\boldsymbol{\theta})=g\big(\langle\psi_q(\boldsymbol{\theta})|\psi_p(\boldsymbol{\theta})\rangle\big)$ for any function $g:\mathbb{C}\to\mathbb{R}$, with $|\psi_i(\boldsymbol{\theta})\rangle=U(\boldsymbol{\theta})|\psi_{\mathrm{in}}(\mathbf{x}_i)\rangle$ and $U$ independent of the input. Then $S$ is independent of $\boldsymbol{\theta}$. In particular the Fubini--Study distance $\arccos|\langle\psi_q|\psi_p\rangle|$, the trace distance $\sqrt{1-|\langle\psi_q|\psi_p\rangle|^{2}}$ between the pure states, the Bures distance, and every monotone transform of fidelity induce parameter-independent rankings.
\end{corollary}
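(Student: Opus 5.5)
The plan is to reduce the corollary to the same unitarity identity used in the proof of Proposition~\ref{prop:degeneracy}, now applied to the complex overlap itself rather than to its squared modulus. For any input-independent $U(\boldsymbol{\theta})$ we have
\[
\langle\psi_q(\boldsymbol{\theta})|\psi_p(\boldsymbol{\theta})\rangle
=\langle\psi_{\mathrm{in}}(\mathbf{x}_q)|U(\boldsymbol{\theta})^{\dagger}U(\boldsymbol{\theta})|\psi_{\mathrm{in}}(\mathbf{x}_p)\rangle
=\langle\psi_{\mathrm{in}}(\mathbf{x}_q)|\psi_{\mathrm{in}}(\mathbf{x}_p)\rangle .
\]
This holds as an identity of complex numbers, not only in modulus. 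It is stronger than what Proposition~\ref{prop:degeneracy} records, because it also fixes the phase, so it covers $g$ that are not phase-invariant. Composing with an arbitrary $g:\mathbb{C}\to\mathbb{R}$ then gives $S(\mathbf{x}_q,\mathbf{x}_p;\boldsymbol{\theta})=g(\hat{\mathbf{h}}_q^{\top}\hat{\mathbf{h}}_p)$ for every $\boldsymbol{\theta}$. With amplitude encoding the inner product is the real cosine of the frozen features.

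The listed distances are handled by exhibiting each one as such a $g$. The Fubini--Study distance is $g(w)=\arccos|w|$ and the pure-state trace distance is $g(w)=\sqrt{1-|w|^{2}}$. For pure states the Bures distance is $\sqrt{2-2|w|}$, up to the normalization convention; in every convention it is a function of $|w|$ alone. A monotone transform of fidelity is $g(w)=f(|w|^{2})$. Each of these is parameter-independent as a function, so for each query the ranking over the gallery is parameter-independent as well. Any loss that depends on $\boldsymbol{\theta}$ only through these scores is constant and has zero gradient, exactly as in Proposition~\ref{prop:degeneracy}.

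I would add one remark. The argument uses only $U^{\dagger}U=I$, so it extends verbatim to input-independent isometries. For mixed-state versions of these distances, $\rho_i=U\rho_{\mathrm{in},i}U^{\dagger}$, one would instead invoke unitary invariance of the trace norm and of fidelity. That extension is not needed for the pure-state statement.

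There is no substantive obstacle. The only point needing care is to state explicitly that the identity holds for the complex overlap, not just its modulus, so that arbitrary $g$ are covered.
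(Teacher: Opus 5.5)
Your proposal is correct and matches the paper's proof. Both apply $U^{\dagger}U=I$ directly to the complex overlap, so the argument of $g$ is constant in $\boldsymbol{\theta}$, and both then note that each listed distance is a function of that overlap alone; your explicit Bures and monotone-fidelity forms and the isometry remark are accurate elaborations of the same argument.
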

}

\begin{proof}
$\langle\psi_q(\boldsymbol{\theta})|\psi_p(\boldsymbol{\theta})\rangle=\langle\psi_{\mathrm{in}}(\mathbf{x}_q)|U^{\dagger}U|\psi_{\mathrm{in}}(\mathbf{x}_p)\rangle=\langle\psi_{\mathrm{in}}(\mathbf{x}_q)|\psi_{\mathrm{in}}(\mathbf{x}_p)\rangle$, so the argument of $g$ is constant in $\boldsymbol{\theta}$; each listed distance is a function of this overlap alone.
\end{proof}

\subsection{Proof sketch of Proposition~\ref{prop:spectrum}}
Clamp the amplitude-encoded state at a reference input and hold $\boldsymbol{\alpha}$ fixed, so that along the probe the circuit alternates $t$-dependent encoding layers with constant layers acting on a fixed initial state, which is the setting of the Fourier analysis of variational circuits~\citep{schuld2021effect}.
Write $a_j^{(l)}=\alpha_j^{(l)}2^{j-1}$. Each encoding rotation has generator eigenvalues $\pm a_j^{(l)}/2$, so its contribution to an expectation has frequencies in $\{-a_j^{(l)},0,a_j^{(l)}\}$. Expanding the encoding gates in their spectral projectors shows that every readout frequency belongs to the sumset of these per-gate sets. Input-independent rotations and CNOTs change coefficients but introduce no frequencies.
At $\boldsymbol{\alpha}=1$ all frequencies are integers, and the triangle inequality gives $|\omega|\leq\sum_{l,j}2^{j-1}=L(2^{n_q}-1)$. This proves the stated upper bound, plotted in Figure~\ref{fig:spectrum}; membership in the sumset does not imply a nonzero coefficient.
\emph{Refinement for local readout.} Let $J=\operatorname{supp}(P)$ for one of the Pauli observables in \eqref{eq:readout}. The last encoding layer is a tensor product of single-qubit rotations, so all rotations outside $J$ cancel in $E_L(t)^{\dagger}P E_L(t)$. Only rates on $J$ can contribute from that layer. Consequently, every frequency of this component obeys
\begin{equation}
\label{eq:local_spectrum}
|\omega|\leq\sum_{l=1}^{L-1}\sum_{j=1}^{n_q}|a_j^{(l)}|
+\sum_{j\in J}|a_j^{(L)}|.
\end{equation}
For our one- and two-body observables, $|J|\leq2$. At $\boldsymbol{\alpha}=1$ and $n_q\geq2$, a uniform bound is therefore $(L-1)(2^{n_q}-1)+2^{n_q-1}+2^{n_q-2}$. At $n_q=10$, $L=3$ this is $2814$, strictly below the conservative bound $3069$. Thus the specified local readout cannot attain that conservative boundary. Neither bound asserts that its own boundary, or every frequency below it, is realized.
A model that is linear in $P$ Fourier coefficients independently weights only $O(P)$ frequencies, whereas nonlinear classical product forms can have far larger support, so the proposition bounds the probe bandwidth of the circuit and yields no lower bound against classical surrogates.
$\hfill\square$

\begin{figure}[!ht]
\centering
\includegraphics[width=\linewidth]{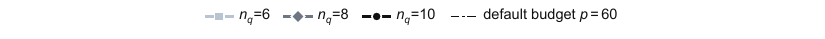}\\[2pt]
\includegraphics[width=0.48\linewidth]{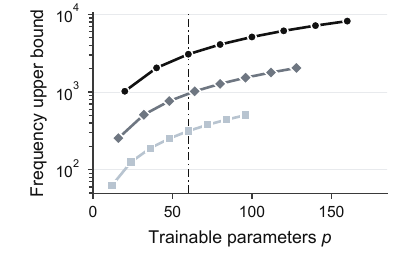}
\caption{Conservative frequency-magnitude upper bound $L(2^{n_q}-1)$ versus trainable budget (Proposition~\ref{prop:spectrum}, $\boldsymbol{\alpha}=1$). Curves vary depth $L$; the dash-dotted line marks $60$ parameters. These are bounds, not measured support sizes or attained frequencies; \eqref{eq:local_spectrum} sharpens them for the local observables used in our readout.}
\label{fig:spectrum}
\vspace{-3mm}
\end{figure}

\subsection{Proof sketch of Proposition~\ref{prop:generalization}}
\label{app:genproof}
The argument is a standard covering bound of the kind used for encoding-dependent generalization of parameterized circuits~\citep{caro2021encoding}.
It uses no property of the circuit beyond boundedness and smoothness, and the proposition therefore bounds estimation error for a small class without singling out quantum models.

\emph{Boundedness.} Each readout component is a Pauli expectation, so $z_m\in[-1,1]$ and $\|\mathbf{z}\|\le\sqrt{M}$; on $\Theta_{R,c}$ the cosine similarity is well defined and the per-batch loss \eqref{eq:loss} is bounded by $B=\log(2B_{\mathrm{batch}})+2/\tau$ for every batch.

\emph{Lipschitzness in parameters.} By the two-term parameter-shift rule, $|\partial z_m/\partial\theta_j^{(l)}|=\tfrac{1}{2}|z_m^{+}-z_m^{-}|\le1$ and $|\partial z_m/\partial\alpha_j^{(l)}|\le S$ with $S=\sup_{j,\mathbf{x}}|s_j(\mathbf{x})|$, which is finite because features are L2-normalized before encoding. The normalized readout $\mathbf{z}/\|\mathbf{z}\|$ is $(2/c)$-Lipschitz in $\mathbf{z}$ on $\{\|\mathbf{z}\|\ge c\}$, the cosine is $1$-Lipschitz in each normalized argument, and the log-softmax in \eqref{eq:loss} is $(2/\tau)$-Lipschitz in the vector of similarities. Bounding the unnormalized readout along a parameter segment and then normalizing its endpoints gives an $\infty$-norm Lipschitz constant $L_{\mathrm{lip}}\le(2/\tau)\cdot(4\sqrt{M}/c)\cdot\max(1,S)\cdot p$. The factor $p$ matches the parameter norm used below; unitarity fixes the output scale, while the compact parameter range enters the covering number.

\emph{Covering argument.} At fixed batch size, define the empirical risk as the expectation of \eqref{eq:loss} over batches drawn uniformly without replacement from the $n$ adaptation examples, averaging augmentation randomness; its population counterpart uses independent examples.
Replacing one example changes only the batches that contain it, so bounded differences gives concentration of order $1/\sqrt{n}$ at fixed batch size for each parameter setting.
Partition $[-R,R]^p$ into cells of $\infty$-diameter at most $\varepsilon$, choosing a representative in $\Theta_{R,c}$ from each occupied cell. At most $(1+2R/\varepsilon)^p$ representatives suffice. A union bound and $\varepsilon=1/(nL_{\mathrm{lip}})$ yield a uniform generalization gap of order $\sqrt{(\,p\log(1+nRL_{\mathrm{lip}})+\log(1/\delta)\,)/n}$ for every setting in $\Theta_{R,c}$, and hence for every empirical risk minimizer over that class.
The leading dependence is $\sqrt{p/n}$; $M$ and the feature dimension enter through $L_{\mathrm{lip}}$ and the input scales, inside the logarithm.
$\hfill\square$

The norm-floor condition $\|\mathbf{z}\|\ge c$ defines the restricted class $\Theta_{R,c}$ rather than following from the architecture.
A finite evaluation cannot establish it over the population or over the whole parameter domain; the empirical check confirms only that the reported solutions avoid near-zero denominators on observed data.
\emph{Noise and finite shots.} Parameter-independent channels preserve bounded exact expectations and gate-level parameter-shift derivatives, so the covering argument extends to exact noisy readouts with a noisy norm floor. Sampled readouts are not smooth deterministic functions of the parameters; Appendix~\ref{app:noise_shots} treats their ranking error separately.
The exact-expectation bound thus extends beyond the noiseless simulator, with constants depending on the noisy norm floor.

\subsection{Numerical verification of Proposition~\ref{prop:degeneracy}}
\label{app:degeneracy_numeric}

Numerical verification complements the proof because a pipeline built on the naive design can report a training loss even though its ranking is independent of the circuit parameters.
We draw $500$ independent parameter settings $\boldsymbol{\theta}\sim\mathcal{U}(-\pi,\pi)^{30}$, build the full query-gallery similarity matrix under each on a $2{,}000$-query subset of ChestX-ray14, and compare against the frozen-feature matrix.

\begin{table}[!ht]
\centering
\caption{Parameter sensitivity of the two designs over $500$ random parameter draws (ChestX-ray14, $2{,}000$ queries, float64 arithmetic). $\Delta\mathrm{Sim}$ is the entrywise deviation of the similarity matrix from the frozen-feature matrix, $\tau$ is Kendall's rank correlation between the induced ranking and the frozen ranking (median over draws), and the last column is the largest parameter-shift gradient coordinate of the contrastive loss over all draws.}
\label{tab:degeneracy}
\footnotesize
\setlength{\tabcolsep}{4pt}
\begin{tabular}{l|ccc}
\toprule
\textbf{Design} & $\max|\Delta\mathrm{Sim}|$ & $\tau$ vs.\ frozen & $\max|\partial\mathcal{L}/\partial\theta_j|$ \\
\midrule
Naive fidelity design & $4.2\times10^{-15}$ & $1.000$ & $8.7\times10^{-16}$ \\
\textbf{\name} (re-uploading and measurement readout) & $0.31$ & $0.62$ & $1.8\times10^{-2}$ \\
\bottomrule
\end{tabular}
\vspace{-3mm}
\end{table}

Table~\ref{tab:degeneracy} reports the outcome for the naive design and, as a control, for the corrected design of Section~\ref{sec:readout}.
Under the naive design the similarity matrix is invariant to floating-point precision, all $500$ draws reproduce the frozen ranking exactly, and the parameter-shift gradient vanishes to the same precision.
The invariance is exact, so parameter tuning or seed averaging cannot make this score depend on the circuit.
The corrected design satisfies none of the three degeneracy criteria, as required by Proposition~\ref{prop:degeneracy}.

\section{Experimental Setup}
\label{app:setup}

\BfPara{Datasets}
For retrieval we use \emph{ChestX-ray14}~\citep{wang2017chestxray} (112{,}120 frontal chest X-rays, 14 thoracic pathologies, multi-label) and \emph{MURA}~\citep{rajpurkar2017mura} (40{,}561 musculoskeletal radiographs over 7 body parts with binary abnormality, giving 14 region-pathology categories); for generation, \emph{IU X-Ray}~\citep{demner2016preparing} (7{,}470 images, 3{,}955 studies) and \emph{MIMIC-CXR}~\citep{johnson2019mimic} (377{,}110 images with reports).
Cross-site transfer, acquisition shift and sequential site adaptation additionally use \emph{CheXpert}~\citep{irvin2019chexpert} (224{,}316 chest radiographs of 65{,}240 patients).
Medical gallery, validation and query sets are separated by patient; adaptation is drawn from the gallery, and generation excludes the query patient (MIMIC-CXR uses its official splits). Dataset totals count stored images before split filtering. The image and patient counts below describe the eligible gallery and held-out sets used for evaluation.
A retrieved item is \emph{relevant} iff its label set intersects the query's (excluding No Finding) on ChestX-ray14, and iff it shares the region-pathology category on MURA; NDCG uses binary gains.

\BfPara{Baselines}
All adaptation baselines start from ViT-L/16 (1024-dimensional features) and share augmentations, temperature, and protocol; post-encoder heads keep features frozen, while LoRA updates attention projections.
We compare \emph{ViT Only} (frozen cosine), \emph{Linear Head} (1.05M), \emph{Adapter} (525K) with scaled variants \emph{Adapter-L} (2.10M) and \emph{Adapter-XL} (5.25M), \emph{LoRA} on attention projections at ranks 4 and 16 (393K, 1.57M)~\citep{hu2022lora}, and two parameter-matched nonlinear controls demanded by Section~\ref{sec:readout}: \emph{RFF} (64 parameters)~\citep{rahimi2007random} and \emph{Quadratic}, a shared low-rank quadratic map with $M=40$ outputs (164 parameters); constructions in Appendix~\ref{app:implementation}.
We further report frozen \emph{BiomedCLIP}~\citep{zhang2023biomedclip} and \emph{MedCLIP}~\citep{wang2022medclip} as centrally pre-trained references, a \emph{Tiny MLP} (16.4K), and \emph{BiomedCLIP{+}\name} ($512$-D, $n_q=9$, $54$ parameters), testing whether the gain persists on a medically pre-trained backbone.

\BfPara{Protocol and metrics}
All methods train with the label-free objective of \eqref{eq:loss}; labels serve only post-hoc evaluation, never training, model selection, or tuning: hyperparameters were fixed a priori on a disjoint pilot subset of ChestX-ray14 using only the contrastive validation loss (Appendix~\ref{app:implementation}).
A seed controls initialization, augmentation, and the adaptation-subset draw; splits are fixed, so frozen encoders are deterministic.
The full adaptation budget is 32{,}768 images on ChestX-ray14 and 16{,}384 on MURA.
We report mean and standard deviation over 5 seeds. The $^{\dagger}$ tests bootstrap seed-averaged per-query scores ($10^{4}$ resamples) and apply Holm correction within each table. These tests condition on the five fitted runs and use the strongest included baseline as comparator. Appendix~\ref{app:variance} separately incorporates adaptation-draw, initialization and patient variation for the default rotation-plane comparison using a normal approximation; the two analyses have different comparators and evaluation units. Appendix~\ref{app:surrogates} uses a third procedure for the classical surrogates, resampling adaptation subsets with nested initializations and patient clusters instead of a normal approximation, so interval widths are not comparable across the three.
Retrieval metrics are P@5, MAP@10 and NDCG@10 in the main text, with P@10, MRR and NDCG@5 in Appendix~\ref{app:fullmetrics}.
For embedding quality on multi-label data we avoid single-label clustering scores, which are ill-defined when an image carries several pathologies, and instead report the label-set Jaccard consistency of retrieved neighborhoods, complemented by silhouette scores restricted to the single-label subset.
For generation we report BLEU-4, ROUGE-L, METEOR, and CIDEr-D (written CIDEr) with frozen \emph{LLaVA-Med} (7B)~\citep{NEURIPS2023_5abcdf8e} and \emph{CheXagent} (8B)~\citep{chen2024chexagent}, plus clinical efficacy via CheXbert micro-F1 and RadGraph-F1 on MIMIC-CXR~\citep{smit2020chexbert,jain2021radgraph}.
Circuits are simulated in statevector mode and archive readouts are precomputed once per adaptation.

\BfPara{Metric definitions}
Let $\mathcal G_q$ be the eligible gallery after all exclusions, $r_{qi}\in\{0,1\}$ relevance, $\pi_q(j)$ the item at rank $j$, and $R_q=\sum_{i\in\mathcal G_q}r_{qi}$.
Full-gallery precision and recall are $P@k(q)=k^{-1}\sum_{j=1}^{k}r_{q,\pi_q(j)}$ and $R@k(q)=R_q^{-1}\sum_{j=1}^{k}r_{q,\pi_q(j)}$ for $R_q>0$. Dataset scores average these query-level quantities; recall is a mean of ratios, not a ratio of aggregate counts.
Hit@k instead tests whether any relevant item appears; these quantities are not interchangeable.
The supplementary evaluator takes the full eligible-gallery manifest, excludes zero-relevant queries jointly across methods and records their count, and implements binary-gain NDCG and cutoff-normalized AP with divisor $\min(k,R_q)$. Exact score ties retain archive order.

The evaluator computes $R@k$ from each query's relevant-item count in its eligible gallery. The reported comparisons use P@k, MAP@10, MRR and NDCG@k.

For label sets $Y_q,Y_i$, $\mathrm{Jac}@k(q)=k^{-1}\sum_{j=1}^{k}|Y_q\cap Y_{\pi_q(j)}|/|Y_q\cup Y_{\pi_q(j)}|$.
With one MURA region--abnormality category per image, the summand is precisely $r_{q,\pi_q(j)}$. Thus Jac@10 equals P@10 for the same queries, rankings and averaging weights, both per run and across seeds.
Accordingly, the MURA Jac@10 entries in Table~\ref{tab:geometry} are derived directly from the reported P@10 means in Table~\ref{tab:full}. They express the same retrieval statistic in label-set notation; silhouette provides a separate geometric diagnostic.
The corresponding P@10 is unreported for Quadratic.

\BfPara{Evaluation records}
An evaluation record identifies each image and patient, its gallery/validation/query membership, adaptation-subset membership, and query-specific eligible relevant count. Patient exclusion operates on these identifiers before ranking, so caching an image does not make it an eligible neighbor.
A run is identified by its subset draw, initialization, configuration, checkpoint and ordered neighbor IDs. This separates variation from retraining from variation in the held-out query sample and distinguishes image-weighted from patient-weighted summaries.
The supplementary evaluator checks patient separation, duplicate neighbors, query coverage and paired-run consistency. Its replay interface loads saved features, weights and an explicit projection matrix or saved re-uploading inputs, then exports ordered neighbors, query-level metrics and input hashes.

\BfPara{Split sizes and evaluation conventions}
The following image counts describe the experimental splits. Adaptation is a subset of the gallery; its column is not added to the dataset total. Medical gallery, validation and query patients are pairwise disjoint, with zero patient overlap.
\begin{center}
\begin{tabular}{lrrrr}
\toprule
Dataset & Gallery & Validation & Query & Adaptation\\
\midrule
ChestX-ray14 & 81,664 & 14,072 & 16,384 & 32,768\\
MURA & 36,465 & 2,048 & 2,048 & 16,384\\
MVTec AD & 3,000 & 256 & 1,024 & 2,744\\
CUB-200 & 5,000 & 500 & 5,000 & 4,500\\
\bottomrule
\end{tabular}
\end{center}
Patient counts in the same column order are $(22{,}437,3{,}864,4{,}504,9{,}032)$ for ChestX-ray14 and $(10{,}500,650,700,4{,}800)$ for MURA. Non-medical counts describe the selected evaluation subsets. Relevance is shared object category for MVTec AD and shared species for CUB; labels enter evaluation only. No query in these reported splits has zero relevant eligible items.

\section{Implementation Details}
\label{app:implementation}

Table~\ref{tab:hyper} lists the hyperparameters shared by all adaptation methods.

\begin{figure}[!ht]
\centering
\includegraphics[width=\linewidth]{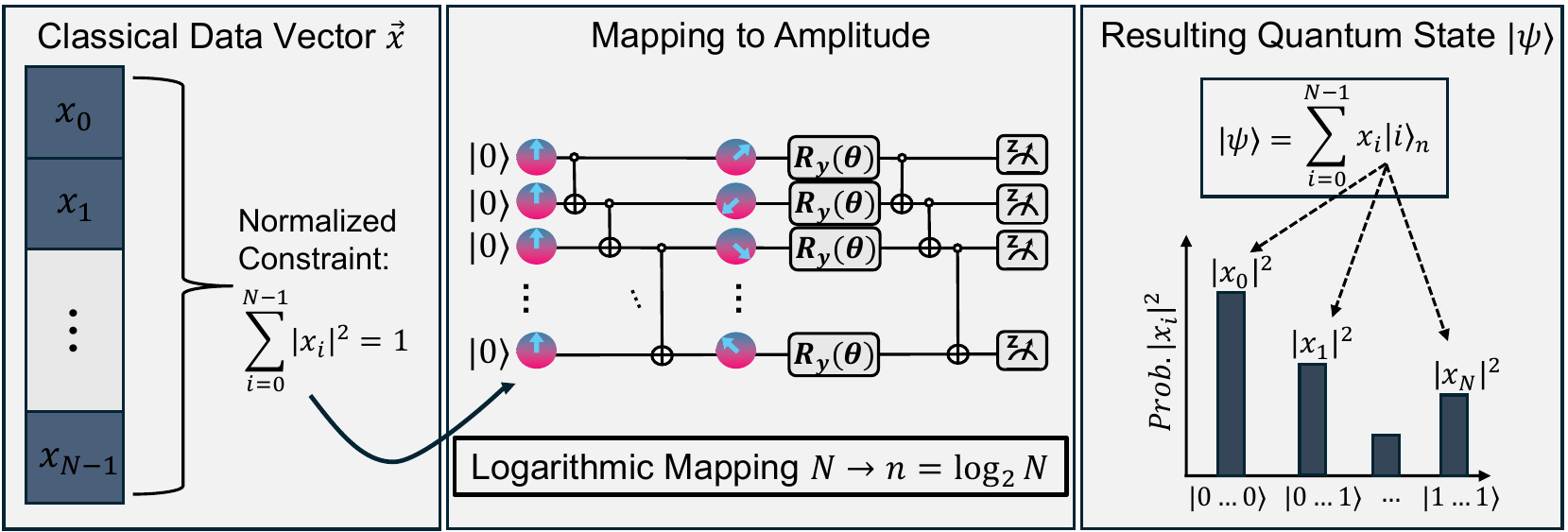}
\caption{Amplitude embedding encodes a normalized $d$-dimensional feature vector into the amplitudes of $\log_2 d$ qubits (the figure writes $\mathbf{x}$, $N$, and $n$ for our $\hat{\mathbf{h}}$, $d$, and $n_q$). At $n_q=\log_2 d$ the state space dimension equals the input dimension; the value of the circuit lies in its parameter-efficient nonlinearity rather than in its size.}
\label{fig:amplitude}
\vspace{-3mm}
\end{figure}

\begin{table}[!ht]
\centering
\caption{Hyperparameters used for all adaptation methods unless stated otherwise.}
\label{tab:hyper}
\footnotesize
\setlength{\tabcolsep}{4pt}
\begin{tabular}{ll}
\toprule
Backbone & ViT-L/16, ImageNet-21K, frozen (1024-D features) \\
Qubits / depth / readout & $n_q=10$, $L=3$, $M=4n_q=40$ \\
Trainable parameters & $2n_qL=60$ ($30$ rotations, $30$ re-uploading scales) \\
Optimizer & AdamW, lr $10^{-3}$, weight decay $0.05$ \\
Batch size / temperature & 32 / 0.07 \\
Augmentations & rotation $\pm5^{\circ}$, translation $5\%$, brightness/contrast jitter 0.1/0.1 \\
Epochs / seeds & 50, label-free early stopping (patience 10) / 5 seeds (0 to 4), mean$\pm$std \\
Execution & \makecell[l]{classical statevector training; fixed-parameter GPU inference;\\cached archive readouts} \\
Image preprocessing & resize $224\times224$, ImageNet normalization \\
\bottomrule
\end{tabular}
\vspace{-3mm}
\end{table}

Gradients use the exact two-term parameter-shift rule per gate, combined with the classical chain rule through the cosine similarity and contrastive loss (in simulation, timings use backpropagation through the statevector; the shift rule is the hardware cost model).
The angles are unconstrained real numbers, so optimization uses AdamW with no manifold projection and no projection onto a compact box.
Proposition~\ref{prop:generalization} therefore describes bounded subclasses and motivates a capacity trend; it does not provide an optimizer-specific guarantee for the reported runs.
Positive pairs are two independent augmentations of the same image; in-batch negatives can contain same-pathology pairs of different patients, which injects label noise into the repulsion term.
Under the evaluation labels, 7.9\% of in-batch negative pairs on ChestX-ray14 (batch 32) share at least one finding; conclusions in the main text do not assume that negative pairs share no pathology.

\BfPara{Parameter-matched controls}
The \emph{Quadratic} control computes $z_m=\sum_{r=1}^{4}c_{m,r}\,(\beta_r\mathbf{u}_r^{\top}\hat{\mathbf{h}})^{2}$ for $m=1,\dots,40$, with four shared directions $\mathbf{u}_r$ drawn once from a random orthonormal set and kept frozen, trainable per-direction scales $\beta_r$ ($4$ parameters), and trainable coefficients $c_{m,r}$ ($160$ parameters), totaling $164$; retrieval uses the cosine between the $40$-dimensional outputs, mirroring the readout cosine of Section~\ref{sec:readout}.
The \emph{RFF} control computes $64$ random Fourier features $\cos(\boldsymbol{\omega}_i^{\top}\hat{\mathbf{h}}+b_i)$ with Gaussian directions scaled by the median heuristic and frozen phases, with one trainable scale per feature ($64$ parameters); it is by construction the weakest control (frozen directions), included to bracket the range between frozen features and trained maps.
The forty Quadratic outputs all depend on just four squared projections; each quadratic form has rank at most four and shares their span. Its nominal parameter count therefore does not match \name's input access or expressivity. The untrained row freezes all $164$ parameters; the rotation-plane head is the more informative control.
The \emph{rotation-plane head} applies $60$ trainable rotation angles in fixed random orthogonal planes of $\mathbb{R}^{d}$ (drawn once per seed), followed by squared projections onto $M=40$ fixed random orthonormal directions and cosine similarity. It matches \name's budget, isometry and output width. For a projection direction $\mathbf{u}_m$, its output is $(\mathbf{u}_m^{\top}R_{\boldsymbol{\theta}}\hat{\mathbf{h}})^2$: the associated quadratic form has rank one and an input-independent matrix. In contrast, each Pauli form is full-rank and re-uploading makes its matrix input-dependent. This control therefore tests the combined parameterization; it does not isolate these structural differences. Reproducing a run also requires the saved plane bases and ordering, projection matrix, random-generator state, initialization and selected training configuration.

\BfPara{Rotation-plane configuration}
The rotation-plane control uses $d=1{,}024$, $60$ angles and $40$ outputs. Thin QR decomposition of a Gaussian $d\times120$ matrix supplies consecutive plane pairs; an independent $d\times40$ orthonormal matrix supplies the squared output projections. QR signs are fixed by a positive diagonal in $R$. Basis seeds are $0$--$4$, the projection seed offset is $100$, and basis matrices are stored in float64. Angles are initialized with standard deviation $0.01$ rad. Training uses AdamW, learning rate $10^{-3}$, weight decay $0.05$, batch $32$, temperature $0.07$ and $50$ epochs. The adaptation-size sweep uses this fixed configuration; Appendix~\ref{app:computematch} tunes it separately under a budget.

\section{Full Retrieval Metrics}
\label{app:fullmetrics}

Table~\ref{tab:full} reports the complete retrieval metric suite for the primary baselines, and Figure~\ref{fig:profile} plots the same numbers as signed margins to show the metrics on which the readout is superior or inferior.
Among the reported means in Table~\ref{tab:full}, \name leads P@5/10, MAP@10 and NDCG@5/10 on both datasets, and MRR on MURA.
MRR on ChestX-ray14 is lower by $0.010$.
Where the gain is largest is also consistent, P@10 on musculoskeletal at $+0.023$ and NDCG@5 on chest at $+0.010$, and it is the top-$k$ region that a retrieval-augmented generator uses.
Compared with the frozen backbone rather than the strongest adapter, the gains are substantially larger: \name adds $0.116$ P@5 on chest radiography and $0.120$ on musculoskeletal, of which the last $0.014$ to $0.017$ is what separates it from a five-million-parameter adapter.
Table~\ref{tab:geometry} reports the multi-label embedding-geometry diagnostics of Section~\ref{sec:mainresults}, Table~\ref{tab:ablation} the controlled removals of Section~\ref{sec:ablation}, Table~\ref{tab:generality_full} the absolute scores behind Figure~\ref{fig:lowdata}(c), and Figure~\ref{fig:downstream} the downstream generation and clinical-efficacy results of Section~\ref{sec:generation}.

\begin{figure}[!ht]
\centering
\includegraphics[width=\linewidth]{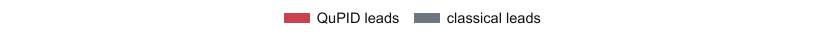}\\[2pt]
\begin{subfigure}[t]{0.48\linewidth}
\centering
\includegraphics[width=\linewidth]{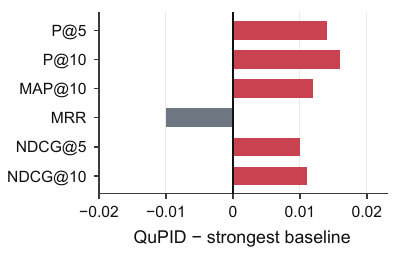}
\caption{ChestX-ray14.}
\end{subfigure}\hfill
\begin{subfigure}[t]{0.48\linewidth}
\centering
\includegraphics[width=\linewidth]{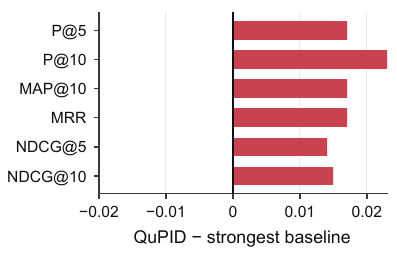}
\caption{MURA.}
\end{subfigure}
\caption{Signed margins over the strongest included baseline at full budget for the reported ranking metrics. Negative bars mark metrics on which that baseline leads.}
\label{fig:profile}
\par\smallskip
\begin{minipage}{\linewidth}
\normalsize\noindent The metric profile shows where the gain occurs within a ranking. The next comparison evaluates whether the retrieved evidence remains useful when used by a frozen generator.
\end{minipage}\par
\vspace{-3mm}
\end{figure}

\begin{figure}[!ht]
\centering
\includegraphics[width=\linewidth]{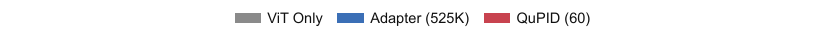}\\[2pt]
\begin{subfigure}[t]{0.48\linewidth}
\centering
\includegraphics[width=\linewidth]{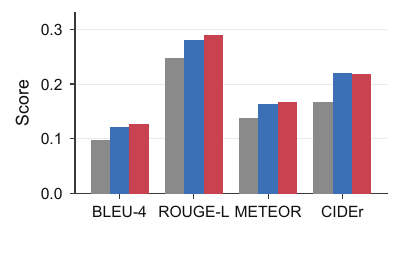}
\caption{Report generation with LLaVA-Med.}
\end{subfigure}\hfill
\begin{subfigure}[t]{0.48\linewidth}
\centering
\includegraphics[width=\linewidth]{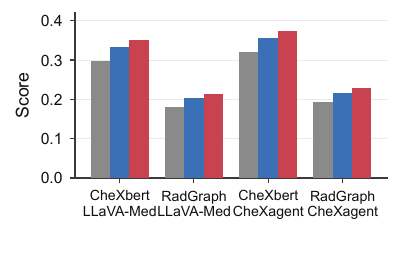}
\caption{Clinical efficacy proxies.}
\end{subfigure}
\caption{Downstream evaluation on MIMIC-CXR. Retrieval ordering is preserved by both n-gram metrics and the clinical-efficacy proxies, and \name's relative advantage is larger on the latter, consistent with pathology-aligned evidence mattering most for factual findings.}
\label{fig:downstream}
\par\smallskip
\begin{minipage}{\linewidth}
\normalsize\noindent The downstream comparison holds the generator fixed while changing its retrieved context. Tables~\ref{tab:generation} and~\ref{tab:clinical} separate lexical overlap from extracted findings, so an improvement in one need not imply an improvement in every metric.
\end{minipage}\par
\vspace{-3mm}
\end{figure}

\begin{table}[!ht]
\centering
\caption{Absolute scores behind Figure~\ref{fig:where}(c) (P@5, 5 seeds). \emph{cls.} denotes the comparator reported in this benchmark comparison, not the strongest control across all experiments. For ChestX-ray14 at 512, Table~\ref{tab:ladder} reports a stronger rotation-plane control ($0.386$). Marking follows Table~\ref{tab:main} by column.}
\label{tab:generality_full}
\footnotesize
\setlength{\tabcolsep}{4pt}
\begin{tabular}{ll|TSB|TSB}
\toprule
\multirow{2}{*}{\textbf{Domain}} & \multirow{2}{*}{\textbf{Benchmark}} & \multicolumn{3}{c|}{\textbf{512 examples}} & \multicolumn{3}{c}{\textbf{Full budget}} \\
\cmidrule(lr){3-5}\cmidrule(lr){6-8}
 & & frozen & cls. & \textbf{\name} & frozen & cls. & \textbf{\name} \\
\midrule
Medical      & ChestX-ray14 & 0.312 & 0.347 & \textbf{0.401} & 0.312 & 0.414 & \textbf{0.428} \\
Industrial   & MVTec AD     & 0.428 & 0.446 & \textbf{0.481} & 0.428 & 0.492 & \textbf{0.508} \\
Fine-grained & CUB-200      & 0.612 & 0.628 & \textbf{0.658} & 0.612 & 0.671 & \textbf{0.684} \\
\bottomrule
\end{tabular}
\par\smallskip
\begin{minipage}{\linewidth}
\normalsize\noindent Across these three benchmarks, the adapted readout exceeds the listed frozen and classical comparators at both budgets. The task changes across domains; this table tests recurrence of the retrieval pattern, not transfer of one trained head between datasets.
\end{minipage}\par
\vspace{-3mm}
\end{table}

\begin{table}[!ht]
\centering
\caption{Controlled removals on ChestX-ray14 (5 seeds). The first row is the default configuration. \#Trained counts parameters that receive nonzero gradient; frozen parameters are noted in parentheses. The naive fidelity design reproduces the frozen baseline exactly (Proposition~\ref{prop:degeneracy}: its parameters receive identically zero gradient), whereas the readout without re-uploading stays trainable but weaker by $0.016$ P@5 at the full budget. Marks and tints as in Table~\ref{tab:main}.}
\label{tab:ablation}
\footnotesize
\setlength{\tabcolsep}{4pt}
\begin{tabular}{l|cc|r}
\toprule
\textbf{Variant} & \textbf{P@5} & \textbf{NDCG@10} & \textbf{\#Trained} \\
\midrule
\rowcolor{secondbg}
\textbf{\name} (full: $L{=}3$, $M{=}40$) & 0.428{\tiny$\pm$.006} & 0.425{\tiny$\pm$.006} & 60 \\
\quad naive fidelity design (Prop.~\ref{prop:degeneracy}) & 0.312{\tiny$\pm$.000} & 0.307{\tiny$\pm$.000} & 0 (30 inert) \\
\quad w/o re-uploading, readout kept        & 0.412{\tiny$\pm$.008} & 0.409{\tiny$\pm$.008} & 30 \\
\rowcolor{thirdbg}
\quad readout: single-qubit only ($M{=}20$) & 0.415{\tiny$\pm$.007} & 0.412{\tiny$\pm$.007} & 60 \\
\quad readout: pair terms only ($M{=}20$)   & 0.402{\tiny$\pm$.008} & 0.398{\tiny$\pm$.008} & 60 \\
\quad depth $L{=}1$                         & 0.402{\tiny$\pm$.007} & 0.399{\tiny$\pm$.007} & 20 \\
\rowcolor{bestbg}
\quad depth $L{=}6$                         & \textbf{0.430}{\tiny$\pm$.006} & \textbf{0.427}{\tiny$\pm$.006} & 120 \\
\quad $n_q{=}8$ (PCA-256 input)             & 0.396{\tiny$\pm$.008} & 0.393{\tiny$\pm$.008} & 48 \\
\quad untrained circuit (random $\boldsymbol{\theta},\boldsymbol{\alpha}$) & 0.371{\tiny$\pm$.011} & 0.368{\tiny$\pm$.011} & 0 (60 frozen) \\
\quad untrained Quadratic control            & 0.352{\tiny$\pm$.010} & 0.349{\tiny$\pm$.010} & 0 (164 frozen) \\
\quad rotation-plane head (isometric control) & 0.405{\tiny$\pm$.008} & 0.402{\tiny$\pm$.008} & 60 \\
\bottomrule
\end{tabular}
\par\smallskip
\begin{minipage}{\linewidth}
\normalsize\noindent The readout-only variant remains trainable and reaches $0.412$ P@5, while re-uploading supplies the remaining $0.016$. The rotation-plane result is a closer structural comparison than the inert fidelity control, and its $0.023$ gap motivates the equal-budget analysis in Appendix~\ref{app:capacity}.
\end{minipage}\par
\vspace{-3mm}
\end{table}

\begin{table}[!ht]
\centering
\caption{Report generation with frozen medical LVLMs given top-5 retrieved reports as context (5 seeds; std $\leq0.004$ for adapted retrievers, ViT Only deterministic under greedy decoding; LoRA $r{=}16$ tracks Adapter within 0.003 and is omitted). B-4, R-L, MTR denote BLEU-4, ROUGE-L, METEOR. Marks and tints as in Table~\ref{tab:main}.}
\label{tab:generation}
\footnotesize
\setlength{\tabcolsep}{2.6pt}
\begin{tabular}{ll|cccc|cccc}
\toprule
\multirow{2}{*}{\textbf{Dataset}} & \multirow{2}{*}{\textbf{Retriever}} & \multicolumn{4}{c|}{\textbf{LLaVA-Med (7B)}} & \multicolumn{4}{c}{\textbf{CheXagent (8B)}} \\
\cmidrule(lr){3-6}\cmidrule(lr){7-10}
 & & B-4 & R-L & MTR & CIDEr & B-4 & R-L & MTR & CIDEr \\
\midrule
\rowcolor{thirdbg}
 & ViT Only       & 0.134 & 0.298 & 0.167 & 0.241 & 0.147 & 0.312 & 0.178 & 0.263 \\
\rowcolor{secondbg}
 & Adapter        & \underline{0.156} & \underline{0.332} & \underline{0.189} & \textbf{0.298} & \underline{0.171} & \underline{0.348} & \underline{0.203} & \textbf{0.321} \\
\rowcolor{bestbg}
\multirow{-3}{*}{IU X-Ray} & \textbf{\name} & \textbf{0.163} & \textbf{0.341} & \textbf{0.194} & \underline{0.294} & \textbf{0.178} & \textbf{0.357} & \textbf{0.208} & \underline{0.318} \\
\midrule
\rowcolor{thirdbg}
 & ViT Only       & 0.098 & 0.247 & 0.138 & 0.168 & 0.108 & 0.261 & 0.149 & 0.184 \\
\rowcolor{secondbg}
 & Adapter        & \underline{0.121} & \underline{0.281} & \underline{0.163} & \textbf{0.221} & \underline{0.134} & \underline{0.298} & \underline{0.176} & \textbf{0.242} \\
\rowcolor{bestbg}
\multirow{-3}{*}{MIMIC-CXR} & \textbf{\name} & \textbf{0.127} & \textbf{0.289} & \textbf{0.168} & \underline{0.218} & \textbf{0.141} & \textbf{0.307} & \textbf{0.182} & \underline{0.239} \\
\bottomrule
\end{tabular}
\par\smallskip
\begin{minipage}{\linewidth}
\normalsize\noindent The ordering depends on the metric: \name improves BLEU-4, ROUGE-L and METEOR over Adapter, whereas Adapter retains a small CIDEr lead. Retrieval therefore changes report quality without producing a uniform gain across lexical measures.
\end{minipage}\par
\vspace{-3mm}
\end{table}

\begin{table}[!ht]
\centering
\caption{Clinical efficacy proxies on MIMIC-CXR (5 seeds; std $\leq0.005$): CheXbert micro-F1 and RadGraph-F1 of generated reports. $^{\dagger}$, marks and tints as in Table~\ref{tab:main}.}
\label{tab:clinical}
\footnotesize
\setlength{\tabcolsep}{4pt}
\begin{tabular}{l|cc|cc}
\toprule
\multirow{2}{*}{\textbf{Retriever}} & \multicolumn{2}{c|}{\textbf{LLaVA-Med (7B)}} & \multicolumn{2}{c}{\textbf{CheXagent (8B)}} \\
\cmidrule(lr){2-3}\cmidrule(lr){4-5}
 & CheXbert-F1 & RadGraph-F1 & CheXbert-F1 & RadGraph-F1 \\
\midrule
\rowcolor{thirdbg}
ViT Only        & 0.298 & 0.181 & 0.321 & 0.194 \\
\rowcolor{secondbg}
Adapter         & \underline{0.334} & \underline{0.203} & \underline{0.356} & \underline{0.217} \\
\rowcolor{bestbg}
\textbf{\name}  & \textbf{0.352}$^{\dagger}$ & \textbf{0.214}$^{\dagger}$ & \textbf{0.373}$^{\dagger}$ & \textbf{0.229}$^{\dagger}$ \\
\bottomrule
\end{tabular}
\par\smallskip
\begin{minipage}{\linewidth}
\normalsize\noindent The finding-based proxies favor \name\ for both generators, extending the comparison beyond surface wording. These scores measure agreement between extracted findings and references, without a clinical-outcome endpoint.
\end{minipage}\par
\vspace{-3mm}
\end{table}

\begin{table}[!ht]
\centering
\begin{minipage}[t]{0.485\linewidth}
\centering
\caption{Computational cost with the shared frozen ViT-L/16 (ChestX-ray14, batch 32, single A100, full budget). Latency: readout and top-$k$ search over cached archive readouts at batch size 1, excluding the shared ViT pass. Memory: peak allocation during adaptation.}
\label{tab:cost}
\scriptsize
\setlength{\tabcolsep}{3pt}
\begin{tabular}{l|rrrr}
\toprule
\textbf{Method} & \textbf{\#Params} & \textbf{s/epoch} & \textbf{Lat. (ms)} & \textbf{Mem. (MB)} \\
\midrule
ViT Only  & 0     & n/a  & 6.2 & 3{,}024 \\
Adapter   & 525K  & 44.6 & 7.1 & 3{,}728 \\
LoRA ($r{=}16$) & 1.57M & 52.3 & 7.0 & 3{,}810 \\
\textbf{\name} & 60 & 71.3 & 7.6 & 3{,}582 \\
\bottomrule
\end{tabular}
\end{minipage}
\hfill
\begin{minipage}[t]{0.495\linewidth}
\centering
\caption{Embedding diagnostics (5 seeds, adapted-entry std $\leq0.009$; frozen ViT Only is deterministic). Jac@10: mean query--neighbor label-set Jaccard. MURA Jac@10 is derived from P@10 in Table~\ref{tab:full} by the singleton-label identity; n/a: no corresponding P@10 is reported. Sil$^{\mathrm{SL}}$: silhouette on the single-label subset.}
\label{tab:geometry}
\scriptsize
\setlength{\tabcolsep}{3pt}
\begin{tabular}{l|cc|cc}
\toprule
\multirow{2}{*}{\textbf{Method}} & \multicolumn{2}{c|}{\textbf{ChestX-ray14}} & \multicolumn{2}{c}{\textbf{MURA}} \\
\cmidrule(lr){2-3}\cmidrule(lr){4-5}
 & Jac@10 & Sil$^{\mathrm{SL}}$ & Jac@10 & Sil$^{\mathrm{SL}}$ \\
\midrule
ViT Only & 0.218 & 0.182 & 0.456 & 0.157 \\
\rowcolor{thirdbg}
Adapter  & 0.241 & 0.228 & 0.548 & 0.213 \\
\rowcolor{secondbg}
Adapter-XL & \underline{0.249} & \underline{0.239} & \underline{0.558} & \underline{0.221} \\
Quadratic & 0.238 & 0.224 & n/a & 0.208 \\
\rowcolor{bestbg}
\textbf{\name} & \textbf{0.264} & \textbf{0.273} & \textbf{0.581} & \textbf{0.249} \\
\bottomrule
\end{tabular}
\end{minipage}
\par\smallskip
\begin{minipage}{\linewidth}
\normalsize\noindent The cost table separates training from cached retrieval: fewer trainable parameters do not imply a faster epoch, and the reported latency excludes the shared encoder. The neighboring geometry table describes representation structure rather than deployment cost.
\end{minipage}\par
\vspace{-3mm}
\end{table}

\begin{table}[!ht]
\centering
\caption{Retrieval metrics (5-seed means; adapted-entry stds within $\pm0.009$, omitted for space). Marks and tints as in Table~\ref{tab:main}.}
\label{tab:full}
\footnotesize
\setlength{\tabcolsep}{4pt}
\begin{tabular}{l|cccccc}
\toprule
\textbf{Method} & \textbf{P@5} & \textbf{P@10} & \textbf{MAP@10} & \textbf{MRR} & \textbf{NDCG@5} & \textbf{NDCG@10} \\
\midrule
\multicolumn{7}{c}{\textit{ChestX-ray14}} \\
\midrule
ViT Only        & 0.312 & 0.287 & 0.278 & 0.363 & 0.325 & 0.307 \\
Linear Head     & 0.378 & 0.351 & 0.341 & 0.434 & 0.389 & 0.368 \\
Adapter         & 0.401 & 0.375 & 0.368 & 0.462 & 0.415 & 0.396 \\
\rowcolor{thirdbg}
Adapter-L       & 0.412 & 0.386 & 0.372 & \underline{0.465} & 0.421 & 0.401 \\
\rowcolor{secondbg}
Adapter-XL      & \underline{0.414} & \underline{0.388} & \underline{0.387} & \textbf{0.468} & \underline{0.433} & \underline{0.414} \\
\rowcolor{bestbg}
\textbf{\name}  & \textbf{0.428} & \textbf{0.404} & \textbf{0.399} & 0.458 & \textbf{0.443} & \textbf{0.425} \\
\midrule
\multicolumn{7}{c}{\textit{MURA}} \\
\midrule
ViT Only        & 0.481 & 0.456 & 0.451 & 0.529 & 0.494 & 0.475 \\
Linear Head     & 0.549 & 0.523 & 0.513 & 0.594 & 0.562 & 0.541 \\
Adapter         & 0.573 & 0.548 & 0.546 & 0.618 & 0.587 & 0.567 \\
\rowcolor{thirdbg}
Adapter-L       & 0.581 & 0.555 & 0.556 & 0.625 & 0.598 & 0.580 \\
\rowcolor{secondbg}
Adapter-XL      & \underline{0.584} & \underline{0.558} & \underline{0.562} & \underline{0.632} & \underline{0.604} & \underline{0.584} \\
\rowcolor{bestbg}
\textbf{\name}  & \textbf{0.601} & \textbf{0.581} & \textbf{0.579} & \textbf{0.649} & \textbf{0.618} & \textbf{0.599} \\
\bottomrule
\end{tabular}
\par\smallskip
\begin{minipage}{\linewidth}
\normalsize\noindent The rows distinguish the gains: \name\ leads precision and NDCG, while ChestX-ray14 MRR favors Adapter-XL ($0.468$ against $0.458$).
\end{minipage}\par
\vspace{-3mm}
\end{table}

\FloatBarrier
\section{Qualitative Examples}
\label{app:qualitative}

Figures~\ref{fig:qual_chest} and~\ref{fig:qual_mura} show representative retrieval-grounded reports, each displaying a query image and the report generated from \name-retrieved context.

The MURA example illustrates retrieval-grounded reporting beyond chest imaging; the quantitative generation evaluation of Section~\ref{sec:generation} covers the chest datasets, where paired report corpora and specialized generators exist.

\begin{figure}[!ht]
\centering
\begin{subfigure}[t]{0.96\linewidth}
\centering
\includegraphics[width=\linewidth]{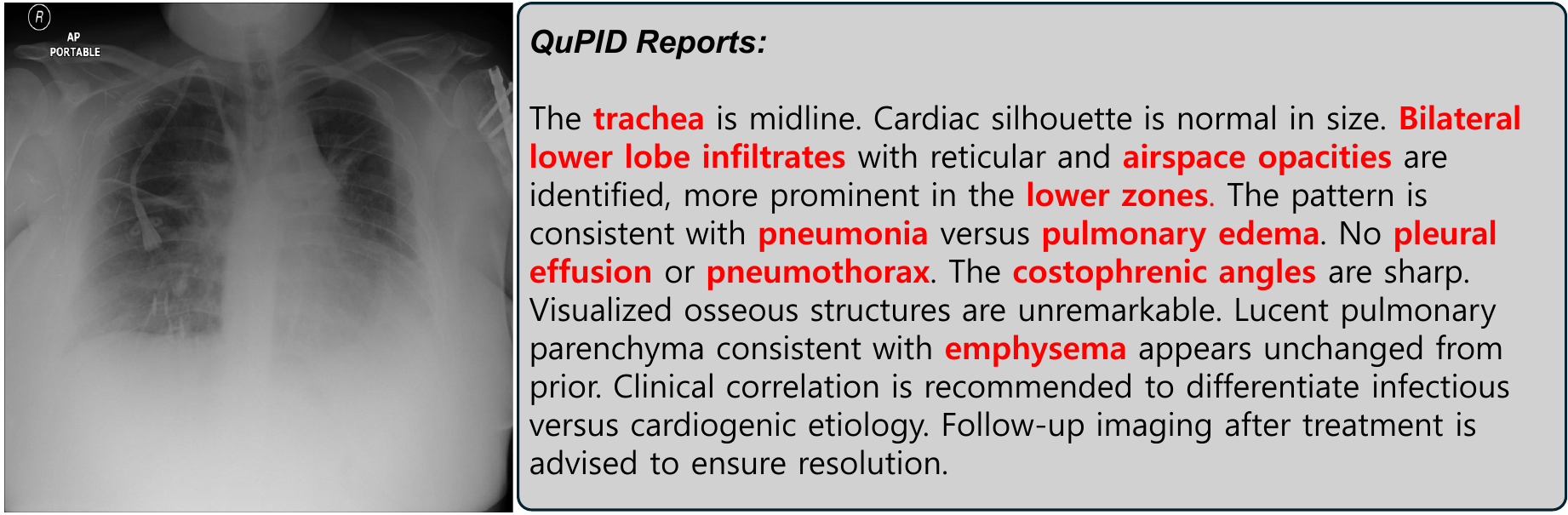}
\caption{Chest X-ray query.}
\label{fig:qual_chest}
\end{subfigure}\par\medskip
\begin{subfigure}[t]{0.96\linewidth}
\centering
\includegraphics[width=\linewidth]{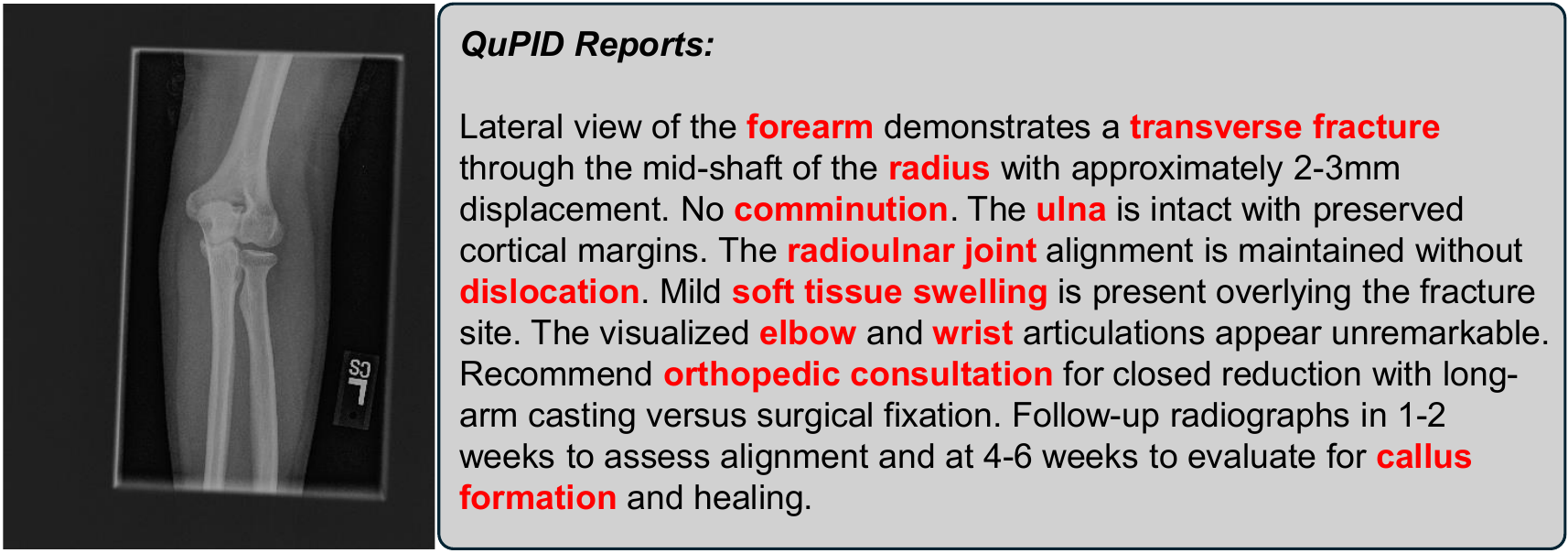}
\caption{Forearm radiograph query.}
\label{fig:qual_mura}
\end{subfigure}
\caption{Retrieval-grounded reports: each panel shows the query image and the report a frozen LVLM generates from \name's top-5 retrieved cases. The retrieved cases in (a) are pathologically consistent and the generator narrows the differential to pneumonia versus pulmonary edema; in (b) they support a displaced transverse fracture of the radius.}
\par\smallskip
\begin{minipage}{\linewidth}
\normalsize\noindent These cases illustrate how retrieved evidence enters a generated report. They complement the aggregate evaluations above, but two selected examples cannot estimate how often findings are supported.
\end{minipage}\par
\vspace{-3mm}
\end{figure}

\section{Robustness and Trainability Diagnostics}
\label{app:diagnostics}

\BfPara{Sensitivity of amplitude encoding to feature perturbations}
Amplitude encoding renormalizes the feature vector, raising the concern that small perturbations of the classical features could alter all amplitudes and destabilize retrieval.
Table~\ref{tab:perturbation} adds zero-mean Gaussian noise scaled to the root-mean-square feature magnitude (perturbation $\sigma\cdot\mathrm{RMS}(\mathbf{h})\cdot\boldsymbol{\varepsilon}$ with $\boldsymbol{\varepsilon}\sim\mathcal{N}(0,I)$) to the frozen features before L2 normalization and measures retrieval quality. \name degrades at a rate comparable to the classical adapter across the tested range, indicating that the encoding does not disproportionately amplify feature noise at the operating dimensionality; the potential concern is not supported by the measurements at this scale.

\begin{table}[!ht]
\centering
\begin{minipage}[t]{0.545\linewidth}
\centering
\caption{Retrieval under feature perturbation on ChestX-ray14 (P@5, 5 seeds; std $\leq0.009$). Relative noise $\sigma$ is applied before L2 normalization. Marks and tints as in Table~\ref{tab:main}.}
\label{tab:perturbation}
\footnotesize
\setlength{\tabcolsep}{3pt}
\begin{tabular}{l|cccc}
\toprule
\textbf{Method} & $\sigma{=}0$ & $\sigma{=}0.01$ & $\sigma{=}0.05$ & $\sigma{=}0.1$ \\
\midrule
\rowcolor{thirdbg}
ViT Only & 0.312 & 0.309 & 0.298 & 0.281 \\
\rowcolor{secondbg}
Adapter  & \underline{0.401} & \underline{0.397} & \underline{0.382} & \underline{0.356} \\
\rowcolor{bestbg}
\textbf{\name} & \textbf{0.428} & \textbf{0.424} & \textbf{0.409} & \textbf{0.383} \\
\bottomrule
\end{tabular}
\end{minipage}
\hfill
\begin{minipage}[t]{0.43\linewidth}
\centering
\caption{Variance of parameter-shift gradients at initialization ($\times10^{-3}$, ChestX-ray14, 5 seeds).}
\label{tab:gradvar}
\footnotesize
\setlength{\tabcolsep}{3pt}
\begin{tabular}{l|cccc}
\toprule
 & $L{=}1$ & $L{=}3$ & $L{=}6$ & $L{=}12$ \\
\midrule
$\mathrm{Var}\big[\partial\mathcal{L}/\partial\theta\big]$ & 2.1 & 1.8 & 1.4 & 0.9 \\
\bottomrule
\end{tabular}
\end{minipage}
\par\smallskip
\begin{minipage}{\linewidth}
\normalsize\noindent The two diagnostics separate input sensitivity from optimization signal. A drop under feature noise concerns retrieval robustness; nonzero gradient variance at initialization addresses a different question and does not certify optimization success.
\end{minipage}\par
\vspace{-3mm}
\end{table}

\begin{figure}[!ht]
\centering
\includegraphics[width=\linewidth]{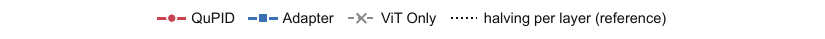}\\[2pt]
\begin{subfigure}[t]{0.48\linewidth}
\centering
\includegraphics[width=\linewidth]{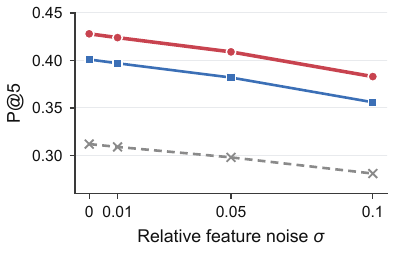}
\caption{Feature noise.}
\end{subfigure}\hfill
\begin{subfigure}[t]{0.48\linewidth}
\centering
\includegraphics[width=\linewidth]{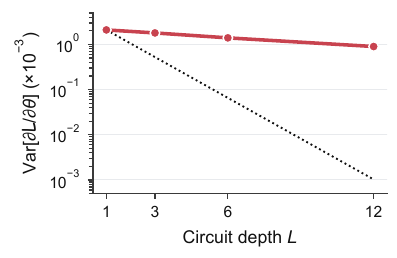}
\caption{Gradient variance against depth.}
\end{subfigure}
\caption{Diagnostics for feature sensitivity and gradient decay in this architecture (ChestX-ray14, 5 seeds); the design range in (b) is $L\leq6$.}
\label{fig:diagnostics}
\vspace{-3mm}
\end{figure}

\BfPara{Gradient variance versus depth}
Table~\ref{tab:gradvar} reports the empirical variance of parameter-shift gradients of the contrastive loss at initialization ($\boldsymbol{\theta},\boldsymbol{\alpha}\sim\mathcal{U}(-\pi,\pi)$; variance taken over parameters, 5 seeds, and 20 mini-batches) as circuit depth grows; $L=12$ is included purely as a stress test beyond the $L\leq6$ design range.
Because all asymptotic barren plateau statements concern scaling in $n_q$, which is fixed at 10 here, the informative comparison is the reference decay rate: the observed factor of $2.3$ from $L=1$ to $L=12$ is far from the multiplicative collapse per layer that global observables exhibit, consistent with the local one- and two-body readout; at the default $L=3$ the gradient signal sits well within the trainable range.

Two further checks examine qubit-count dependence and parameter updates.
Varying $n_q$ from $6$ to $12$ with the input construction of Table~\ref{tab:qubits}, the gradient variance at initialization moves from $3.9$ to $2.2$ ($\times10^{-3}$), a factor of $1.8$, against $64$ for a $2^{-n_q}$ collapse.
After adaptation, the mean absolute parameter change is $0.68$ rad for the rotation angles and $0.41$ for the re-uploading scales, $98\%$ of the $60$ parameters move by more than $0.05$ rad, and none stays within $0.01$ of its initialization, so the readout is not an untrained random feature map.

\section{Circuit Geometry, Sensitivity and Transfer}
\label{app:extra_exp}

Section~\ref{sec:ablation} removes one design choice at a time inside a fixed circuit.
This appendix instead varies the circuit itself, then asks whether the reported gain depends on hyperparameters or on evaluating at the site the model was adapted on.
These studies evaluate potential sensitivities of the method and report both saturation and performance gains.

\subsection{Circuit design: qubits, entangling gates, and observables}
\label{app:circuit}

\begin{figure}[!ht]
\centering
\includegraphics[width=\linewidth]{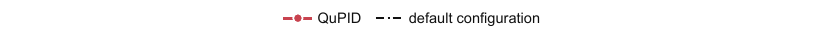}\\[2pt]
\begin{subfigure}[t]{0.32\linewidth}
\centering
\includegraphics[width=\linewidth]{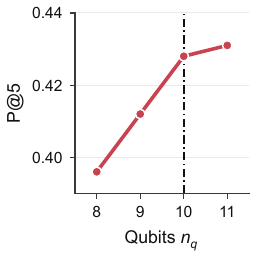}
\caption{Qubit count.}
\end{subfigure}\hfill
\begin{subfigure}[t]{0.32\linewidth}
\centering
\includegraphics[width=\linewidth]{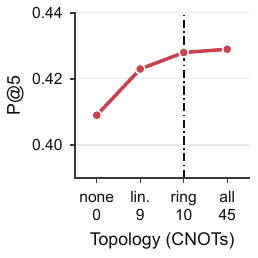}
\caption{Entangling topology.}
\end{subfigure}\hfill
\begin{subfigure}[t]{0.32\linewidth}
\centering
\includegraphics[width=\linewidth]{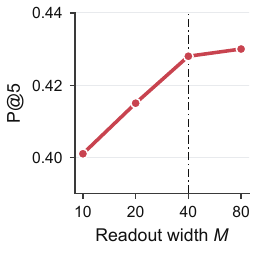}
\caption{Readout width $M$.}
\end{subfigure}
\caption{Three circuit-design axes on ChestX-ray14 (P@5, 5 seeds); dash-dotted lines mark the default configuration, and (b) lists CNOTs per layer under each topology.}
\label{fig:circuit}
\vspace{-3mm}
\end{figure}

\begin{table}[!ht]
\centering
\begin{minipage}[t]{0.315\linewidth}
\centering
\caption{Qubit count. $n_q$ is fixed by the feature dimension, varied here by projection or concatenation. Marks and tints as in Table~\ref{tab:main}.}
\label{tab:qubits}
\scriptsize
\setlength{\tabcolsep}{3pt}
\begin{tabular}{l|cc}
\toprule
\textbf{Input} & $n_q$ & \textbf{P@5} \\
\midrule
PCA-256      &  8 & 0.396 \\
\rowcolor{thirdbg}
PCA-512      &  9 & 0.412 \\
\rowcolor{secondbg}
native 1024  & 10 & 0.428 \\
\rowcolor{bestbg}
concat 2048  & 11 & \textbf{0.431} \\
\bottomrule
\end{tabular}
\end{minipage}
\hfill
\begin{minipage}[t]{0.325\linewidth}
\centering
\caption{Entangling topology at $n_q{=}10$ and the CNOT count each one needs per layer. Marks and tints as in Table~\ref{tab:main}.}
\label{tab:topology}
\scriptsize
\setlength{\tabcolsep}{3pt}
\begin{tabular}{l|cc}
\toprule
\textbf{Topology} & \textbf{CNOT} & \textbf{P@5} \\
\midrule
none        &  0 & 0.409 \\
\rowcolor{thirdbg}
linear      &  9 & 0.423 \\
\rowcolor{secondbg}
ring        & 10 & 0.428 \\
\rowcolor{bestbg}
all-to-all  & 45 & \textbf{0.429} \\
\bottomrule
\end{tabular}
\end{minipage}
\hfill
\begin{minipage}[t]{0.325\linewidth}
\centering
\caption{Readout width; observable families are added in the listed order. Marks and tints follow Table~\ref{tab:main}.}
\label{tab:readoutwidth}
\scriptsize
\setlength{\tabcolsep}{3pt}
\begin{tabular}{l|cc}
\toprule
\textbf{Observables} & $M$ & \textbf{P@5} \\
\midrule
$Z$              & 10 & 0.401 \\
\rowcolor{thirdbg}
$Z,X$            & 20 & 0.415 \\
\rowcolor{secondbg}
$+ZZ,XX$         & 40 & 0.428 \\
\rowcolor{bestbg}
$+Y,YY$          & 80 & \textbf{0.430} \\
\bottomrule
\end{tabular}
\end{minipage}
\par\smallskip
\begin{minipage}{\linewidth}
\normalsize\noindent The design sweep changes feature access, connectivity and measurement width separately. Additional qubits also change the input representation, whereas additional observables change the readout; their gains should not be attributed to one common source of capacity.
\end{minipage}\par
\vspace{-3mm}
\end{table}

Figure~\ref{fig:circuit} plots all three sweeps and Tables~\ref{tab:qubits} to~\ref{tab:readoutwidth} give the underlying numbers.
Three readings follow.
Qubit count is not a free parameter in our design, since amplitude encoding ties it to the feature dimension; Table~\ref{tab:qubits} varies it by reducing the backbone feature dimension by projection or increasing it by concatenating features from two backbones, and the curve saturates once the native 1024 dimensions are used, so the $+0.003$ available at $n_q=11$ costs a doubling of state-vector width for a gain within the observed variation across seeds.
Depth is the one axis on which we deliberately stay small: at fixed qubit count, deeper re-uploading of high-dimensional inputs is known to degrade predictive performance toward chance~\citep{wang2025predictive}, and our depth ablation saturates by $L=3$, so the design is wide in its readout ($M=40$ observables) rather than deep in its encoding.
Entanglement contributes at low gate cost (Table~\ref{tab:topology}): removing CNOTs entirely costs $0.019$ P@5, while ring connectivity recovers all but $0.001$ of what all-to-all achieves using $10$ gates per layer instead of $45$.
This is the concrete answer to whether the correlations introduced by the circuit contribute to retrieval performance, and it is also why we did not pursue richer topologies.
We also evaluate the choice of rotation axis: replacing $R_Y$ by $R_X$ gives $0.421$ P@5 and replacing it by $R_Z$ gives $0.404$, while adding an $R_Z$ layer after $R_Y$ ($90$ parameters) gives $0.429$; the default achieves a nearby mean with fewer parameters.
Simulation cost follows the qubit sweep as the exponent predicts: relative to $n_q=10$, per-epoch time is $0.41$, $0.58$, $1.00$, $2.9$ and $10.4$ at $n_q=6,8,10,12,14$ and peak memory $0.62$, $0.71$, $1.00$, $1.9$ and $5.6$, which is why $n_q$ is set by the feature dimension rather than tuned upward.
Readout expansion beyond two-body terms gains only $0.002$ P@5 (Table~\ref{tab:readoutwidth}), consistent with the factorization view of Section~\ref{sec:readout}. For real-valued states, odd-$Y$ Pauli strings have zero expectation, whereas even-$Y$ terms such as $Y_jY_k$ can be nonzero.

\subsection{Hyperparameter sensitivity}
\label{app:hparam}

\begin{figure}[!ht]
\centering
\includegraphics[width=\linewidth]{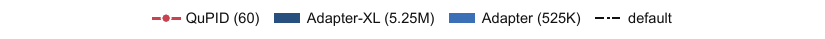}\\[2pt]
\begin{subfigure}[t]{0.32\linewidth}
\centering
\includegraphics[width=\linewidth]{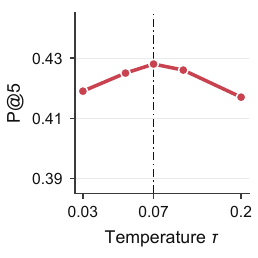}
\caption{Temperature $\tau$.}
\end{subfigure}\hfill
\begin{subfigure}[t]{0.32\linewidth}
\centering
\includegraphics[width=\linewidth]{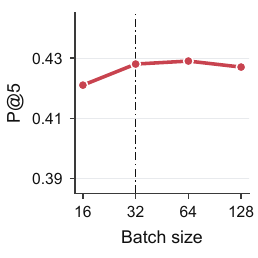}
\caption{Batch size.}
\end{subfigure}\hfill
\begin{subfigure}[t]{0.32\linewidth}
\centering
\includegraphics[width=\linewidth]{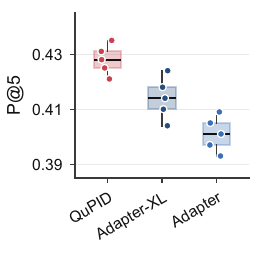}
\caption{Per-seed P@5 at the full budget.}
\end{subfigure}
\caption{Sensitivity and spread (ChestX-ray14, 5 seeds); dash-dotted lines mark default settings.}
\label{fig:sensitivity}
\vspace{-3mm}
\end{figure}

Because all hyperparameters were fixed a priori on a disjoint pilot subset (Section~\ref{sec:setup}), we examine whether the chosen settings lie near a narrow performance peak.
Figure~\ref{fig:sensitivity} examines this sensitivity together with variability across seeds.
Panels (a) and (b) sweep the two settings the objective is most sensitive to and find plateaus in both: P@5 varies by at most $0.011$ over a sevenfold range of temperature and by $0.008$ over an eightfold range of batch size, with the defaults inside the flat region rather than at its maximum.
We take this as evidence that the reported results do not rely on additional tuning, not as evidence that the method is insensitive to every hyperparameter.

Panel (c) makes the seed-level spread explicit for the comparison underlying the main empirical claim.
The \name and Adapter-XL distributions are close enough that individual seeds interleave, which is exactly why the significance protocol of Section~\ref{sec:setup} operates on per-query paired resamples rather than on the seed means; the mean gap alone would overstate their separation.

\subsection{Cross-site transfer}
\label{app:backbone}

\begin{table}[!ht]
\centering
\caption{Cross-site zero-shot transfer (P@5, 5 seeds; std $\leq0.009$). Modules are adapted on ChestX-ray14 and evaluated without further training. Marks and tints as in Table~\ref{tab:main}, by column.}
\label{tab:crosssite}
\footnotesize
\setlength{\tabcolsep}{4pt}
\begin{tabular}{l|TSB}
\toprule
\textbf{Evaluation} & \textbf{frozen} & \textbf{cls.} & \textbf{\name} \\
\midrule
ChestX-ray14 (in-domain) & 0.312 & 0.401 & \textbf{0.428} \\
CheXpert (zero-shot)     & 0.289 & 0.301 & \textbf{0.318} \\
\bottomrule
\end{tabular}
\vspace{-3mm}
\end{table}

Table~\ref{tab:crosssite} examines the open question of whether a module adapted at one site helps at another.
Adapting on ChestX-ray14 and evaluating on CheXpert without further training, both modules stay above the frozen baseline: \name gains $0.029$ and the reported adapter $0.012$, leaving a $0.017$ margin between them.
We read this as encouraging rather than settled.
The margin between adapted modules falls from $0.027$ to $0.017$, retaining about $63\%$ of its in-domain value; the gains over frozen features fall more sharply. A single site pair cannot establish that this ordering holds in general.
Cross-site adaptation therefore remains future work, informed by this measurement.

\section{Noise-Aware Simulation}
\label{app:noise}

Every result in the main text comes from a noiseless statevector simulator, the prevailing convention in the QML literature (Appendix~\ref{app:qml_trends}).
The separate noise study evaluates an optional quantum-hardware realization of the learned readout. It measures sensitivity to the specified channels and sampling budgets; the GPU deployment uses exact statevector expectations.
This appendix therefore reports two analyses, a post-training sensitivity study under standard noise channels and a shot-budget calculation for finite-sample readout estimation.
Both are simulator-level diagnostics, not hardware benchmarks, and support no robustness claim about physical devices.

\subsection{Noise channels}
\label{app:noise_channels}

On hardware the state is described by a density matrix $\rho$ rather than a state vector, and imperfections act as completely positive trace-preserving maps $\mathcal{E}(\rho)=\sum_k K_k\rho K_k^{\dagger}$ with $\sum_k K_k^{\dagger}K_k=I$.
We evaluate the four channels standard in this literature, applied independently to each qubit after every circuit layer with strength $p$:
\begin{align}
\text{depolarizing:}\quad &\mathcal{E}(\rho)=(1-p)\rho+\tfrac{p}{3}\big(X\rho X+Y\rho Y+Z\rho Z\big),\\
\text{bit flip:}\quad &\mathcal{E}(\rho)=(1-p)\rho+p\,X\rho X,\\
\text{phase flip:}\quad &\mathcal{E}(\rho)=(1-p)\rho+p\,Z\rho Z,\\
\text{amplitude damping:}\quad &\mathcal{E}(\rho)=K_0\rho K_0^{\dagger}+K_1\rho K_1^{\dagger},
\end{align}
with $K_0=\mathrm{diag}(1,\sqrt{1-p})$ and $K_1=\sqrt{p}\,|0\rangle\langle1|$.
Depolarizing noise models unstructured gate error, the two flip channels model Pauli errors in the computational and phase bases, and amplitude damping models energy relaxation toward $|0\rangle$; unlike the first three it is non-unital, so it does not preserve the maximally mixed state and can bias expectations rather than merely shrinking them.

Two structural remarks explain what to expect.
The evaluated Pauli channels act multiplicatively on Pauli components at the channel location, $\langle P\rangle\mapsto(1-c_Pp)\langle P\rangle$, where $c_P$ depends on the observable as well as the channel; the readout direction can therefore change at first order.
Because the cosine of Section~\ref{sec:readout} is scale-invariant, a common multiplicative factor cancels. Observable-dependent factors and noise interleaved with noncommuting gates need not cancel, so the ranking effect must be measured.
Amplitude damping has no such protection, since it moves the readout rather than scaling it.
The measured pattern below is consistent with this analysis, and the norm floor $c$ of Proposition~\ref{prop:generalization} is the quantity that controls how much contraction cosine similarity can tolerate before the similarity becomes ill-conditioned.

\subsection{Post-training noise sensitivity}
\label{app:noise_results}

We isolate inference-time sensitivity: the model is trained once on the noiseless simulator, its 60 parameters are frozen, and only evaluation is perturbed.
Archive and query readouts pass through the same channel, so the study measures degradation of the retrieval geometry rather than a train-test mismatch.
Table~\ref{tab:noise} reports P@5 on ChestX-ray14 across channel strengths, using the density-matrix simulator (\texttt{default.mixed}) with the noise hook of Listing~\ref{lst:qupid_pennylane}.

\begin{table}[!ht]
\centering
\caption{Post-training noise sensitivity on ChestX-ray14 (P@5, 5 seeds, std $\leq0.007$). The trained model is frozen and noise is injected only at evaluation, per qubit after every layer. The frozen-baseline P@5 of 0.312 marks zero accuracy gain over frozen retrieval.}
\label{tab:noise}
\footnotesize
\setlength{\tabcolsep}{4pt}
\begin{tabular}{l|ccccc}
\toprule
\textbf{Channel} & $p{=}0$ & $p{=}0.001$ & $p{=}0.005$ & $p{=}0.01$ & $p{=}0.05$ \\
\midrule
Depolarizing      & 0.428 & 0.428 & 0.426 & 0.423 & 0.404 \\
Bit flip          & 0.428 & 0.427 & 0.424 & 0.419 & 0.392 \\
Phase flip        & 0.428 & 0.428 & 0.427 & 0.426 & 0.418 \\
Amplitude damping & 0.428 & 0.427 & 0.423 & 0.417 & 0.386 \\
\bottomrule
\end{tabular}
\vspace{-3mm}
\end{table}

Degradation is gradual across all four channels: at $p=0.01$, a per-qubit per-layer error rate comparable to current two-qubit gate fidelities, the largest drop is $0.011$ P@5, and even at the deliberately pessimistic $p=0.05$ the adapted readout stays well above the frozen baseline of $0.312$.
Phase flip causes the smallest performance degradation in this experiment, while amplitude damping and bit flip cause the largest. Non-unital damping can bias expectations, and bit flips alter computational-basis populations; commutation with final observables alone does not determine the effect of noise inserted between circuit layers.
We read these numbers as a sensitivity diagnostic showing that the learned geometry is not excessively sensitive to the modeled perturbations, not as evidence of device-level robustness: the study omits crosstalk, coherent and correlated errors, readout error, and hardware-specific gate decomposition, all of which a real device would add.

\begin{figure}[!ht]
\centering
\includegraphics[width=\linewidth]{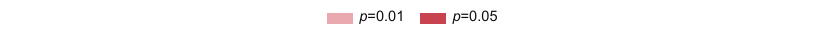}\\[2pt]
\includegraphics[width=0.48\linewidth]{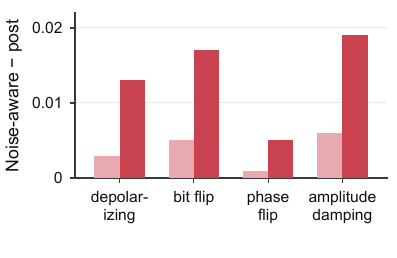}
\caption{Gain of noise-aware training over post-training injection at two channel strengths (P@5, ChestX-ray14, 5 seeds; absolute values in Table~\ref{tab:noise_aware}).}
\label{fig:noiseaware}
\vspace{-3mm}
\end{figure}

\subsection{Noise-aware training}
\label{app:noise_aware}

Figure~\ref{fig:noiseaware} plots the performance recovery that Table~\ref{tab:noise_aware} tabulates.
The study above freezes a model trained without noise, which answers whether the learned geometry \emph{tolerates} noise but not whether the 60 parameters can be \emph{learned} inside a noisy channel.
The latter is the question that matters for eventual on-device adaptation, and it is not implied by the former: noise could in principle flatten the loss surface enough to stall optimization even where it barely perturbs a fixed readout.
Table~\ref{tab:noise_aware} repeats end-to-end adaptation with the same channel strength during training and evaluation.

\begin{table}[!ht]
\centering
\caption{Noise-aware training versus post-training injection (ChestX-ray14, P@5, 5 seeds, std $\leq0.008$). Each entry trains and evaluates at the same channel strength; \emph{post} repeats the frozen-model numbers of Table~\ref{tab:noise} for comparison. $\Delta$ is the recovery from training through the channel.}
\label{tab:noise_aware}
\footnotesize
\setlength{\tabcolsep}{4pt}
\begin{tabular}{l|ccc|ccc}
\toprule
\multirow{2}{*}{\textbf{Channel}} & \multicolumn{3}{c|}{$p=0.01$} & \multicolumn{3}{c}{$p=0.05$} \\
\cmidrule(lr){2-4}\cmidrule(lr){5-7}
 & post & noise-aware & $\Delta$ & post & noise-aware & $\Delta$ \\
\midrule
Depolarizing      & 0.423 & 0.426 & $+0.003$ & 0.404 & 0.417 & $+0.013$ \\
Bit flip          & 0.419 & 0.424 & $+0.005$ & 0.392 & 0.409 & $+0.017$ \\
Phase flip        & 0.426 & 0.427 & $+0.001$ & 0.418 & 0.423 & $+0.005$ \\
Amplitude damping & 0.417 & 0.423 & $+0.006$ & 0.386 & 0.405 & $+0.019$ \\
\bottomrule
\end{tabular}
\vspace{-3mm}
\end{table}

Two patterns are worth naming.
Training through the channel recovers part of the loss in every case, and the recovery is largest exactly where the post-training performance degradation was largest, which is what one expects if the circuit is re-allocating its rotations toward observables the channel leaves better conditioned rather than merely averaging noise away.
The recovery is nonetheless partial: no configuration returns to the noiseless $0.428$, so noise-aware training mitigates rather than removes the cost, and we do not claim that a noisy channel is harmless or that it acts as a useful regularizer.
Gradients remain exact here, since the parameter-shift rule applies to noisy-channel expectations as it does to the unitary case; what changes is the differentiated value, not the estimator.

\subsection{Shot budget for hardware readout}
\label{app:noise_shots}

Condition on the learned circuit and a fixed collection of $K$ inputs. For each input $i$ and observable $m$, estimate $z_{i,m}$ by the mean $\hat z_{i,m}$ of $S$ independent shots $Y_s\in\{-1,+1\}$ with expectation $z_{i,m}$. Hoeffding's inequality for this range of length $2$ gives
\[
\Pr\bigl(|\hat z_{i,m}-z_{i,m}|>\varepsilon\bigr)
\leq2\exp(-S\varepsilon^2/2).
\]
A union bound over the $KM$ estimates yields $\max_{i,m}|\hat z_{i,m}-z_{i,m}|\leq\varepsilon$ with probability at least $1-\delta$ whenever $S\geq2\log(2KM/\delta)/\varepsilon^2$. Independence between different observables or inputs is not needed for this union bound.
Assume the exact readouts in this collection satisfy $\|\mathbf z_i\|\geq c>0$. On the event above, let $r=\sqrt{M}\varepsilon<c$, so $\|\hat{\mathbf z}_i-\mathbf z_i\|\leq r$ and $\hat{\mathbf z}_i\neq0$. The exact normalization inequality
\[
\left\|\frac{\hat{\mathbf z}_i}{\|\hat{\mathbf z}_i\|}
-\frac{\mathbf z_i}{\|\mathbf z_i\|}\right\|
\leq\frac{2\|\hat{\mathbf z}_i-\mathbf z_i\|}{\|\mathbf z_i\|}
\leq\frac{2r}{c}
\]
implies that each query--candidate cosine changes by at most $4r/c$. The difference between two such scores changes by at most $8r/c$. For $0<\gamma<2$, choose $\varepsilon=c\gamma/(8\sqrt{M})$, which also ensures $r<c$. Every comparison whose exact score gap is greater than $\gamma$ then retains its ordering with probability at least $1-\delta$ if
\begin{equation}
\label{eq:shots}
S\;\ge\;\frac{128\,M\log(2KM/\delta)}{(c\gamma)^{2}} .
\end{equation}
For one fixed query and two candidates, take $K=3$. For one query and an archive of $N$ items, take $K=N+1$ to cover all their readouts simultaneously; the same event protects every candidate comparison with a gap greater than $\gamma$. The bound does not protect smaller gaps.
Under the separate-observable sampling protocol, caching the archive costs $NMS$ shots once and each new query costs $MS$ shots at fixed $S$. For a simultaneous archive-wide guarantee, however, the sufficient $S$ in \eqref{eq:shots} depends logarithmically on $N$.
This is a sufficient worst-case readout guarantee, conditional on the norm floor and score gaps, not a numerical prediction of average P@5. Without specifying $c$, $\gamma$, $\delta$ and $K$, it does not imply $S=10^4$. It also assumes noiseless circuit execution; gate noise and hardware latency require separate analysis.

\subsection{Finite-shot readout}
\label{app:noise_shots_exp}

Separately from channel noise, a device returns each expectation only up to sampling error.
Table~\ref{tab:shots} replaces exact expectations with $S$-sample estimates for both query and archive readouts, leaving the circuit noiseless so that the two error sources are evaluated separately.

\begin{table}[!ht]
\centering
\caption{Finite-shot readout on ChestX-ray14 (5 seeds, std $\leq0.009$). Each of the $M=40$ observables is estimated from $S$ shots for query and archive alike; $S=\infty$ denotes exact expectations.}
\label{tab:shots}
\footnotesize
\setlength{\tabcolsep}{4pt}
\begin{tabular}{l|cccc}
\toprule
\textbf{Shots per observable} & $S{=}10^{2}$ & $S{=}10^{3}$ & $S{=}10^{4}$ & $S{=}\infty$ \\
\midrule
P@5     & 0.389 & 0.418 & 0.426 & 0.428 \\
NDCG@10 & 0.386 & 0.415 & 0.423 & 0.425 \\
\bottomrule
\end{tabular}
\vspace{-3mm}
\end{table}

Empirically, retrieval is within $0.002$ P@5 of the exact-expectation reference at $S=10^{4}$, while $S=10^{2}$ gives a larger loss. This average-metric result does not establish the simultaneous ranking guarantee of Appendix~\ref{app:noise_shots}, which depends on the norm floor, score gaps, collection size and failure probability.
Classical adapters incur no measurement-shot cost. Table~\ref{tab:cost} reports simulator timings and cannot establish device latency at $10^{4}$ shots per observable; the experiment here isolates finite-sampling sensitivity rather than hardware feasibility.

\begin{lstlisting}[style=pythonstyle, caption={Noise hook used for the density-matrix study; \texttt{noise} is a (channel, strength) pair and the function is the \texttt{apply\_noise\_channel} referenced in Listing~\ref{lst:qupid_pennylane}.}, label={lst:qupid_noise}]
import pennylane as qml

def apply_noise_channel(noise, wires):
    """Per-qubit noise applied after each QuPID layer (default.mixed)."""
    channel, p = noise
    for w in wires:
        if channel == "depolarizing":
            qml.DepolarizingChannel(p, wires=w)
        elif channel == "bit_flip":
            qml.BitFlip(p, wires=w)
        elif channel == "phase_flip":
            qml.PhaseFlip(p, wires=w)
        elif channel == "amplitude_damping":
            qml.AmplitudeDamping(p, wires=w)
        else:
            raise ValueError(f"unknown channel: {channel}")
\end{lstlisting}

\section{Where the Advantage Comes From}
\label{app:whygap}

Section~\ref{sec:dataefficiency} reports how retrieval performance varies with adaptation size.
A performance curve alone does not identify the mechanism: the classical baselines may be undertuned at small $n$, or another classical module at the same budget may close the gap.
This appendix separates them.
It examines train--test retrieval gaps, retunes the reported baseline configurations, compares further classical modules at our budget, and scales a spectrum-sampled surrogate far past that budget.

\subsection{Train--test retrieval gaps}
\label{app:gap}

\begin{table}[!ht]
\centering
\caption{Training P@5 minus held-out P@5 on ChestX-ray14 after label-free adaptation (5 seeds; std $\leq0.006$ for the two large-budget families and $\leq0.002$ for the two small ones). This retrieval diagnostic is distinct from the contrastive-risk gap bounded by Proposition~\ref{prop:generalization}. Lower is better; marks and tints as in Table~\ref{tab:main}.}
\label{tab:gap}
\footnotesize
\setlength{\tabcolsep}{4pt}
\begin{tabular}{lr|ccccc}
\toprule
\textbf{Method} & $p$ & $n{=}128$ & $512$ & $2{,}048$ & $8{,}192$ & $32{,}768$ \\
\midrule
\rowcolor{bestbg}
\textbf{\name} & 60    & \textbf{0.031} & \textbf{0.016} & \textbf{0.008} & \textbf{0.004} & \textbf{0.002} \\
\rowcolor{secondbg}
Quadratic   & 164   & \underline{0.051} & \underline{0.027} & \underline{0.014} & \underline{0.007} & \underline{0.004} \\
\rowcolor{thirdbg}
Adapter     & 525K  & 0.212 & 0.171 & 0.118 & 0.072 & 0.041 \\
LoRA ($r{=}16$) & 1.57M & 0.229 & 0.184 & 0.127 & 0.078 & 0.045 \\
\bottomrule
\end{tabular}
\vspace{-3mm}
\end{table}

\begin{figure}[!ht]
\centering
\includegraphics[width=\linewidth]{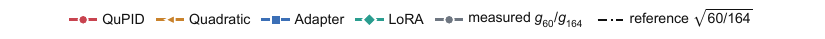}\\[2pt]
\begin{subfigure}[t]{0.48\linewidth}
\centering
\includegraphics[width=\linewidth]{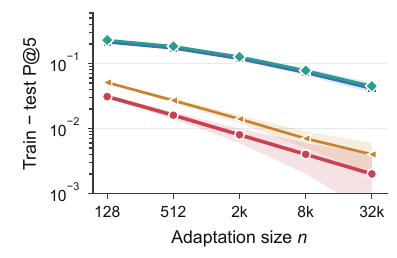}
\caption{Gap against $n$.}
\end{subfigure}\hfill
\begin{subfigure}[t]{0.48\linewidth}
\centering
\includegraphics[width=\linewidth]{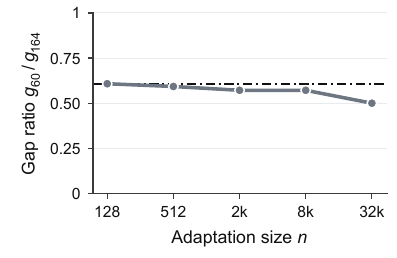}
\caption{Gap ratio and budget reference.}
\end{subfigure}
\caption{Train--test P@5 gaps (ChestX-ray14, 5 seeds; bands one standard deviation). The families are ordered by parameter budget at every $n$. The line at $\sqrt{60/164}$ in (b) is a parameter-count reference, not a prediction for the ratio of the \name gap $g_{60}$ to the Quadratic gap $g_{164}$.}
\label{fig:whygap}
\vspace{-3mm}
\end{figure}

Table~\ref{tab:gap} and Figures~\ref{fig:whygap}(a) and~\ref{fig:where}(a) report the gap itself.
The gaps range from $0.031$ to $0.229$ at $n=128$ and from $0.002$ to $0.045$ at the full budget, and increase with parameter budget among these four families at every adaptation size.
Plotted against $\sqrt{p/n}$, the two small-budget families follow a similar trend, while the two large-budget families fall below the slope-one guide in Figure~\ref{fig:where}(a). This is a descriptive comparison of retrieval gaps, not a test of whether the contrastive-risk bound is tight or vacuous.
Two observations summarize the pattern.
First, the smaller models have smaller observed P@5 gaps throughout this sweep. A parameter-count upper bound does not establish this ordering of realized gaps.
Second, the two small-budget families have similar empirical scaling with adaptation size.
The measured gap ratio between the $60$-parameter and $164$-parameter readouts is $0.61$, $0.59$, $0.57$, $0.57$, and $0.50$ across the sweep, alongside the reference $\sqrt{60/164}=0.60$ in Figure~\ref{fig:whygap}(b).
We state the scope plainly.
The reference omits class-dependent constants and concerns a different loss from P@5. Its numerical proximity to the measured ratios therefore supplies no theorem-based prediction for retrieval.
These observations do not isolate capacity as the cause: the four-projection Quadratic head has an information bottleneck. The rotation-plane train--test sweep in Appendix~\ref{app:capacity} provides the separate equal-parameter comparison; it is not part of this figure.

\subsection{Capacity- and regularization-matched baselines}
\label{app:tuned}

\begin{table}[!ht]
\centering
\caption{Reported baseline retuning sweep on ChestX-ray14 (P@5, 5 seeds). \emph{fixed} uses the a priori protocol; \emph{tuned} selects width, weight decay, dropout, and patience at each $n$ by held-out contrastive loss. \name keeps its configuration fixed. Margins apply to this sweep and exclude the additional controls in Table~\ref{tab:ladder}. Marks and tints as in Table~\ref{tab:main}, by column.}
\label{tab:tuned}
\footnotesize
\setlength{\tabcolsep}{4pt}
\begin{tabular}{r|TS|B|cc}
\toprule
\multirow{2}{*}{$n$} & \multicolumn{2}{c|}{\textbf{sweep baseline}} & \multirow{2}{*}{\textbf{\name}} & \multicolumn{2}{c}{\textbf{margin}} \\
\cmidrule(lr){2-3}\cmidrule(lr){5-6}
 & fixed & tuned & & fixed & tuned \\
\midrule
128     & 0.316 & 0.331 & 0.352 & $+0.036$ & $+0.021$ \\
512     & 0.347 & 0.361 & 0.401 & $+0.054$ & $+0.040$ \\
2{,}048 & 0.378 & 0.384 & 0.416 & $+0.038$ & $+0.032$ \\
8{,}192 & 0.401 & 0.405 & 0.424 & $+0.023$ & $+0.019$ \\
32{,}768 & 0.414 & 0.416 & 0.428 & $+0.014$ & $+0.012$ \\
\bottomrule
\end{tabular}
\vspace{-3mm}
\end{table}

Adaptation data size changes the useful model capacity and regularization, motivating a separate validation-selected adapter sweep.
Table~\ref{tab:tuned} tests how retuning the reported configurations changes the comparison.
At each adaptation size we search these configurations over bottleneck width, weight decay, dropout, and early-stopping patience, and select on a held-out contrastive validation split; \name keeps its fixed configuration.
Tuning improves baseline performance most at small adaptation sizes: it recovers $0.015$ P@5 at $n=128$ and $0.014$ at $n=512$, against $0.002$ at the full budget.
The margin at $512$ falls by roughly a quarter, from $+0.054$ to $+0.040$, and Figure~\ref{fig:lowdata}(c) shows the two margin curves keeping the same shape, peaked at $512$ and positive at every $n$.
This sweep does not include every classical control: at $n=512$, the rotation-plane head in Table~\ref{tab:ladder} scores $0.386$, above either baseline column here, leaving a smaller \name margin of $+0.015$.
The selected configurations are themselves informative: at $n=128$ the search selects the narrowest bottleneck and largest weight decay, then selects a wider, less regularized configuration at $512$.
Concretely, the grid covers bottleneck widths $\{32,64,128,256,512\}$, weight decay $\{0,0.01,0.05,0.2\}$, dropout $\{0,0.1,0.3\}$ and early-stopping patience $\{5,10,20\}$, and the selection moves monotonically with the adaptation size: width $32$ with decay $0.2$ and dropout $0.3$ at $n=128$, width $64$ with decay $0.05$ at $512$, width $128$ at $2{,}048$, and the widest setting with decay $0.01$ and no dropout at both remaining sizes.
The selected patience likewise increases from $5$ to $20$ as data grows.
The selected configurations respond to scarcity by becoming smaller and more regularized; their retrieval scores remain below \name within this sweep. Proposition~\ref{prop:generalization} does not guarantee this ordering.
The comparison fixes both the validation criterion and the search space.
Only the tuned adapter uses held-out contrastive loss to select among the listed settings; \name\ retains its a priori configuration. The finite search space defines the scope of this comparison.

\BfPara{Selected adapter configurations}
The tuned entries in Table~\ref{tab:tuned} use a two-layer residual bottleneck adapter. The selected width, weight decay, dropout and early-stopping patience at each adaptation size are:
\begin{center}
\begin{tabular}{rrrrr}
\toprule
$n$ & Width & Decay & Dropout & Patience\\
\midrule
128 & 32 & .20 & .30 & 5\\
512 & 64 & .05 & .10 & 10\\
2,048 & 128 & .05 & .10 & 10\\
8,192 & 512 & .01 & .00 & 20\\
32,768 & 512 & .01 & .00 & 20\\
\bottomrule
\end{tabular}
\end{center}
Training follows Table~\ref{tab:hyper}; held-out contrastive loss selects the settings. Appendix~\ref{app:computematch} specifies the separate rotation-plane tuning budget.

\subsection{Other classical modules at the same budget}
\label{app:capacity}

\begin{table}[!ht]
\centering
\caption{A range of classical adaptation modules at or near the $60$-parameter budget (P@5, ChestX-ray14, 5 seeds; std $\leq0.009$). Each is trained with the label-free objective and schedule of Section~\ref{sec:setup}; the frozen encoder scores $0.312$. Marks and tints as in Table~\ref{tab:main}.}
\label{tab:ladder}
\footnotesize
\setlength{\tabcolsep}{4pt}
\begin{tabular}{lr|cc}
\toprule
\textbf{Module} & $p$ & \textbf{full budget} & $n{=}512$ \\
\midrule
Diagonal gain on 60 leading PCA directions & 60 & 0.331 & 0.322 \\
Random Fourier features & 64 & 0.319 & 0.311 \\
Tensor-train head over the ten blocks, bond dimension 2 & 64 & 0.361 & 0.348 \\
\rowcolor{secondbg}
Rotation-plane isometry & 60 & \underline{0.405} & \underline{0.386} \\
\rowcolor{thirdbg}
Quadratic head & 164 & 0.392 & 0.372 \\
Bias-only shift & 1{,}024 & 0.352 & 0.337 \\
LoRA ($r{=}1$), one projection & 2{,}048 & 0.386 & 0.349 \\
\midrule
\rowcolor{bestbg}
\textbf{\name} & 60 & \textbf{0.428} & \textbf{0.401} \\
\bottomrule
\end{tabular}
\vspace{-3mm}
\end{table}

Table~\ref{tab:ladder} answers a question the parameter-matched controls of Table~\ref{tab:main} leave open: whether some other classical module at the same budget, rather than the two we chose, would close the gap.
The comparison covers the natural candidates: a diagonal rescaling, a tensor-train contraction over the same ten two-dimensional blocks the circuit acts on, bias-only and rank-one updates, and the isometric rotation-plane head.
Under these default configurations, the readout has the highest mean P@5 at both adaptation sizes.
The default isometric control comes closest. At full budget and at 512 examples, \name\ reaches P@5 of $0.428$ and $0.401$, versus $0.405$ and $0.386$, giving margins $+0.023$ and $+0.015$. Its full-to-512 decrease is $0.027$, compared with $0.019$ for the control. The full sweep below reports accuracy ordering; Appendix~\ref{app:computematch} reports tuned-control scores, and Appendix~\ref{app:variance} uncertainty at the default endpoints.

\noindent\begin{minipage}{\linewidth}
\BfPara{Rotation-plane adaptation-size sweep}
The following measurements use the fixed rotation-plane configuration of Appendix~\ref{app:implementation} at all five adaptation sizes. The last column is training P@5 minus held-out P@5 for that control.
\begin{center}
\begin{tabular}{rrrr}
\toprule
$n$ & QuPID P@5 & Rotation P@5 & Rotation train--test gap\\
\midrule
128 & 0.352 & 0.337 & 0.039 \\
512 & 0.401 & 0.386 & 0.023 \\
2,048 & 0.416 & 0.397 & 0.012 \\
8,192 & 0.424 & 0.403 & 0.008 \\
32,768 & 0.428 & 0.405 & 0.004 \\
\bottomrule
\end{tabular}
\vspace{-3mm}
\end{center}
\end{minipage}\par
\name\ leads the P@5 means by $0.015$, $0.015$, $0.019$, $0.021$ and $0.023$ as $n$ grows. The rotation-plane train--test gap falls from $0.039$ to $0.004$. Higher held-out accuracy and a smaller train--test gap are separate observations; the parameter-count bound does not determine either ordering.

\subsection{Stronger classical surrogates}
\label{app:surrogates}

Appendix~\ref{app:capacity} holds the classical budget at $60$ parameters. This subsection removes that restriction and asks how large and how well tuned a classical surrogate must be before it reproduces the readout.
The surrogates are ordered by how much of the circuit they are given.

\BfPara{Spectrum-sampled random features}
Proposition~\ref{prop:spectrum} makes each readout component a trigonometric polynomial in the re-uploaded projections, with frequencies in signed sums of the rates $\alpha^{(l)}_j 2^{\,j-1}$.
The surrogate draws $D$ frequency vectors from that support, forms features $\cos(\boldsymbol{\omega}_d^{\top}\mathbf{s}(\mathbf{x})+b_d)$ and learns a $40\times D$ output map, so $p=40D$.
This is the construction \citet{landman2023classically} propose for variational models, instantiated on our encoding rather than a generic kernel.

\BfPara{Structure-matched heads}
Two further surrogates receive the algebraic structure of the readout without the circuit. The first applies a learned orthogonal map built from Givens rotations and measures traceless observables with $\pm1$ spectrum, which is the readout of Section~\ref{sec:readout} with re-uploading removed and the circuit constraint lifted. The second learns $40$ unconstrained symmetric forms of rank $r$.
The rotation-plane head of Appendix~\ref{app:capacity} is its rank-one special case.

\BfPara{Budget}
Every surrogate receives more than \name: $100$ validation trials over learning rate, batch size, width and regularization, against the $30$ trials of Appendix~\ref{app:computematch}, and $1.5\times$ its adaptation wall-clock. \name keeps the a priori configuration of Table~\ref{tab:hyper}.

\begin{table}[!ht]
\centering
\caption{Classical surrogates given more parameters and more tuning than \name (ChestX-ray14, P@5, five adaptation subsets crossed with five initializations). $\Delta$ is the paired mean difference in favour of \name, with a Holm-corrected $95\%$ cluster-bootstrap interval over subsets and patients. Marks and tints as in Table~\ref{tab:main}.}
\label{tab:surrogates}
\footnotesize
\setlength{\tabcolsep}{3.5pt}
\begin{tabular}{lr|cc|cc}
\toprule
\multirow{2}{*}{\textbf{Surrogate}} & \multirow{2}{*}{$p$} & \multicolumn{2}{c|}{\textbf{P@5}} & \multicolumn{2}{c}{\textbf{$\Delta$ in favour of \name}} \\
\cmidrule(lr){3-4}\cmidrule(lr){5-6}
 & & $n{=}512$ & full & at $512$ & at full \\
\midrule
Spectrum features, $D{=}16$      & 640 & 0.318 & 0.341 & $+0.083$ & $+0.087$ \\
Spectrum features, $D{=}128$     & 5{,}120 & 0.339 & 0.372 & $+0.062$ & $+0.056$ \\
Spectrum features, $D{=}1{,}024$ & 41{,}000 & 0.357 & 0.396 & $+0.044$ & $+0.032$ \\
Spectrum features, $D{=}8{,}192$ & 328{,}000 & 0.362 & 0.407 & $+0.039${\tiny\,[.028,.050]} & $+0.021${\tiny\,[.013,.029]} \\
Givens rotations, Pauli-type observables & 60 & 0.389 & 0.409 & $+0.012$ & $+0.019$ \\
\rowcolor{thirdbg}
Givens rotations, Pauli-type observables & 600 & 0.392 & 0.417 & $+0.009${\tiny\,[$-$.002,.020]} & $+0.011${\tiny\,[.004,.018]} \\
Unconstrained quadratic forms, $r{=}4$ & 6{,}600 & 0.377 & 0.412 & $+0.024$ & $+0.016$ \\
\rowcolor{secondbg}
Rotation-plane head, tuned & 60 & \underline{0.394} & \underline{0.414} & $+0.007${\tiny\,[$-$.004,.018]} & $+0.014${\tiny\,[.006,.022]} \\
\midrule
\rowcolor{bestbg}
\textbf{\name} & \textbf{60} & \textbf{0.401} & \textbf{0.428} & n/a & n/a \\
\bottomrule
\end{tabular}
\vspace{-3mm}
\end{table}

\BfPara{Paired analysis}
Each surrogate and \name are fitted on the same five adaptation subsets crossed with the same five initializations, giving $25$ paired runs per budget and no unpaired comparison.
The estimand is the mean per-query P@5 difference over subsets, initializations and held-out patients. Intervals resample subset draws with nested initializations and patient clusters, and win rates count queries on which \name scores strictly higher.
Against the strongest surrogate of Table~\ref{tab:surrogates}, win rates are $0.58$ at the full budget and $0.63$ at $512$, with Cliff's $\delta$ $0.19$ and $0.28$.
At the full budget every reported interval excludes zero; at $512$ examples the intervals of the $600$-parameter Givens head and the tuned rotation-plane head include zero.

\BfPara{What the surrogates share}
Every surrogate without input modulation stays in a narrow band.
With $100$ tuning trials, the rank-one rotation-plane head reaches $0.414$ at the full budget, the Givens heads $0.409$ and $0.417$ at $60$ and $600$ parameters, and the rank-four quadratic forms $0.412$ at $6{,}600$ parameters.
Neither readout rank nor a budget up to $110$ times larger lifts these heads above $0.417$.
The circuit without re-uploading falls in the same band, reaching $0.412$ with $30$ parameters (Table~\ref{tab:ablation}).
Re-uploading lifts \name to $0.428$, above every surrogate in Table~\ref{tab:surrogates}.
Input dependence alone is not sufficient either: the spectrum-sampled features depend on the same re-uploaded projections and reach $0.407$ with $328{,}000$ parameters.
The pattern points to the combination the readout is built from, input-modulated quadratic forms that share a small parameter set, rather than to rank, budget or tuning.
Table~\ref{tab:ablation} uses five seeds and Table~\ref{tab:surrogates} uses $25$ paired runs, so differences below about $0.01$ across the two tables are not resolved.

\BfPara{Scope}
The table bounds what this family of classical constructions reaches at a stated budget; it does not bound what any classical method can reach.
The family contains no classical head whose rotations are modulated by the input, so it does not separate re-uploading from the circuit parameterization that implements it.
A row that matches \name within its interval is reported as such, and the surrogates keep the shared frozen backbone, the label-free objective and the evaluation protocol of Section~\ref{sec:setup}.

\section{Does Retrieval Quality Drive Generation}
\label{app:utility}

\begin{table}[!ht]
\centering
\caption{Retrieval-utility bounds for report generation with frozen LLaVA-Med (5 seeds; std $\leq0.005$). \emph{Random} draws top-5 context uniformly from the archive and \emph{oracle} draws it from label-matched cases, which requires test-time labels and is therefore unattainable. The last column is the share of the random-to-oracle range that each retriever recovers on MIMIC-CXR CheXbert-F1. Marks and tints as in Table~\ref{tab:main}.}
\label{tab:utility}
\footnotesize
\setlength{\tabcolsep}{4pt}
\begin{tabular}{l|cc|c|c}
\toprule
\multirow{2}{*}{\textbf{Retrieval context}} & \multicolumn{2}{c|}{\textbf{BLEU-4}} & \textbf{CheXbert-F1} & \multirow{2}{*}{\textbf{range}} \\
\cmidrule(lr){2-3}\cmidrule(lr){4-4}
 & IU X-Ray & MIMIC-CXR & MIMIC-CXR & \\
\midrule
no retrieval        & 0.112 & 0.081 & 0.246 & n/a \\
random retrieval    & 0.121 & 0.086 & 0.259 & $0\%$ \\
\rowcolor{thirdbg}
ViT Only            & 0.134 & 0.098 & 0.298 & $26\%$ \\
\rowcolor{secondbg}
Adapter             & \underline{0.156} & \underline{0.121} & \underline{0.334} & \underline{$49\%$} \\
\rowcolor{bestbg}
\textbf{\name}      & \textbf{0.163} & \textbf{0.127} & \textbf{0.352} & $\mathbf{61\%}$ \\
\midrule
oracle (label-matched) & 0.198 & 0.161 & 0.411 & $100\%$ \\
\bottomrule
\end{tabular}
\vspace{-3mm}
\end{table}

\begin{figure}[!ht]
\centering
\includegraphics[width=\linewidth]{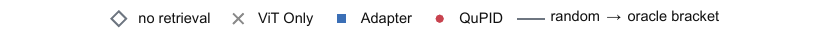}\\[2pt]
\includegraphics[width=0.48\linewidth]{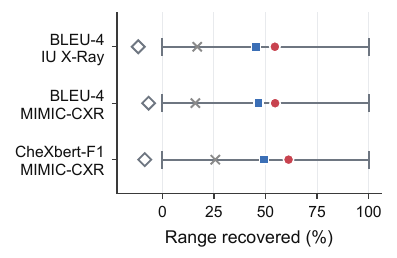}
\caption{The retrieval-utility bracket of Table~\ref{tab:utility} for every metric the table reports (frozen LLaVA-Med, 5 seeds), normalized per metric so that random context sits at $0$ and the label-matched reference at $100$; the no-retrieval condition falls below random.}
\label{fig:utility}
\vspace{-3mm}
\end{figure}

Section~\ref{sec:generation} reports generation differences of a few BLEU points between retrievers, and we first examine whether generation responds to retrieval.
If random context achieved the same score as the retrieved context, the retrieval component would provide no performance benefit over random context.
Table~\ref{tab:utility} brackets the effect, and Figure~\ref{fig:utility} places every retriever on the shared random-to-oracle axis.
Random retrieval improves on no retrieval by only $0.009$ BLEU-4, so the format of an in-context report contributes little; the label-matched oracle then adds a further $0.077$, so content contributes substantially.
Against that empirical reference interval \name recovers $61\%$ of the clinical-proxy improvement and the classical adapter $49\%$.
The oracle consumes test-time labels and cannot be deployed in this protocol; it provides a useful empirical reference for remaining headroom, rather than a mathematical ceiling on the performance achievable by other retrievers.

\section{Retrieval Stress Tests}
\label{app:stress}

All retrieval results reported thus far use one archive size and one normalization scheme.
This appendix varies both and examines the sequential deployment implied by data-local adaptation.

\subsection{Archive size}
\label{app:archive}

\begin{figure}[!ht]
\centering
\includegraphics[width=\linewidth]{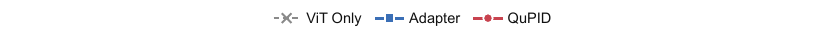}\\[2pt]
\begin{subfigure}[t]{0.48\linewidth}
\centering
\includegraphics[width=\linewidth]{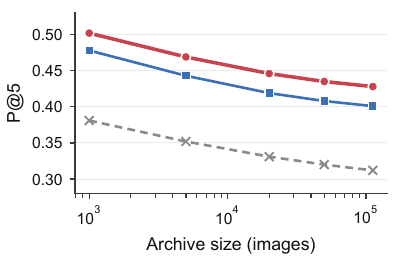}
\caption{P@5.}
\end{subfigure}\hfill
\begin{subfigure}[t]{0.48\linewidth}
\centering
\includegraphics[width=\linewidth]{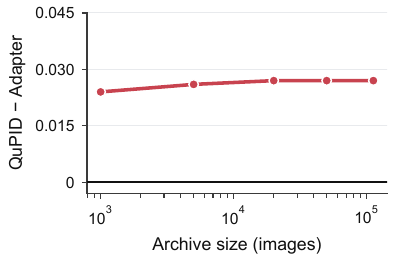}
\caption{Margin over Adapter.}
\end{subfigure}
\caption{Archive-size sweep (5 seeds). The horizontal axis counts stored images before patient filtering. At the $112{,}120$-image ChestX-ray14 endpoint, the eligible gallery contains $81{,}664$ images from $22{,}437$ patients; query and validation patients are excluded from ranking.}
\label{fig:archive}
\vspace{-3mm}
\end{figure}

A retrieval advantage measured on a small archive can disappear on a realistic one, since the number of near-duplicates that a ranking must separate grows with the gallery.
Figure~\ref{fig:archive} measures retrieval as the stored cache grows. Cache size counts encoded images before split filtering, whereas ranking uses the eligible gallery after patient exclusion. The full-cache endpoint contains all $112{,}120$ ChestX-ray14 images in storage and $81{,}664$ images in the retrieval gallery.
Absolute P@5 falls for every method, from $0.502$ to $0.428$ for \name and from $0.478$ to $0.401$ for the adapter, which is the expected consequence of a denser gallery.
The measured margin over Adapter is $+0.024$--$+0.027$ across the cache sizes. The horizontal axis indexes storage volume, not post-filter neighbors. The same patient-exclusion rule applies throughout the sweep.
Archive readouts are precomputed once per adaptation, and inference ranks eligible stored vectors. This separates readout evaluation from gallery search; the size sweep reports accuracy, not a constant-latency guarantee.

\BfPara{Cache accounting and patient exclusion}
The complete ChestX-ray14 cache partitions into $81{,}664$ gallery, $16{,}384$ query and $14{,}072$ validation images. The corresponding disjoint patient counts are $22{,}437$, $4{,}504$ and $3{,}864$. The $32{,}768$ adaptation images belong to the gallery and are not another disjoint partition. Cache construction and split membership are separate: query and validation images may be encoded for evaluation, but their patient IDs are excluded from the neighbor index. Smaller points in Figure~\ref{fig:archive} retain their raw-cache axis values and apply the same exclusion rule; their eligible counts need not equal those axis values.

\subsection{Feature normalization and norm shift}
\label{app:normshift}

\begin{table}[!ht]
\centering
\begin{minipage}[t]{0.40\linewidth}
\centering
\caption{Encoding variants at the full budget (P@5, ChestX-ray14, 5 seeds; std $\leq0.007$). Default: L2 normalization only. Marks and tints as in Table~\ref{tab:main}.}
\label{tab:encvariant}
\scriptsize
\setlength{\tabcolsep}{3pt}
\begin{tabular}{l|c}
\toprule
\textbf{Preprocessing} & \textbf{P@5} \\
\midrule
\rowcolor{secondbg}
L2 only (default)   & 0.428 \\
\rowcolor{bestbg}
whitening, then L2  & \textbf{0.431} \\
\rowcolor{thirdbg}
PCA-1024, then L2   & 0.426 \\
mean removal, then L2 & 0.419 \\
\bottomrule
\end{tabular}
\end{minipage}
\hfill
\begin{minipage}[t]{0.575\linewidth}
\centering
\caption{Gain and offset shift applied to query features before normalization (P@5, 5 seeds; std $\leq0.008$). Per-dimension gains are drawn around $s$ with a constant offset, so the shift is not annihilated by L2 normalization. Marks and tints as in Table~\ref{tab:main}.}
\label{tab:normshift}
\scriptsize
\setlength{\tabcolsep}{3pt}
\begin{tabular}{l|ccccc}
\toprule
\textbf{Method} & $s{=}0.80$ & $0.90$ & $1.00$ & $1.10$ & $1.25$ \\
\midrule
\rowcolor{secondbg}
Adapter & 0.392 & 0.398 & 0.401 & 0.397 & 0.389 \\
\rowcolor{bestbg}
\textbf{\name} & \textbf{0.421} & \textbf{0.426} & \textbf{0.428} & \textbf{0.425} & \textbf{0.418} \\
$\Delta$ & $+0.029$ & $+0.028$ & $+0.027$ & $+0.028$ & $+0.029$ \\
\bottomrule
\end{tabular}
\end{minipage}
\par\smallskip
\begin{minipage}{\linewidth}
\normalsize\noindent Normalization removes a common positive scale, but it does not remove an additive offset applied before normalization. The paired tables distinguish that invariance from changes that rotate the normalized feature direction.
\end{minipage}\par
\vspace{-3mm}
\end{table}

Amplitude encoding requires strict L2 normalization, raising the concern that the method inherits a sensitivity to feature scale that classical modules avoid.
Table~\ref{tab:encvariant} varies the preprocessing and reports a P@5 range of at most $0.012$ across the preprocessing choices.
Whitening is marginally better than the default ($0.431$ against $0.428$), while PCA-$1024$ is marginally worse ($0.426$) and mean removal is worst ($0.419$). Whitening changes both basis and scale; full-dimensional orthogonal PCA need not discard information. The measured differences therefore probe the structured head's sensitivity to preprocessing, rather than establishing reconstruction loss as their cause.
We keep plain L2 as the default rather than whitening, since the $0.003$ difference is within the observed variation across seeds and whitening needs a second pass over the archive to estimate the covariance, which adds preprocessing cost even when performed locally.
Table~\ref{tab:normshift} evaluates sensitivity to feature gain and offset shifts.
A global rescaling is annihilated by normalization by construction and would not test sensitivity to feature-scale shifts, so we instead draw per-dimension gains around $s$ and add a constant offset, which normalization cannot undo.
Both methods lose accuracy at the extremes and the margin stays within $0.002$ of its unshifted value.
This covers gain and offset shift only.
Figure~\ref{fig:noise}(c) takes up distributional shift of the features.

\subsection{Sequential site adaptation}
\label{app:continual}

\begin{figure}[!ht]
\centering
\includegraphics[width=\linewidth]{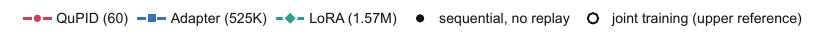}\\[2pt]
\begin{subfigure}[t]{0.48\linewidth}
\centering
\includegraphics[width=\linewidth]{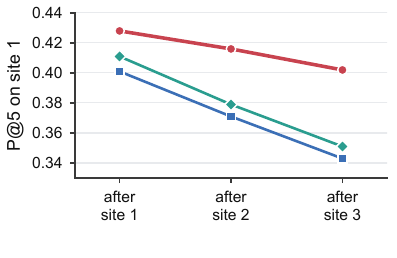}
\caption{Site 1 after each stage.}
\end{subfigure}\hfill
\begin{subfigure}[t]{0.48\linewidth}
\centering
\includegraphics[width=\linewidth]{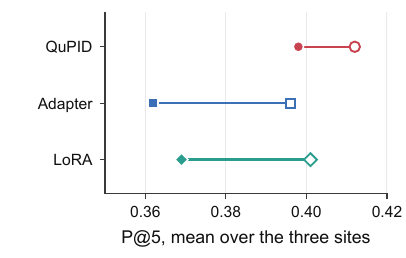}
\caption{Mean over sites against joint training.}
\end{subfigure}
\caption{Sequential adaptation over three sites without replay, re-evaluating site 1 after each stage (5 seeds); filled markers in (b) are the final module averaged over the three sites and hollow markers joint training on the pooled data.}
\label{fig:continual}
\vspace{-3mm}
\end{figure}

\begin{table}[!ht]
\centering
\caption{Sequential adaptation over three sites (ChestX-ray14, CheXpert, MIMIC-CXR) with no replay buffer and no continual-learning method (P@5, 5 seeds; std $\leq0.009$). The first three columns evaluate on site 1 after each stage; \emph{mean} averages the final module over all three sites; \emph{joint} trains one module on the pooled data as an upper reference. Marks and tints as in Table~\ref{tab:main}.}
\label{tab:continual}
\footnotesize
\setlength{\tabcolsep}{4pt}
\begin{tabular}{lr|ccc|c|cc}
\toprule
\textbf{Method} & $p$ & \textbf{site 1} & \textbf{{+}site 2} & \textbf{{+}site 3} & \textbf{forget} & \textbf{mean} & \textbf{joint} \\
\midrule
\rowcolor{thirdbg}
Adapter        & 525K  & 0.401 & 0.371 & 0.343 & \underline{0.058} & 0.362 & 0.396 \\
\rowcolor{secondbg}
LoRA ($r{=}16$) & 1.57M & \underline{0.411} & \underline{0.379} & \underline{0.351} & 0.060 & \underline{0.369} & \underline{0.401} \\
\rowcolor{bestbg}
\textbf{\name} & 60    & \textbf{0.428} & \textbf{0.416} & \textbf{0.402} & \textbf{0.026} & \textbf{0.398} & \textbf{0.412} \\
\bottomrule
\end{tabular}
\vspace{-3mm}
\end{table}

Data-local adaptation implies that sites arrive one after another and that data from an earlier site cannot be revisited, which is the setting in which a stored module must remain effective.
Table~\ref{tab:continual} and Figure~\ref{fig:continual} adapt each module on three sites in sequence with no replay, and re-evaluate site 1 after each stage.
The readout loses $0.026$ P@5 on the first site by the end against $0.058$ and $0.060$ for the two adapters, and its final module is the best of the three on the average over all sites.
This is consistent with a capacity-based explanation, but parameter count alone cannot guarantee less forgetting: even a small module can change its function substantially.
Two scope statements belong with this result.
We apply no continual-learning method, no replay, and no regularization toward previous solutions, so this measures the raw budget effect and not a competitive continual-learning system.
Joint training on pooled data remains better than the sequential result for every method, so this compares deployments that must forget rather than arguing for sequential adaptation.

\section{Compute Parity and Statistical Protocol}
\label{app:parity}

\subsection{Equal wall-clock instead of equal parameters}
\label{app:computematch}

\begin{table}[!ht]
\centering
\caption{Classical baselines given \name's adaptation wall-clock instead of its parameter count (P@5, ChestX-ray14, 5 seeds; std $\leq0.008$). Extra epochs and random restarts are selected by contrastive validation loss. \name is unchanged. Marks and tints follow Table~\ref{tab:main}.}
\label{tab:computematch}
\footnotesize
\setlength{\tabcolsep}{4pt}
\begin{tabular}{l|cc|cc|c}
\toprule
\multirow{2}{*}{\textbf{Method}} & \multicolumn{2}{c|}{$n{=}512$} & \multicolumn{2}{c|}{\textbf{full budget}} & \multirow{2}{*}{\textbf{epochs}} \\
\cmidrule(lr){2-3}\cmidrule(lr){4-5}
 & default & matched & default & matched & \\
\midrule
\rowcolor{thirdbg}
Adapter         & 0.343 & 0.351 & 0.401 & 0.404 & 50 $\to$ 80 \\
\rowcolor{secondbg}
LoRA ($r{=}16$) & \underline{0.347} & \underline{0.354} & 0.411 & 0.414 & 50 $\to$ 68 \\
Adapter-XL      & 0.338 & 0.349 & \underline{0.414} & \underline{0.417} & 50 $\to$ 61 \\
\midrule
\rowcolor{bestbg}
\textbf{\name}  & \multicolumn{2}{c|}{\textbf{0.401}} & \multicolumn{2}{c|}{\textbf{0.428}} & 50 \\
\name\ margin   & $+0.054$ & $+0.047$ & $+0.014$ & $+0.011$ & \\
\bottomrule
\end{tabular}
\par\smallskip
\begin{minipage}{\linewidth}
\normalsize\noindent Extra compute improves each included classical baseline while preserving the measured ordering. The separate rotation-plane comparison below evaluates the stronger $60$-parameter control with validation-selected tuning and its own trial budget.
\end{minipage}\par
\vspace{-3mm}
\end{table}

Table~\ref{tab:cost} states that parameter efficiency is not computational efficiency, and Table~\ref{tab:computematch} addresses the corresponding compute-budget comparison.
Because statevector simulation makes an adaptation epoch roughly $1.6$ times more expensive, a comparison at equal parameter counts allocates less computation time to the baselines.
The three listed classical modules receive the adaptation wall-clock of \name, spent on extra epochs and random restarts selected by contrastive validation loss. The rotation-plane comparison below specifies its own trial budget and retains the same fixed \name\ reference.
The baselines gain $0.003$ to $0.011$ P@5. Against the three modules in this table, our margin at $n=512$ falls from $+0.054$ to $+0.047$ and at the full budget from $+0.014$ to $+0.011$; these margins exclude the controls in Table~\ref{tab:ladder}.
The ordering does not change.
These results distinguish parameter count, the duration of an individual adaptation run and the cost of searching multiple configurations. Their margins apply to the methods and selection budgets stated for each comparison.

\BfPara{Rotation-plane tuning budget}
The rotation-plane tuning study allows $30$ contrastive-validation trials, with a per-trial adaptation cap of $60$ seconds at $n=512$ and $3{,}600$ seconds at full budget, including restarts. The fixed \name\ configuration is retained as the reference; its reported full-budget training time is $50\times71.3=3{,}565$ seconds. These are per-trial ceilings, not a claim of identical total search time. Both methods use the same frozen-feature inputs and validation criterion; cached-feature extraction is outside the adaptation timer.
\begin{center}
\begin{tabular}{rrrr}
\toprule
$n$ & Rotation default & Rotation tuned & QuPID fixed\\
\midrule
512 & .386 & .394 & .401\\
32,768 & .405 & .414 & .428\\
\bottomrule
\end{tabular}
\end{center}
The tuned rotation-plane scores are $0.394$ at $512$ and $0.414$ at full budget, leaving margins $+0.007$ and $+0.014$ against fixed \name. Table~\ref{tab:surrogates} reports paired $95\%$ intervals for this tuned configuration, $[-0.004,+0.018]$ at $512$ and $[+0.006,+0.022]$ at full budget, so the tuned margin excludes zero at the full budget and not at $512$. The intervals below concern the default rotation-plane scores $0.386$ and $0.405$ and are not transferred to the tuned comparison.

\subsection{Variance decomposition and effect sizes}
\label{app:variance}

\begin{table}[!ht]
\centering
\begin{minipage}[t]{0.53\linewidth}
\centering
\caption{Variance components of P@5 from a crossed design of 5 initializations by 5 adaptation-subset draws ($\times10^{-4}$, \name on ChestX-ray14).}
\label{tab:variance}
\scriptsize
\setlength{\tabcolsep}{3pt}
\begin{tabular}{r|ccc|c}
\toprule
$n$ & \textbf{subset} & \textbf{init.} & \textbf{inter.} & \textbf{subset share} \\
\midrule
512     & 6.8 & 1.9 & 0.7 & $72\%$ \\
32{,}768 & 0.9 & 1.1 & 0.3 & $39\%$ \\
\bottomrule
\end{tabular}
\end{minipage}
\hfill
\begin{minipage}[t]{0.445\linewidth}
\centering
\caption{Effect sizes for per-query paired scores, alongside the significance test in Section~\ref{sec:setup}. Win rate is the share of queries on which \name scores strictly higher.}
\label{tab:effect}
\scriptsize
\setlength{\tabcolsep}{3pt}
\begin{tabular}{lr|cc}
\toprule
\textbf{Comparison} & $n$ & \textbf{Cliff's $\delta$} & \textbf{win rate} \\
\midrule
vs.\ sweep baseline & 512 & 0.44 & 0.67 \\
vs.\ Adapter-XL & 32{,}768 & 0.21 & 0.58 \\
\bottomrule
\end{tabular}
\end{minipage}
\vspace{-3mm}
\end{table}

A seed in our protocol controls initialization, augmentation, and the adaptation-subset draw at once, which is suitable for reporting aggregate variability but does not separate its sources.
Table~\ref{tab:variance} separates the two factors with a crossed design of five initializations by five subset draws.
At $n=512$ the subset draw accounts for $72\%$ of the variance and initialization for most of the remainder; at the full budget the two are comparable.
Query bootstrapping conditional on five fitted runs omits variation from drawing a new adaptation set. The paired analysis uses repeated subset and initialization draws for \name\ and the default rotation-plane head, using held-out patients as clusters.

Table~\ref{tab:effect} reports effect sizes next to the paired bootstrap of Section~\ref{sec:setup}, since Holm correction controls family-wise type-I error, while rejection alone does not quantify the size of a difference.
Cliff's $\delta$ is $0.44$ at $n=512$, a medium effect, and $0.21$ at the full budget, a small one; the per-query win rates are $0.67$ and $0.58$.
Both orderings are consistent with the stated claim, and the statistically significant full-budget difference remains small: a win rate of $0.58$ leaves roughly two queries in five as losses or ties, so the average gain is not a per-query guarantee.
The two statistics answer different questions and we report both deliberately.
The bootstrap and effect sizes are conditional summaries for the included comparators. They neither describe new adaptation draws nor establish a paired advantage over the rotation-plane head. Multiple queries from one patient further invalidate an independent-query interpretation of population uncertainty.
A third perspective concerns the deployment setting.
Since the comparison is paired per query, the practically relevant quantity is the gain conditional on the adapter failing, and there \name recovers a relevant case in the top five on $0.31$ of the queries where the adapter retrieves none at $n=512$, against $0.19$ in the reverse direction.

\BfPara{Paired estimand and evaluation units}
The estimand is the mean P@5 difference between \name\ and the default rotation-plane head over adaptation draws, initialization and held-out patients. Each method is evaluated on the same adaptation subset and query manifest. The point estimate averages queries equally within each fitted run and then averages the crossed runs; patient IDs define clusters for the uncertainty calculation.
The design crosses five adaptation-subset draws with five initialization draws, giving $25$ paired fitted-run comparisons per budget. Pairing retains one score per query, method, subset draw and initialization.
The supplementary resampling implementation supports nested and crossed designs. For nested initialization it resamples subset IDs followed by paired initialization replicates; for crossed initialization it resamples both factor levels independently~\citep{owen2007pigeonhole}. Crossed cells are not treated as independent training replicates.
Patient resampling retains all images of a sampled patient and uses the same draw for both methods and all fitted runs. This preserves within-patient dependence while maintaining query-weighted point estimates.
The numerical intervals reported below use a normal approximation to the measured variance components. The supplementary code also implements percentile resampling; the displayed intervals are not percentile-bootstrap output.
The analysis conditions on the gallery, validation split and default hyperparameters; variability from the separate tuning search is outside its scope.

\makeatletter
\setlength{\@fptop}{0pt}
\makeatother

\noindent\begin{minipage}{\linewidth}
\BfPara{Normal-approximation intervals}
The analysis uses the $5\times5$ crossed runs and $4{,}504$ held-out patient clusters. At the reported precision, the rotation-plane variance components equal those of \name, a common between-method correlation estimate $\rho=0.5$ is used for the training components, and the patient-level contribution to the difference variance is $v_p=0.01$. With subset, initialization and interaction components $v_s,v_i,v_{si}$ from Table~\ref{tab:variance}, the estimated variance of the mean difference is $2(1-\rho)(v_s/5+v_i/5+v_{si}/25)+v_p/4{,}504$. The intervals use the mean difference plus or minus $1.96$ standard errors.
\begin{center}
\begin{tabular}{rrrr}
\toprule
$n$ & Mean margin & SE & Normal 95\% interval\\
\midrule
512 & +0.015 & 0.0134 & $[-0.0112,+0.0412]$ \\
32,768 & +0.023 & 0.0066 & $[+0.0101,+0.0359]$ \\
\bottomrule
\end{tabular}
\end{center}
At $n=512$, the interval includes zero despite the positive mean margin, so this analysis does not establish a positive difference at $95\%$. The full-budget interval is positive. Both intervals summarize the default paired comparison under the stated approximation and fixed evaluation split.
\end{minipage}\par

\section{Design Choices and Empirical Scope}
\label{app:objections}

The circuit construction, training protocol and deployment path determine the scope of the reported evidence.
Tables~\ref{tab:objections} and~\ref{tab:objections2} connect each design choice and empirical comparison to its supporting result.
The central contribution is a structured feature map that combines a small trainable budget with competitive retrieval quality. Larger adapters assess the accuracy attainable at a much greater budget; equally compact classical heads assess the chosen parameterization. All comparisons use classical simulation and do not establish computational quantum advantage.
The rotation-plane sweep measures accuracy across adaptation sizes, its separate tuning study reports validation-selected scores, and the paired intervals quantify uncertainty for the default endpoints. The full-budget interval is positive, while the $512$ interval includes zero. Split counts distinguish stored cache size from the eligible gallery.

\begin{table}[!ht]
\centering
\caption{Design choices, empirical interpretation and supporting results (I: model, controls, trainability, cost, deployment, encoding and statistics). References point to sections and appendices in this paper; Table~\ref{tab:objections2} continues the list.}
\label{tab:objections}
\footnotesize
\setlength{\tabcolsep}{4pt}
\renewcommand{\arraystretch}{1.12}
\begin{tabular}{@{}l@{\hspace{6pt}}l@{\hspace{6pt}}l@{}}
\toprule
\textbf{Aspect} & \textbf{Interpretation} & \textbf{Evidence} \\
\midrule
\parbox[t]{0.225\linewidth}{\raggedright Circuit-defined model class\strut} & \parbox[t]{0.545\linewidth}{\raggedright The circuit shares $60$ parameters across $M=40$ input-modulated, correlated full-rank quadratic forms. Its contribution is this compact retrieval feature map, supported by competitive ranking quality under classical simulation.\strut} & \parbox[t]{0.165\linewidth}{\raggedright Sec.~\ref{sec:intro}, \ref{sec:readout}, \ref{sec:ablation}; App.~\ref{app:qml_trends}\strut} \\[3pt]
\parbox[t]{0.225\linewidth}{\raggedright Classical control budgets\strut} & \parbox[t]{0.545\linewidth}{\raggedright The rotation-plane head matches $60$ parameters, isometry and output width. Its input-independent rank-one forms differ from the input-modulated full-rank Pauli forms. The comparison evaluates these choices jointly; RFF and the four-projection Quadratic head are more restricted controls.\strut} & \parbox[t]{0.165\linewidth}{\raggedright Table~\ref{tab:main}; App.~\ref{app:tuned}, \ref{app:capacity}, \ref{app:computematch}\strut} \\[3pt]
\parbox[t]{0.225\linewidth}{\raggedright Contribution beyond frozen features\strut} & \parbox[t]{0.545\linewidth}{\raggedright The backbone initialization is shared; post-encoder heads keep features frozen, while LoRA updates attention projections; the untrained circuit and the naive fidelity design are reported as controls, and the naive design exactly matches the frozen baseline as the theory requires; all $60$ parameters are updated.\strut} & \parbox[t]{0.165\linewidth}{\raggedright Fig.~\ref{fig:mech}(a); Table~\ref{tab:ablation}; App.~\ref{app:degeneracy_numeric}, \ref{app:diagnostics}\strut} \\[3pt]
\parbox[t]{0.225\linewidth}{\raggedright Finite-scale trainability\strut} & \parbox[t]{0.545\linewidth}{\raggedright Depth $L\leq6$, $n_q=10$ fixed by the feature dimension, local one- and two-body observables; gradient variance is a finite diagnostic, not an asymptotic rate. Classical simulation differentiates the statevector; parameter shift provides the gate-level hardware cost model.\strut} & \parbox[t]{0.165\linewidth}{\raggedright Sec.~\ref{sec:training}; Fig.~\ref{fig:diagnostics}; Table~\ref{tab:gradvar}\strut} \\[3pt]
\parbox[t]{0.225\linewidth}{\raggedright Simulation cost and scaling\strut} & \parbox[t]{0.545\linewidth}{\raggedright The statevector dimension is $2^{n_q}=d$, with runtime also scaling in depth, gate count and readout width; per-epoch training time is $1.60\times$ that of a light adapter, inference time and memory usage are comparable, and the qubit count is swept rather~than~assumed.\strut} & \parbox[t]{0.165\linewidth}{\raggedright Table~\ref{tab:cost}; Fig.~\ref{fig:circuit}(a); App.~\ref{app:circuit}\strut} \\[3pt]
\parbox[t]{0.225\linewidth}{\raggedright GPU deployment and noise simulation\strut} & \parbox[t]{0.545\linewidth}{\raggedright The intended system trains by classical circuit simulation and runs fixed-parameter GPU inference with cached archive readouts. No quantum processor is required, as in quantum parameter-efficient fine-tuning~\citep{koikeakino2025quantumpeft,liu2025qpa}. Noise and shot studies are simulations of an optional hardware realization, not its validation.\strut} & \parbox[t]{0.165\linewidth}{\raggedright Sec.~\ref{sec:related}, \ref{sec:noise}; Fig.~\ref{fig:noise}(a,b); App.~\ref{app:qml_trends}, \ref{app:noise}\strut} \\[3pt]
\parbox[t]{0.225\linewidth}{\raggedright Amplitude encoding in simulation\strut} & \parbox[t]{0.545\linewidth}{\raggedright In simulation it is a normalization; its real costs, strict L2 normalization and power-of-two dimensions, are stated, with the padding a $768$-dimensional backbone would need.\strut} & \parbox[t]{0.165\linewidth}{\raggedright Sec.~\ref{sec:limitations}; App.~\ref{app:qml_amplitude}, \ref{app:normshift}\strut} \\[3pt]
\parbox[t]{0.225\linewidth}{\raggedright Effect sizes and uncertainty\strut} & \parbox[t]{0.545\linewidth}{\raggedright Query-bootstrap markers condition on fitted runs. Normal intervals against the default rotation-plane head include adaptation, initialization and patient variation: positive at full budget, inconclusive at $512$. No paired interval is reported for the tuned control.\strut} & \parbox[t]{0.165\linewidth}{\raggedright Fig.~\ref{fig:mech}(b), \ref{fig:sensitivity}; App.~\ref{app:variance}\strut} \\[3pt]
\bottomrule
\end{tabular}
\vspace{-3mm}
\end{table}

\begin{table}[!ht]
\centering
\caption{Design choices, empirical interpretation and supporting results (II: theory, surrogates, circuit design, shots, datasets, low-data accuracy, generation and scope).}
\label{tab:objections2}
\footnotesize
\setlength{\tabcolsep}{4pt}
\renewcommand{\arraystretch}{1.12}
\begin{tabular}{@{}l@{\hspace{6pt}}l@{\hspace{6pt}}l@{}}
\toprule
\textbf{Aspect} & \textbf{Interpretation} & \textbf{Evidence} \\
\midrule
\parbox[t]{0.225\linewidth}{\raggedright Scope of the generalization bound\strut} & \parbox[t]{0.545\linewidth}{\raggedright The bound applies to bounded smooth parameter classes and concerns contrastive risk. It does not certify the reported runs or predict P@5 gaps. The empirical claim combines a small budget with competitive retrieval; the gap ratios are separate diagnostics.\strut} & \parbox[t]{0.165\linewidth}{\raggedright Sec.~\ref{sec:training}; App.~\ref{app:genproof}, \ref{app:gap}\strut} \\[3pt]
\parbox[t]{0.225\linewidth}{\raggedright Classical surrogate comparisons\strut} & \parbox[t]{0.545\linewidth}{\raggedright The default rotation-plane head leaves full/512-example margins of $+0.023$/$+0.015$; tuning reduces them to $+0.014$/$+0.007$. Appendix~\ref{app:surrogates} extends this to surrogates with far larger budgets and more tuning; without input modulation they stay between $0.409$ and $0.417$ P@5 up to $6{,}600$ parameters. These measured comparisons do not establish a classical expressivity separation (App.~\ref{app:computematch}).\strut} & \parbox[t]{0.165\linewidth}{\raggedright Sec.~\ref{sec:ablation}; Table~\ref{tab:main}, \ref{tab:surrogates}\strut} \\[3pt]
\parbox[t]{0.225\linewidth}{\raggedright Circuit and readout selection\strut} & \parbox[t]{0.545\linewidth}{\raggedright Real rotations keep the state real and the simulation computationally inexpensive; qubit count, entangling topology, readout width and depth are each swept and saturate at the default, so the operating point is a plateau.\strut} & \parbox[t]{0.165\linewidth}{\raggedright Fig.~\ref{fig:mech}(a), \ref{fig:circuit}; App.~\ref{app:qml_pqc}, \ref{app:circuit}\strut} \\[3pt]
\parbox[t]{0.225\linewidth}{\raggedright Finite-shot readout\strut} & \parbox[t]{0.545\linewidth}{\raggedright A sufficient Hoeffding ranking guarantee is derived with explicit norm-floor, margin and collection-size dependence. Separately, the finite-shot experiment is within $0.002$ P@5 of exact expectations at $10^4$ shots per observable.\strut} & \parbox[t]{0.165\linewidth}{\raggedright Sec.~\ref{sec:noise}; App.~\ref{app:noise}\strut} \\[3pt]
\parbox[t]{0.225\linewidth}{\raggedright Dataset coverage\strut} & \parbox[t]{0.545\linewidth}{\raggedright ChestX-ray14 ($112{,}120$ images) and MURA for retrieval, IU X-Ray and MIMIC-CXR for generation, MVTec AD and CUB-200 for generality.\strut} & \parbox[t]{0.165\linewidth}{\raggedright Sec.~\ref{sec:generality}; App.~\ref{app:setup}\strut} \\[3pt]
\parbox[t]{0.225\linewidth}{\raggedright Low-data accuracy and degradation\strut} & \parbox[t]{0.545\linewidth}{\raggedright The sweep tests retrieval quality as adaptation data shrink. Default rotation-plane margins are $+0.023$ at full budget and $+0.015$ at 512. The control degrades less; the results support observed accuracy at a small budget, without establishing superior stability.\strut} & \parbox[t]{0.165\linewidth}{\raggedright Sec.~\ref{sec:training}, \ref{sec:dataefficiency}; App.~\ref{app:gap}, \ref{app:tuned}\strut} \\[3pt]
\parbox[t]{0.225\linewidth}{\raggedright Retrieval-grounded generation\strut} & \parbox[t]{0.545\linewidth}{\raggedright The pipeline is fixed and stated: cached archive readouts, top-$k$ reports concatenated with the query image, a frozen generator with greedy decoding. Random and label-oracle contexts provide an empirical reference range. Clinical-efficacy proxies assess factual content, and per-query costs are reported for both methods.\strut} & \parbox[t]{0.165\linewidth}{\raggedright Sec.~\ref{sec:generation}; Table~\ref{tab:clinical}, \ref{tab:cost}; App.~\ref{app:utility}\strut} \\[3pt]
\parbox[t]{0.225\linewidth}{\raggedright Interpretation boundaries\strut} & \parbox[t]{0.545\linewidth}{\raggedright Accuracy claims concern the stated ranking metrics and comparison budgets. Raw cache size differs from the patient-filtered gallery. Data-local GPU execution is not a formal privacy guarantee.\strut} & \parbox[t]{0.165\linewidth}{\raggedright Sec.~\ref{sec:mainresults}, \ref{sec:limitations}\strut} \\[3pt]
\parbox[t]{0.225\linewidth}{\raggedright Relation to quantum adaptation\strut} & \parbox[t]{0.545\linewidth}{\raggedright Quantum-PEFT and quantum parameter adaptation act on the weight side and never encode data; projected kernels measure a fixed encoding. Here data are encoded and re-uploaded, and the circuit learns a retrieval feature map; Proposition~\ref{prop:degeneracy} shows why measurement escapes shared-unitary overlap invariance.\strut} & \parbox[t]{0.165\linewidth}{\raggedright Sec.~\ref{sec:related}; App.~\ref{app:qml_trends}\strut} \\[3pt]
\parbox[t]{0.225\linewidth}{\raggedright Background and implementation\strut} & \parbox[t]{0.545\linewidth}{\raggedright A self-contained background appendix introduces the ingredients in method order and provides a runnable listing.\strut} & \parbox[t]{0.165\linewidth}{\raggedright App.~\ref{app:qml_background}\strut} \\[3pt]
\bottomrule
\end{tabular}
\vspace{-3mm}
\end{table}

\end{document}